\newcommand{\Rmnum}[1]{\expandafter\@slowromancap\romannumeral #1@}
\newtheorem{theorem}{Theorem}
\newtheorem*{theorem*}{Theorem}
\newtheorem{lemma}{Lemma}
\newtheorem*{lemma*}{Lemma}
\newtheorem{claim}{Claim}
\newtheorem*{claim*}{Claim}
\newtheorem*{cor*}{Corollary}
\newtheorem{remark}{Remark}
\newtheorem{fact}{Fact}
\newtheorem{assumption}{Assumption}
\newcommand{\namedref}[2]{\hyperref[#2]{#1~\ref*{#2}}}
\newcommand{\GD}{\text{GD}}
\newcommand{\accu}{\text{accu}}
\newcommand{\Algorithmref}[1]{\namedref{Algorithm}{algo:#1}}
\newcommand{\Assumptionref}[1]{\namedref{Assumption}{assump:#1}}
\newcommand{\Sectionref}[1]{\namedref{Section}{sec:#1}}
\newcommand{\Subsectionref}[1]{\namedref{Section}{subsec:#1}}
\newcommand{\Appendixref}[1]{\namedref{Appendix}{app:#1}}
\newcommand{\Theoremref}[1]{\namedref{Theorem}{thm:#1}}
\newcommand{\Lemmaref}[1]{\namedref{Lemma}{lem:#1}}
\newcommand{\Claimref}[1]{\namedref{Claim}{claim:#1}}
\newcommand{\Factref}[1]{\namedref{Fact}{fact:#1}}
\newcommand{\Footnoteref}[1]{\namedref{Footnote}{foot:#1}}
\newcommand{\Pageref}[1]{\hyperref[#1]{page~\pageref*{#1}}}
\newcommand{\eps}{\ensuremath{\epsilon}\xspace}
\definecolor{darkred}{rgb}{0.5, 0, 0} 
\definecolor{darkblue}{rgb}{0,0,0.5} 
\renewcommand{\(}{\ensuremath{\left(}\xspace}
\renewcommand{\)}{\ensuremath{\right)}\xspace}
\newcommand{\vct}[1]{\boldsymbol{#1}}
\newcommand{\bmu}{\ensuremath{\boldsymbol{\mu}}\xspace}
\newcommand{\bzero}{\ensuremath{{\bf 0}}\xspace}
\newcommand{\ba}{\ensuremath{\boldsymbol{a}}\xspace}
\newcommand{\bfb}{\ensuremath{\boldsymbol{b}}\xspace}
\newcommand{\bg}{\ensuremath{\boldsymbol{g}}\xspace}
\newcommand{\btg}{\ensuremath{\widehat{\bg}}\xspace}
\newcommand{\btlg}{\ensuremath{\widetilde{\bg}}\xspace}
\newcommand{\bx}{\ensuremath{\boldsymbol{x}}\xspace}
\newcommand{\btlx}{\ensuremath{\widetilde{\bx}}\xspace}
\newcommand{\by}{\ensuremath{\boldsymbol{y}}\xspace}
\newcommand{\bw}{\ensuremath{\boldsymbol{w}}\xspace}
\newcommand{\bu}{\ensuremath{\boldsymbol{u}}\xspace}
\newcommand{\bv}{\ensuremath{\boldsymbol{v}}\xspace}
\newcommand{\bA}{\ensuremath{{\bf A}}\xspace}
\newcommand{\bB}{\ensuremath{{\bf B}}\xspace}
\newcommand{\btF}{\ensuremath{\widehat{F}}\xspace}
\newcommand{\bG}{\ensuremath{{\bf G}}\xspace}
\newcommand{\calA}{\ensuremath{\mathcal{A}}\xspace}
\newcommand{\calD}{\ensuremath{\mathcal{D}}\xspace}
\newcommand{\calE}{\ensuremath{\mathcal{E}}\xspace}
\newcommand{\calS}{\ensuremath{\mathcal{S}}\xspace}
\newcommand{\bW}{\ensuremath{{\bf W}}\xspace}
\newcommand{\bY}{\ensuremath{{\bf Y}}\xspace}
\newcommand{\bbE}{\ensuremath{\mathbb{E}}\xspace}
\newcommand{\calF}{\ensuremath{\mathcal{F}}\xspace}
\newcommand{\calH}{\ensuremath{\mathcal{H}}\xspace}
\newcommand{\calK}{\ensuremath{\mathcal{K}}\xspace}
\newcommand{\calO}{\ensuremath{\mathcal{O}}\xspace}
\newcommand{\I}{\ensuremath{\mathcal{I}}\xspace}
\newcommand{\R}{\ensuremath{\mathbb{R}}\xspace}
\renewcommand{\paragraph}[1]{\smallskip\noindent{\bf #1}~}
\title{Byzantine-Resilient High-Dimensional Federated Learning}
\author{Deepesh Data and Suhas Diggavi \\
University of California, Los Angeles\\
 Email: \{\texttt{deepesh.data@gmail.com, suhas@ee.ucla.edu}\} \\
}
\date{}
\begin{document}

\maketitle

\begin{abstract}
We study stochastic gradient descent (SGD) with local iterations in the presence of malicious/Byzantine clients, motivated by the federated learning.  The clients, instead of communicating with the central server in every iteration, maintain their local models, which they update by taking several SGD iterations based on their own datasets and then communicate the net update with the server, thereby achieving communication-efficiency.  Furthermore, only a subset of clients communicate with the server, and this subset may be different at different synchronization times.  The Byzantine clients may collaborate and send arbitrary vectors to the server to disrupt the learning process. To combat the adversary, we employ an efficient high-dimensional robust mean estimation algorithm from Steinhardt et al.~\cite[ITCS 2018]{Resilience_SCV18} at the server to filter-out corrupt vectors; and to analyze the outlier-filtering procedure, we develop a novel matrix concentration result that may be of independent interest.

We provide convergence analyses for strongly-convex and non-convex smooth objectives in the heterogeneous data setting, where different clients may have different local datasets, and we do not make any probabilistic assumptions on data generation. We believe that ours is the first Byzantine-resilient algorithm and analysis with local iterations. We derive our convergence results under minimal assumptions of bounded variance for SGD and bounded gradient dissimilarity (which captures heterogeneity among local datasets). We also extend our results to the case when clients compute full-batch gradients. 
\end{abstract}

{\allowdisplaybreaks

\section{Introduction}\label{sec:intro}

In the {\em federated learning} (FL) paradigm \cite{KonecnyThesis,jakub2016fed,McMahan-FL17,AgnosticFL19}, several clients 
(e.g., mobiles devices, organizations, etc.) collaboratively learn a machine learning model, where the training process is facilitated by the data 
held by the participating clients (without data centralization) and is coordinated by a central server (e.g., the service provider). 
Due to its many advantages over the traditional centralized learning \cite{DeanLargeScale12} 
(e.g., training a machine learning model without collecting the clients' data, which, in addition to reducing the communication load on the network, 
provides a basic level of privacy to clients' data), FL has emerged as an active area of research recently; see \cite{Kairouz-FL-survey19} for a detailed survey.
Stochastic gradient descent (SGD) has become a de facto standard in optimization for training machine learning models at such a large scale \cite{Bottou10,McMahan-FL17,Kairouz-FL-survey19}, where clients iteratively communicate the gradient updates with the central server,
which aggregates the gradients, updates the learning model, and sends the aggregated gradient back to the clients. 
The promise of FL comes with its own set of challenges \cite{Kairouz-FL-survey19}:
{\sf (i)} optimizing with {\em heterogeneous} data at different clients, who may have different local datasets, 
which may be ``non-i.i.d.'', i.e., can be thought of as being generated from different underlying distributions;
{\sf (ii)} slow and unreliable network connections between the server and the clients, so communication in every iteration may not be feasible;
{\sf (iii)} availability of only a subset of clients for training at a given time (maybe due to low connectivity, as clients may be located in different geographic locations); and
{\sf (iv)} robustness against the malicious/Byzantine clients who may send incorrect gradient updates to the central server to disrupt the training process.
In this paper, we propose and analyze a {\em single} SGD algorithm that addresses all these challenges {\em together}. 
First we setup the problem, put our work in context with the related work, and then summarize our contributions.

We consider an empirical risk minimization problem, where data is stored at $R$ clients, 
each having a different dataset (with no probabilistic assumption on data generation); client $r\in[R]$ has dataset $\calD_r$.
Let $F_r:\R^d\to\R$ denote the local loss function associated with the dataset $\calD_r$, which is defined as $F_r(\bx)\triangleq\bbE_{i\in_U[n_r]}[F_{r,i}(\bx)]$,
where $n_r=|\calD_r|$, $i$ is uniformly distributed over $[n_r]\triangleq\{1,2,\hdots,n_r\}$, and $F_{r,i}(\bx)$ is the loss associated with the $i$'th data point at client $r$ with respect to (w.r.t.) $\bx$. Our goal is to solve the following minimization problem:
\begin{align}\label{problem-expression}
\arg \min_{\bx\in \R^d}\Big(F(\bx) \triangleq \frac{1}{R}\sum_{r=1}^R\bbE_{i\in_U[n_r]}[F_{r,i}(\bx)]\Big).
\end{align} 
Let $\bx^*\in\arg\min_{\bx\in\R^d}F(\bx)$ denote a minimizer of the global loss function $F(\bx)$.
In absence of the above-mentioned FL challenges, we can minimize \eqref{problem-expression} using distributed {\em vanilla} SGD, where in any iteration,
server broadcasts the current model parameters to all the clients, each of them then computes a stochastic gradient from its local dataset and
sends it back to the server, who aggregates the received gradients and updates the global model parameters. 
However, this simple solution does not satisfy the FL challenges, as {\em every} client communicates with the server (i.e., no sampling of clients) 
in {\em every} SGD iteration (i.e., no local iterations), and furthermore, this solution
breaks down even with a single malicious client \cite{Krum_Byz17}.

\subsection{Related Work}\label{subsec:related-work}
Recent work has proposed variants of the above-described vanilla SGD that address {\em some} of the FL challenges.
The algorithms in \cite{MadhaviLocalSGD19,MadhaviLocalFL19,scaffold19,TighterTheoryLocalGD20,FedAvgNonIID20,TalwalkarFL20,alibaba_local19,Qsparse-local-sgd19} 
work under different heterogeneity assumptions but do not provide any robustness to malicious clients.
On the other hand, \cite{ChenSX_Byz17,Krum_Byz17,Bartlett-Byz18,Alistarh_Byz-SGD18,SuX_Byz19,Zeno_ByzSGD19,Bartlett-Byz_nonconvex19} provide robustness, but with no local iterations or sampling of clients; furthermore, they assume homogeneous (either same or i.i.d.) data across all clients.
A different line of work \cite{Draco_ByzGD18,Detox_ByzSGD19,DataSoDi_ByzGD19,DataDi_ByzCD19,DataSoDi_arxiv-Byz19,RSA_Byz-SGD19,YinRobustFL19,DataDi_Byz-SGD_Heterogeneous20,Jaggi-Heterogeneous20} provides robustness with heterogeneous data, but without local iterations or sampling of clients, which we briefly explain in the following. \cite{Draco_ByzGD18,Detox_ByzSGD19,DataSoDi_ByzGD19,DataDi_ByzCD19,DataSoDi_arxiv-Byz19} use coding across datasets, which is hard to implement in FL.
\cite{RSA_Byz-SGD19} changes the objective function and adds a regularizer term to combat the adversary. 
\cite{YinRobustFL19} effectively reduces the heterogeneous problem to a homogeneous problem by clustering, and then learning happens within each cluster having homogeneous data. 
\cite{Jaggi-Heterogeneous20} proposed a resampling technique that effectively adapts existing robust algorithms (which might have been designed to work with homogeneous -- identical or i.i.d.\ -- datasets) to work with heterogeneous datasets. 
Note that \cite{Jaggi-Heterogeneous20} provides convergence guarantees of their resampling techniques applied to only {\sc Krum}, which is the robust aggregation rule from \cite{Krum_Byz17}.

\cite{DataDi_Byz-SGD_Heterogeneous20} is the closest related work to ours, in the sense that they also proposed an SGD algorithm on heterogeneous data that uses robust mean estimation subroutines to filter out corrupt gradients and analyzed it under the same minimal assumptions as ours. We want to emphasize that \cite{DataDi_Byz-SGD_Heterogeneous20} does not incorporate local iterations and sampling of clients in their algorithm and analyses, which makes our analyses fundamentally different from theirs. We had to develop new tools (a matrix concentration inequality) to analyze our algorithm, and also the convergence analyses in our paper are very different from those in literature, including that in \cite{DataDi_Byz-SGD_Heterogeneous20}. 
Our analyses differ from those of local SGD without adversaries, as (apart from differing in other technical details) our method requires establishing two separate recurrences, one at synchronization indices and the other one for the rest of the indices.
Our analyses also differ from those of SGD without local iterations and without adversaries, as local SGD causes drift in the local parameter vectors at clients in between any two synchronization indices -- this drift occurs even when all clients have identical data. Note that bounding this drift is necessary for convergence but is non-trivial with heterogeneous data and without having strong assumptions. Our matrix concentration result and its analysis is also very different from that of \cite{DataDi_Byz-SGD_Heterogeneous20}, as we need to prove it in the presence of local iterations.

We believe that ours is the first work that combines {\em local iterations} with {\em Byzantine-resilience} for SGD.\footnote{At the completion of our work, we found that \cite{SLSGD19} also analyzed SGD in the FL setting, but with the following major differences: Not only do they make bounded gradient assumption, the approximation error (even in the Byzantine-free setting) of their solution could be as large as $\calO(D^2+G^2)$, where $G$ is the gradient bound and $D$ is the diameter of the parameter space that contains the optimal parameters $\bx^*$ and all the local parameters $\bx_r^t$ ever emerged at any client $r\in[R]$ in any iteration $t\in[T]$; this, in our opinion, makes the bound vacuous. In optimization, one would ideally like to have the convergence rates depend on diameter of the parameter space with a factor that decays with the number of iterations, e.g., with $\frac{1}{T}$ or $\frac{1}{\sqrt{T}}$, and also see \Theoremref{LocalSGD_convergence}.}
Not only that, we also analyze our algorithm on {\em heterogeneous} data and allow {\em sampling of clients}.
Note that, apart from the notable exception of \cite{DataDi_Byz-SGD_Heterogeneous20}, the earlier work that provides robustness (without local iterations or sampling of clients) either assume homogeneous data across clients \cite{ChenSX_Byz17,Krum_Byz17,Bartlett-Byz18,Alistarh_Byz-SGD18,SuX_Byz19,Bartlett-Byz_nonconvex19} or require strong assumptions, such as the bounded gradient assumption on local functions (i.e., $\|\nabla F_r(\bx)\|\leq G$ for some finite $G$) \cite{Zeno_ByzSGD19}. 
Note that even without robustness, assuming bounded gradients is a common way to make the analysis on heterogeneous data simple \cite{alibaba_local19,FedAvgNonIID20}, as under this assumption, 
we can trivially bound the heterogeneity among local datasets by $\|\nabla F_r(\bx)-\nabla F_s(\bx)\|\leq2G$,%
\footnote{See \cite{TighterTheoryLocalGD20} for a detailed discussion on the inappropriateness of making bounded gradient assumption in heterogeneous data settings and examine the effect of heterogeneity on convergence rates (even without robustness).} which makes handling heterogeneity vacuous.

\subsection{Our Contributions}\label{subsec:our-contributions}
In this paper, we tackle heterogeneity assuming only that the gradient dissimilarity among local datasets is bounded (see \eqref{bounded_local-global}), 
and propose and analyze a Byzantine-resilient SGD algorithm with local iterations and sampling of clients under the bounded variance assumption for SGD (see \eqref{bounded_local-variance}); see \Algorithmref{Byz-Local-SGD}. 
We provide convergence analyses for strongly-convex and non-convex smooth objectives.
Our convergence results are summarized below, where $b$ is the mini-batch size for stochastic gradients, $\sigma^2$ is the variance bound, $\kappa^2$ captures the gradient dissimilarity, $H$ is the number of local iterations in between any two consecutive synchronization indices, $K$ is the number of clients sampled at synchronization times, $\eps<\frac{K}{4R}$ is the fraction of Byzantine clients, and $\eps'$ is any constant such that $(\eps+\eps')\leq\frac{K}{4R}$.

For strongly-convex objectives, our algorithm can find approximate optimal parameters within an error of $\varGamma=\calO\(\frac{H\sigma^2}{b\eps'}\(1+\frac{d}{K}\)(\eps+\eps')+H\kappa^2\)$ exponentially (in $\frac{T}{H}$) fast, and for non-convex objectives, it can reach to a stationary point within the same error $\varGamma$ with a speed of $\frac{1}{\nicefrac{T}{H}}$. 
Note that the convergence rate of {\em vanilla} SGD (i.e., without local iterations and in Byzantine-free settings) decays exponentially (in $T$) fast for strongly-convex objectives and with a speed of $\frac{1}{T}$ for non-convex objectives, whereas,  our convergence rates are affected by the number of local iterations $H$.
This is a result of working with weak assumptions -- if we work with the bounded gradient assumption, 
then we can also get exponential (in $T$) convergence in the strongly-convex case and $\frac{1}{T}$ convergence in the non-convex case.

In the approximation error $\varGamma$, the first error term $\frac{H\sigma^2}{b\eps'}\(1+\frac{d}{K}\)(\eps+\eps')$ mainly arises because of the stochasticity in gradients due to SGD and is equal to zero if we work with full-batch gradients (which gives $\sigma=0$), and the second error term $H\kappa^2$ arises because of heterogeneity in local datasets.
Note that $\varGamma$ only has a linear dependence on $H$.

We also give a simplified analysis of our algorithm with {full-batch} gradients for all three objectives.
See \Theoremref{LocalSGD_convergence} and \Theoremref{full-batch-GD} for our mini-batch SGD and full-batch GD convergence results, respectively.
See a detailed discussion on the approximation error analysis and the convergence rates in \Sectionref{important-remarks}.

To tackle the malicious behavior of Byzantine clients, we borrow tools from recent advances in high-dimensional robust statistics \cite{Robust_mean_LRV16,Resilience_SCV18,DKKLMS19,RecentAdvances_RobustStatistics19};
in particular, we use the polynomial-time outlier-filtering procedure from \cite{Resilience_SCV18}, which was developed for robust mean estimation in high dimensions. In order to use this algorithm, we develop a novel matrix concentration result (see \Theoremref{gradient-estimator}) which may be of independent interest.
For full-batch gradients, we give our matrix concentration result with better guarantees, which can be proved by a much simplified analysis than its mini-batch counterpart; see \Theoremref{gradient-estimator_GD}.

\subsection{Paper Organization}
We describe our algorithm and state the main convergence results in \Sectionref{problem-setup_results}.
We describe the core part of our algorithm, the robust accumulated gradient estimation (RAGE), and our new matrix concentration result in \Sectionref{robust-grad-est} and also prove it there.
We prove our main convergence results for mini-batch SGD in \Sectionref{convex_convergence} and \Sectionref{nonconvex_convergence} and for full-batch SGD in \Sectionref{convergence_full-batch-GD}.

\subsection{Notation}
For any $n\in\mathbb{N}$, we denote the set $\{1,2,\hdots,n\}$ by $[n]$, and for any $n_1,n_2\in\mathbb{N}$ such that $n_1\leq n_2$, we denote the set $\{n_1,n_1+1,\hdots,n_2\}$ by $[n_1:n_2]$.
We denote vectors by bold small letters $\bx,\by$, etc., and matrices by bold capital letters $\bA,\bB$, etc.
For any finite set $\calK$, we write $k\in_U\calK$ to denote that $k$ is chosen uniformly at random from $\calK$.
All vector norms in this paper are $\ell_2$ norms, and for convenience, we simply denote them by $\|\cdot\|$.
For a square matrix $\bA$, we write $\lambda_{\max}(\bA)$ to denote the largest eigenvalue of $\bA$.

\section{Problem Setup and Our Results}\label{sec:problem-setup_results}

In this section, we state our assumptions, describe the adversary model and our algorithm, and state our main convergence results.

\subsection{Assumptions}\label{subsec:assumptions}
As mentioned in \Sectionref{intro}, we make minimal assumptions to analyze our algorithm. Our first assumption is a standard one in SGD, which assumes bounded variance for stochastic gradients. Our second assumption is for heterogeneous data and assumes that the heterogeneity in different local datasets is bounded.
\begin{assumption}[Bounded local variances]\label{assump:bounded_local-variance}
The stochastic gradients sampled from any local dataset have uniformly bounded variance over $\R^d$, i.e., there exists a finite $\sigma\geq0$, such that
\begin{align}\label{bounded_local-variance}
\bbE_{i\in_U[n_r]}\|\nabla F_{r,i}(\bx) - \nabla F_r(\bx)\|^2 \leq \sigma^2, \quad \forall \bx\in\R^d, r\in[R].  
\end{align}
\end{assumption}
It will be helpful to formally define mini-batch stochastic gradients, where instead of computing stochastic gradients based on just one data point, 
each client selects a subset of size $b$ uniformly at random from its own local dataset and computes the average of $b$ gradients.
For any $\bx\in\R^d, r\in[R], b\in[n_r]$, consider the following set 
\begin{align}\label{bigger_set}
\calF_r^{\otimes b}(\bx):=\left\{ \frac{1}{b}\sum_{i\in\calH_b}\nabla F_{r,i}(\bx) :  \calH_b\in \binom{[n_r]}{b} \right\}.
\end{align}
Note that $\bg_r(\bx)\in_U\calF_r^{\otimes b}(\bx)$ is a mini-batch stochastic gradient with batch size $b$ at client $r$. It is not hard to see the following:
\begin{align}
\bbE\left[\bg_r(\bx)\right] &= \nabla F_r(\bx), \quad \forall \bx\in\R^d, r\in[R] \label{same_mean} \\
\bbE\left\|\bg_r(\bx) - \nabla F_r(\bx)\right\|^2 &\leq \frac{\sigma^2}{b}, \quad \forall \bx\in\R^d, r\in[R]\label{reduced_variance}
\end{align}
where \eqref{same_mean} says that $\bg_r(\bx)$ is an unbiased gradient and \eqref{reduced_variance} says that the variance of mini-batch stochastic gradients reduces by the same factor as the batch size.
Though the bound in \eqref{reduced_variance} goes down with $b$, it does not become zero when we compute full-batch gradients, which uses all $n_r$ data points. This is because \eqref{reduced_variance} only uses that the clients sample $b$ data points {\em with} replacement. However, in reality, since this sampling is done {\em without} replacement, we can show a finer variance bound of $\bbE\left\|\bg_r(\bx) - \nabla F_r(\bx)\right\|^2 \leq \frac{(n_r-b)}{b(n_r-1)}\sigma^2$; see \cite{mini-batch_variance17} for a proof. 
We can slightly improve our results by using this finer variance bound instead of \eqref{reduced_variance} everywhere in this paper, but, for simplicity, we only use the weaker bound \eqref{reduced_variance} throughout.

\begin{assumption}[Bounded gradient dissimilarity]\label{assump:gradient-dissimilarity}
The difference between the local gradients $\nabla F_r(\bx), r\in[R]$ and the global gradient $\nabla F(\bx)=\frac{1}{R}\sum_{r=1}^R\nabla F_r(\bx)$ is uniformly bounded over $\R^d$ for all clients, i.e., there exists a finite $\kappa$, such that
\begin{align}\label{bounded_local-global}
\|\nabla F_r(\bx) - \nabla F(\bx)\|^2 \leq \kappa^2, \quad \forall \bx\in\R^d, r\in[R].  
\end{align}
\end{assumption}
In \Assumptionref{gradient-dissimilarity}, $\kappa$ quantifies the bounded deviation between the local loss functions $F_r, r\in[R]$ and the global loss function $F$;  
see also \cite{Momentum_linear-speedup19,LocalDecentralized_19}, where this assumption has been used in heterogeneous data settings in decentralized SGD without Byzantine clients.
The gradient dissimilarity bound in \eqref{bounded_local-global} can be seen as a {\em deterministic} condition on local datasets, under which we derive our results. 

\subsubsection{Need of \Assumptionref{gradient-dissimilarity}}
For any method that filters out malicious updates from the clients and work with heterogeneous (``non i.i.d.'') data, as the server does not know the identities of the adversarial clients, we need to have some regularity condition relating the datasets, and we believe \Assumptionref{gradient-dissimilarity} is a natural way to model that. \Assumptionref{gradient-dissimilarity} intuitively captures the heterogeneity among local datasets, without making any statistical assumptions on the data. To see the necessity of bounding heterogeneity even without adversary, note that we allow clients to perform local SGD steps, where, in between any two synchronization indices, clients compute gradients from their local datasets and update their local parameter vectors; as a result, their local parameter vectors can drift away from each other. This drift needs to be bounded for convergence analyses, and if we do not assume bounded heterogeneity, it is impossible to bound this drift. As we have discussed at the end of \Subsectionref{related-work}, \Assumptionref{gradient-dissimilarity} is much weaker than the bounded gradient assumption, which not only makes bounding the drift (and the convergence analyses) trivial, but also obscure the dependence of the convergence bounds on the heterogeneity of datasets, which is clearly brought out in our convergence results.

\subsubsection{Bounds on $\sigma^2$ and $\kappa^2$ in the statistical heterogeneous model}
Since all results (matrix concentration and convergence) in this paper are given in terms of $\sigma^2$ and $\kappa^2$, to show the clear dependence of our results on the dimensionality of the problem, we can bound these quantities in the statistical {\em heterogeneous} data model under different distributional assumptions on local gradients.
For the variance bound \eqref{bounded_local-variance}, it was shown in \cite[Theorem 7]{DataDi_Byz-SGD_Heterogeneous20} that if local gradients have sub-Gaussian distribution, then $\sigma=\calO\(\sqrt{d\log(d)}\)$. For the gradient dissimilarity bound \eqref{bounded_local-global}, it was shown in \cite[Theorem 6]{DataDi_Byz-SGD_Heterogeneous20} that if either the local gradients have sub-exponential distribution and each client has at least $n=\Omega\(d\log(nd)\)$ data points or local gradients have sub-Gaussian distribution and $n\in\mathbb{N}$ is arbitrary, then $\kappa \leq \kappa_{\text{mean}} + \calO\(\sqrt{\frac{d\log(nd)}{n}}\)$, where $\kappa_{\text{mean}}$ denotes the distance of the expected local gradients from the global gradient.

\subsection{Adversary Model}
We assume that an $\eps$ fraction of $R$ clients are malicious; as we see later, we can tolerate $\eps<\frac{K}{4R}$,\footnote{Actually, we can tolerate $\eps<\frac{1}{4}$ fraction of malicious clients from the $K$ clients that we select;  so, $\eps<\frac{K}{4R}$ is a worst case bound in case we sample {\em all} the malicious clients in a selection, which is an unlikely event.} 
where $K\leq R$ is the number of clients sampled at synchronization indices. The malicious clients can {\em collaborate} and {\em arbitrarily} deviate from their pre-specified programs: In any SGD iteration, instead of sending true stochastic gradients, corrupt clients may send adversarially chosen vectors (they may not even send anything if they wish, in which case, the server can treat them as {\em erasures} and replace them with a fixed value). Note that, in the erasure case, server knows which clients are corrupt; whereas, in the Byzantine problem, server does not have this information.

\subsection{Main Results}\label{subsec:main-results}
Let $\I_T=\{t_1,t_2,\hdots,t_k,\hdots\}$, with $t_1=0$, denote the set of synchronization indices at which clients communicate their net updates with the server. Let
$H$ denote the difference between any two consecutive indices, i.e., every worker performs the same number $H$ of local iterations between any two consecutive synchronization indices.
At synchronization indices, server samples a subset of $K$ clients (denoted by $\calK\subseteq[R]$) and sends the global model (denoted by $\bx$) to them; each client $r\in\calK$ updates its local model $\bx_r$ by taking SGD steps based on its local dataset until the next synchronization time, when all clients in $\calK$ send their local models to the server. Note that some of these clients may be corrupt and may send arbitrary vectors.
Server employs a decoding algorithm RAGE\footnote{\label{foot:RAGE-vs-RME}Our decoding algorithm, which we call RAGE, is the same as the robust mean estimation algorithm proposed by Steinhardt et al.~\cite{Resilience_SCV18}. We gave it a different name, as we use it in a much more general FL setting of running SGD with local iterations on heterogeneous data. Note that the same algorithm has also been used in \cite{SuX_Byz19,Bartlett-Byz_nonconvex19} in the context of Byzantine-robust {\em full batch} gradient descent {\em without} local iterations, assuming  {\em homogeneous} i.i.d.\ data, whereas, we employ that algorithm in the FL setting, which makes its analysis significantly more challenging.} and update the global model $\bx$ based on that.
\begin{remark}
Note that the only disruption that the corrupt clients can cause in the training process is during the gradient aggregation at synchronization indices by sending adversarially chosen vectors to the server, and we give unlimited power to the adversary for that. Because of this and for the purpose of analysis, we can assume, without loss of generality, that in between the synchronization indices, the corrupt clients sample stochastic gradients and update their local parameters honestly.
\end{remark}
We present our Byzantine-resilient SGD algorithm with local iterations in \Algorithmref{Byz-Local-SGD}.

\begin{algorithm}[t]
   \caption{Byzantine-Resilient SGD with Local Iterations}\label{algo:Byz-Local-SGD}
\begin{algorithmic}[1]
   \STATE {\bf Initialize.} Set $t:=0$, $\bx_r^0 := \bzero, \forall r\in[R]$, and $\bx:=\bzero$.  
   Here, $\bx$ denotes the global model and $\bx_r^0$ denotes the local model at client $r$ at time 0.
   Fix a constant step-size $\eta$ and a mini-batch size $b$. 
   \WHILE{($t\leq T$)}
   \STATE Server selects an arbitrary subset of clients $\calK\subseteq[R]$ of size $|\calK|=K$ and sends $\bx$ to all clients in $\calK$.
  \STATE \textbf{All clients $r\in\calK$ do in parallel:}
  \STATE Set $\bx_r^t=\bx$.
  \WHILE{(true)}
   \STATE Take a mini-batch stochastic gradient $\bg_r(\bx_r^t) \in_U \calF_r^{\otimes b}(\bx_r^t)$ and update the local model:
   \begin{align*}
   \bx_r^{t+1} \leftarrow \bx_r^{t} - \eta\bg_r(\bx_r^{t}); \quad t \leftarrow (t+1).
   \end{align*}
   \vspace{-0.3cm}
   \IF{($t\in\I_T$)}
      \STATE Let $\btlx_{r}^t=\bx_{r}^t$, if client $r$ is honest, otherwise $\btlx_{r}^t$ can be an arbitrary vector in $\R^d$.
      \STATE Send $\btlx_{r}^t$ to the server and break the inner {\bf while} loop.

   \ENDIF
   \ENDWHILE
   
   \STATE \textbf{At Server:}
   \STATE Receive $\{\btlx_{r},r\in\calK\}$ from the clients in $\calK$.
   \STATE For every $r\in\calK$, let $\btlg_{r,\accu} := (\btlx_r-\bx)/\eta$. 
   \STATE Apply the decoding algorithm {\sc RAGE} (see \Algorithmref{robust-grad-est} in \Subsectionref{robust-grad-est_algo}) on $\{\btlg_{r,\accu}, r\in\calK\}$. 
   Let $$\btg_\accu := \textsc{RAGE}(\btlg_{r,\accu}, r\in\calK).$$ \vspace{-0.3cm}
   \STATE Update the global model $\bx \leftarrow \bx - \eta\btg_{\accu}$. 
   \ENDWHILE
\end{algorithmic}
\end{algorithm}

Before we present our results, we need some definitions.
\begin{itemize}
\item {\bf $L$-smoothness:} A function $F:\R^d\to\R$ is called $L$-smooth over $\R^d$, if for every $\bx,\by\in\R^d$, we have $\|\nabla F(\bx) - \nabla F(\by)\| \leq L\|\bx-\by\|$ (this property is also known as $L$-Lipschitz gradients). This is also equivalent to $F(\by) \leq F(\bx) + \langle\nabla F(\bx), \by-\bx\rangle + \frac{L}{2}\|\bx-\by\|^2$.
\item {\bf $\mu$-strong convexity:} A function $F:\R^d\to\R$ is called $\mu$-strongly convex over $\R^d$ for $\mu\geq0$, if for every $\bx,\by\in\R^d$, we have $F(\by) \geq F(\bx) + \langle\nabla F(\bx), \by-\bx\rangle + \frac{\mu}{2}\|\bx-\by\|^2$.
\end{itemize}
All convergence results in this paper only require properties of the global loss function $F$; the local loss functions $F_r,r\in[R]$ may be arbitrary. For example, in the smooth strongly-convex case, we only require $F$ to be smooth and strongly-convex, and $F_r,r\in[R]$ may be arbitrary. Similarly for the non-convex case.

Our convergence results are for strongly-convex and non-convex smooth objectives.
\begin{theorem}[Mini-Batch Local Stochastic Gradient Descent]\label{thm:LocalSGD_convergence}
Suppose an $\epsilon>0$ fraction of clients are adversarially corrupt.
Let $\calK_t$ denote the set of $K$ clients that are active at any given time $t\in[0:T]$.
For a global objective function $F:\R^d\to\R$, let \Algorithmref{Byz-Local-SGD} generate a sequence of iterates $\{\bx_r^t : t\in[0:T], r\in\calK_t\}$ when run with a fixed step-size $\eta=\frac{1}{8HL}$. 
Fix an arbitrary constant $\eps'>0$. If $\eps\leq\frac{K}{4R} - \eps'$, then with probability at least $1-\frac{T}{H}\exp(-\frac{\eps'^2(1-\eps)K}{16})$, the sequence of average iterates $\{\bx^t=\frac{1}{K}\sum_{r\in\calK_t}\bx_r^t : t\in[0:T]\}$
satisfy the following convergence guarantees: 
\begin{itemize}
\item {\bf Strongly-convex:} If $F$ is $L$-smooth for $L\geq0$ and $\mu$-strongly convex for $\mu>0$, we get:
\begin{align}
\bbE\left\|\bx^{T} - \bx^*\right\|^2 \leq \(1-\frac{\mu}{16HL}\)^T\left\|\bx^{0} - \bx^*\right\|^2 + \frac{13}{\mu^2}\varGamma.\label{convex_convergence_rate}
\end{align}
\item {\bf Non-convex:} If $F$ is $L$-smooth for $L\geq0$, we get:
\begin{align}
\frac{1}{T}\sum_{t=0}^T\bbE\left\|\nabla F(\bx^{t})\right\|^2 \leq \frac{16HL}{T}\left[\bbE[F(\bx^0)] - \bbE[F(\bx^{*})]\right] + \frac{9}{2}\varGamma. \label{nonconvex_convergence_rate}
\end{align}
\end{itemize}
In \eqref{convex_convergence_rate}, \eqref{nonconvex_convergence_rate}, $\varGamma=\(\frac{3\varUpsilon^2}{H} + \frac{11H\sigma^2}{b} + 36H\kappa^2\)$ with $\varUpsilon^2 = \calO\left(\sigma_0^2(\eps+\eps')\right)$, where $\sigma_0^2 = \frac{25H^2\sigma^2}{b\eps'}\left(1 + \frac{4d}{3K}\right) + 28H^2\kappa^2$,
and expectation is taken over the sampling of mini-batch stochastic gradients. 

\end{theorem}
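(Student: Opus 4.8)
\medskip
\noindent\textbf{Proof proposal.}
The plan is to decouple the robustness analysis from the optimization analysis. \emph{First} I would invoke the matrix concentration result \Theoremref{gradient-estimator}: conditioned on the randomness of the mini-batch gradients up to a synchronization index $t_k\in\I_T$, it guarantees that \textsc{RAGE} applied to $\{\btlg_{r,\accu}:r\in\calK\}$ returns $\btg_\accu$ within distance $\varUpsilon$ (with $\varUpsilon^2=\calO(\sigma_0^2(\eps+\eps'))$) of the average of the honest clients' accumulated vectors, except with probability at most $\exp(-\tfrac{\eps'^2(1-\eps)K}{16})$; here the ``spread'' parameter $\sigma_0^2$ is exactly the quantity $\tfrac{25H^2\sigma^2}{b\eps'}(1+\tfrac{4d}{3K})+28H^2\kappa^2$ that the drift step below produces, and $\eps\le\tfrac{K}{4R}-\eps'$ is what the filter needs. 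A union bound over the $\nicefrac{T}{H}$ rounds gives the claimed overall success probability, and for the rest of the argument I condition on this good event. On it, the server update in round $k$ can be written (up to the sign convention in \Algorithmref{Byz-Local-SGD}) as $\bx^{t_{k+1}}=\bx^{t_k}-\eta\bv_k-\eta\boldsymbol{\xi}_k$, where $\calH_k\subseteq\calK$ is the honest sampled subset, $\bv_k=\tfrac{1}{|\calH_k|}\sum_{r\in\calH_k}\sum_{h=0}^{H-1}\bg_r(\bx_r^{t_k+h})$ is the average length-$H$ accumulated stochastic gradient along the honest local trajectories, and $\|\boldsymbol{\xi}_k\|\le\varUpsilon$. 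So it suffices to analyze local SGD with this controlled additive perturbation.

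\emph{Second}, I would bound the \emph{local drift} $\delta_k:=\max_{r\in\calH_k}\max_{0\le h\le H}\bbE\|\bx_r^{t_k+h}-\bx^{t_k}\|^2$. Writing $\bx_r^{t_k+h}-\bx^{t_k}=-\eta\sum_{j=0}^{h-1}\bg_r(\bx_r^{t_k+j})$, expanding $\|\cdot\|^2$, peeling off the zero-mean parts with \Assumptionref{bounded_local-variance}, and decomposing each $\nabla F_r(\bx_r^{t_k+j})$ as $\nabla F(\bx^{t_k})+(\nabla F_r(\bx_r^{t_k+j})-\nabla F_r(\bx^{t_k}))+(\nabla F_r(\bx^{t_k})-\nabla F(\bx^{t_k}))$, then bounding these three pieces by $L$-smoothness of $F$ and \Assumptionref{gradient-dissimilarity}, one gets a self-referential inequality in $\{\delta_k\}$ that closes because $\eta HL=\tfrac18<1$; unrolling yields $\delta_k=\calO(\eta^2H^2(\tfrac{\sigma^2}{bH}+\kappa^2+\|\nabla F(\bx^{t_k})\|^2))$. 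The same computation controls $\bbE\|\btlg_{r,\accu}\|^2$ and the empirical covariance of the honest accumulated vectors, which is what feeds \Theoremref{gradient-estimator} and produces the $\sigma_0^2$ announced above.

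\emph{Third}, I would set up the two interlocking recurrences. For the \emph{synchronization recurrence} in the strongly-convex case: expand $\bbE\|\bx^{t_{k+1}}-\bx^*\|^2$ from the update, replace the conditional mean of $\bv_k$ by $H\nabla F(\bx^{t_k})$ plus a bias whose squared norm is $\lesssim L^2H^2\delta_k+H^2\kappa^2$ (this is where Step 2 re-enters), apply $\mu$-strong convexity and $L$-smoothness of $F$ to extract the contraction $(1-\tfrac{\mu\eta H}{2})$ on $\bbE\|\bx^{t_k}-\bx^*\|^2$ while keeping a negative $-\tfrac{\eta H}{L}\bbE\|\nabla F(\bx^{t_k})\|^2$ term to absorb the $\|\nabla F(\bx^{t_k})\|^2$ hidden in $\delta_k$, and Young-split the $\boldsymbol{\xi}_k$ cross term using $\|\boldsymbol{\xi}_k\|\le\varUpsilon$. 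This gives $a_{k+1}\le(1-\tfrac{\mu\eta H}{2})a_k+c\tfrac{\eta H}{\mu}\varGamma$ with $a_k=\bbE\|\bx^{t_k}-\bx^*\|^2$; iterating over $k=0,\dots,\nicefrac{T}{H}-1$, substituting $\eta=\tfrac{1}{8HL}$, and using Bernoulli's inequality $(1-\tfrac{\mu}{16HL})^H\ge1-\tfrac{\mu}{16L}\ge1-\tfrac{\mu\eta H}{2}$ to rewrite $(1-\tfrac{\mu\eta H}{2})^{T/H}$ as $(1-\tfrac{\mu}{16HL})^{T}$ yields \eqref{convex_convergence_rate}. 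In the non-convex case I would instead use the $L$-smoothness descent inequality across each block (with $\eta HL\le\tfrac18$ the $\tfrac{L\eta^2}{2}\|\btg_\accu\|^2$ term is dominated) to get $\bbE[F(\bx^{t_{k+1}})]\le\bbE[F(\bx^{t_k})]-\tfrac{\eta H}{2}\bbE\|\nabla F(\bx^{t_k})\|^2+c'\eta H\varGamma$, telescope over the $\nicefrac{T}{H}$ blocks and divide, getting the $\tfrac{16HL}{T}$ prefactor. Then a short \emph{second recurrence} covers the non-synchronization indices: for $t\in(t_k,t_{k+1})$ one has $\|\nabla F(\bx^t)\|^2\le 2\|\nabla F(\bx^{t_k})\|^2+2L^2\|\bx^t-\bx^{t_k}\|^2$ with $\bbE\|\bx^t-\bx^{t_k}\|^2$ bounded by the same drift estimate, so averaging over all $t\in[0:T]$ in \eqref{nonconvex_convergence_rate} only costs constant factors; collecting every error contribution into $\varGamma=\tfrac{3\varUpsilon^2}{H}+\tfrac{11H\sigma^2}{b}+36H\kappa^2$ is then routine bookkeeping.

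The hard part is \emph{Step 1}: proving the matrix concentration result \Theoremref{gradient-estimator} in the presence of local iterations. Unlike the no-local-iteration setting of \cite{DataDi_Byz-SGD_Heterogeneous20,SuX_Byz19,Bartlett-Byz_nonconvex19}, each $\btlg_{r,\accu}$ is a sum of $H$ gradients evaluated at \emph{different, mutually drifting} iterates $\bx_r^{t_k+h}$, so the covariance whose operator norm governs \textsc{RAGE}'s guarantee must be taken along the whole local trajectory rather than at a fixed point; bounding it needs precisely the trajectory-wise drift control of Step 2, and this coupling is what produces the $H$-dependence of $\sigma_0^2$, hence of $\varUpsilon^2$ and of $\varGamma$. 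The secondary difficulty is the interlocking of the two recurrences: the drift bound enters the synchronization recurrence through the $\|\nabla F(\bx^{t_k})\|^2$ (resp.\ $\|\bx^{t_k}-\bx^*\|^2$) term, which must be re-absorbed by the contraction/descent, and it is exactly this constraint that forces $\eta=\tfrac{1}{8HL}$ and yields the linear-in-$H$ factors in the rates and in $\varGamma$.
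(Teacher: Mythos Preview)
Your high-level plan is sound and will produce the stated bounds, but it follows a \emph{different decomposition} from the paper's, and there are two technical soft spots worth flagging.

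\textbf{How the paper argues.} The paper does \emph{not} run a block-level recurrence from $\bx^{t_k}$ to $\bx^{t_{k+1}}$. Instead it unrolls the server update so that the synchronization step reads as a \emph{single-step} update from $\bx^{t_{i+1}-1}$: one shows $\bx^{t_{i+1}}=\bx^{t_{i+1}-1}-\eta\nabla F(\bx^{t_{i+1}-1})+\eta[\text{heterogeneity}]-\eta\calE$, with $\calE=\btg_\accu-\tfrac{1}{K}\sum_{r\in\calK_{t_i}}\sum_{t}\nabla F_r(\bx_r^t)$. This yields a per-step contraction $(1-\tfrac{\mu\eta}{2})$ at the sync index with a \emph{large} additive error (Claims~\ref{claim:convex_first-term}--\ref{claim:convex_third-term}), and a separate per-step recurrence with the \emph{same} contraction but a \emph{small} additive error at every non-sync index. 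Chaining all $T$ steps and summing the two kinds of errors gives the result; the non-convex case is entirely parallel. Correspondingly, the paper never needs a drift bound from the anchor $\bx^{t_k}$ and never sees a $\|\nabla F(\bx^{t_k})\|^2$ term inside the drift: it only bounds \emph{pairwise} client drift $\sum_t\bbE\|\bx_r^t-\bx_s^t\|^2\le 7H^3\eta^2(\tfrac{\sigma^2}{b}+3\kappa^2)$ (\Lemmaref{bounded_local-global-params-app}), which has no gradient-norm contribution because all clients start at the same point. Your approach (block recurrence with contraction $(1-\tfrac{\mu\eta H}{2})$, drift from $\bx^{t_k}$ carrying a $\|\nabla F(\bx^{t_k})\|^2$ term that is later re-absorbed) is the more standard local-SGD route and is also correct; the paper's step-level scheme simply avoids the re-absorption bookkeeping, at the price of having to write two interlocking single-step recurrences.

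\textbf{Two technical points to tighten.} First, in your drift bound you propose to control $\nabla F_r(\bx_r^{t_k+j})-\nabla F_r(\bx^{t_k})$ by ``$L$-smoothness of $F$''. Under the paper's hypotheses only the \emph{global} $F$ is $L$-smooth, not the local $F_r$, so this step as written does not go through; you need to interpose $\nabla F$ (as the paper does in \eqref{local-global-expectation5-app}) and pay an extra $\kappa^2$, which is harmless. Second, \Theoremref{gradient-estimator} guarantees that $\btg_\accu$ is within $\varUpsilon$ of $\bg_{\calS}$, the average over a \emph{subset} $\calS$ of size $(1-(\eps+\eps'))K$, not over all honest clients or over $\calK$. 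To get from $\bg_{\calS}$ to $\tfrac{1}{K}\sum_{r\in\calK}\sum_t\nabla F_r(\bx_r^t)$ (which is what your $\bv_k$-based update implicitly assumes) you still need the two extra comparisons in the paper's \Claimref{convex_third-term}; these are precisely what generate the $\tfrac{H\sigma^2}{b}$ and $H\kappa^2$ pieces of $\varGamma$ beyond $\varUpsilon^2/H$, so your proposal should make that step explicit rather than folding it into $\|\boldsymbol{\xi}_k\|\le\varUpsilon$.
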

We prove \eqref{convex_convergence_rate} and \eqref{nonconvex_convergence_rate} in \Sectionref{convex_convergence} and \Sectionref{nonconvex_convergence}, respectively.
In addition to other complications arising due to handing Byzantine clients together with local iterations, our proof deviates from the standard proofs for local SGD without adversary, as we need to show two recurrences, one at synchronization indices and the other at non-synchronization indices.
This is because at synchronization indices, 
server performs decoding to filter-out the corrupt clients, while at other indices there is no decoding, as there is no communication. 

The failure probability of our algorithm is at most $\frac{T}{H}\exp(-\frac{\eps'^2(1-\eps)K}{16})$, which though scales linearly with $T$, 
also goes down exponentially with $K$. 
As a result, in settings such as federated learning, where number of clients could be very large (e.g., in millions) and server samples a few thousand clients, we can get a very small probability of error, 
even if we run our algorithm for a very long time. 
Note that the error probability is due to the {\em stochastic} sampling of gradients,
and if we want a ``zero'' probability of error, we can run full-batch gradient descent, for which we get the following result, which we prove in \Sectionref{convergence_full-batch-GD} with a much simplified analysis than that of \Theoremref{LocalSGD_convergence}.

\begin{theorem}[Full-Batch Local Gradient Descent]\label{thm:full-batch-GD}
In the same setting as that of \Theoremref{LocalSGD_convergence}, except for that we run \Algorithmref{Byz-Local-SGD} with a fixed step-size $\eta=\frac{1}{5HL}$, and in any iteration, instead of sampling mini-batch stochastic gradients, every honest client takes full-batch gradients from their local datasets.
If $\eps\leq\frac{K}{4R}$, then with probability 1, the sequence of average iterates $\{\bx^t=\frac{1}{K}\sum_{r\in\calK_t}\bx_r^t : t\in[0:T]\}$
satisfy the following convergence guarantees:
\begin{itemize}
\item {\bf Strongly-convex:} If $F$ is $L$-smooth for $L\geq0$ and $\mu$-strongly convex for $\mu>0$, we get:
\begin{align}
\|\bx^{T} - \bx^* \|^2 \leq \left(1-\frac{\mu}{10HL}\right)^T\|\bx^0-\bx^*\|^2 + \frac{14}{\mu^2}\varGamma_{\GD}. \label{convex-GD_convergence_rate}
\end{align}
\item {\bf Non-convex:} If $F$ is $L$-smooth for $L\geq0$, we get:
\begin{align}
\frac{1}{T}\sum_{t=0}^T\left\|\nabla F(\bx^{t})\right\|^2 \leq \frac{10HL}{T}\left[F(\bx^0) - F(\bx^{*}) \right] + \frac{24}{5}\varGamma_{\GD}. \label{nonconvex-GD_convergence_rate}
\end{align}
\end{itemize}
In \eqref{convex-GD_convergence_rate}, \eqref{nonconvex-GD_convergence_rate}, 
$\varGamma_{\GD} = \frac{2\varUpsilon_{\GD}^2}{H}+25H\kappa^2$, where $\varUpsilon_{\GD} = \calO\left(H\kappa\sqrt{\eps}\right)$.
\end{theorem}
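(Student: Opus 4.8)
The plan is to mirror the two-recursion strategy behind \Theoremref{LocalSGD_convergence}, exploiting that full-batch gradients make $\sigma=0$: the stochastic-noise error term disappears, the only residual errors come from heterogeneity ($\kappa$) and from the Byzantine corruption, and --- crucially --- the robust-aggregation guarantee now holds \emph{deterministically}, which is exactly why the conclusion holds with probability $1$ (in the mini-batch case the failure probability was the price paid for the empirical covariance of the honest updates concentrating around its expectation). I would fix a round between consecutive synchronization indices $t_k$ and $t_{k+1}=t_k+H$ and track the average iterate $\bx^t=\frac{1}{K}\sum_{r\in\calK_t}\bx_r^t$. Between synchronization indices, using the remark that corrupt clients may be assumed to update honestly there, $\bx^{t+1}=\bx^t-\frac{\eta}{K}\sum_{r\in\calK}\nabla F_r(\bx_r^t)$; at the synchronization index, $\bx^{t_{k+1}}=\bx^{t_k}-\eta\,\btg_\accu$ with $\btg_\accu=\textsc{RAGE}(\btlg_{r,\accu},r\in\calK)$ and $\btlg_{r,\accu}=\sum_{\tau=t_k}^{t_{k+1}-1}\nabla F_r(\bx_r^\tau)$ for honest $r$. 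These two recurrences are treated separately and glued at round boundaries.

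Two ingredients feed the one-round estimate. \emph{Robust-aggregation error.} I would invoke the full-batch matrix concentration result \Theoremref{gradient-estimator_GD} to get $\|\btg_\accu-\bar\bg_\accu\|\le\varUpsilon_{\GD}=\calO\left(H\kappa\sqrt{\eps}\right)$, where $\bar\bg_\accu$ denotes the average of the honest clients' accumulated gradients; heuristically this is the $\sqrt{\lambda\eps}$-type guarantee of Steinhardt et al., with $\lambda$ the spectral norm of the empirical covariance of the honest accumulated gradients, which is $\calO(H^2\kappa^2)$ because, with full-batch gradients, those vectors differ from one another only through heterogeneity. \emph{Client drift.} Setting $D_\tau:=\max_{r\in\calK}\|\bx_r^\tau-\bx^{t_k}\|$ for $\tau\in[t_k:t_{k+1}]$ and unrolling the local updates, $L$-smoothness together with \Assumptionref{gradient-dissimilarity} gives $\|\nabla F_r(\bx_r^s)\|\le LD_s+\kappa+\|\nabla F(\bx^{t_k})\|$; since $\eta LH=1/5<1$, the resulting discrete Gr\"onwall-type recursion resolves to $D_\tau=\calO\left(\eta H(\kappa+\|\nabla F(\bx^{t_k})\|)\right)$ uniformly over the round. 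This is the step where \Assumptionref{gradient-dissimilarity} is indispensable --- without it the drift cannot be controlled. Combining, $\bar\bg_\accu$ differs from $H\nabla F(\bx^{t_k})$ by a drift contribution bounded by $L\sum_\tau D_\tau$ plus a heterogeneity contribution $\calO(H\kappa)$, the latter arising because we average over the sampled honest clients rather than over all $R$ clients.

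Putting the pieces together, $\bx^{t_{k+1}}=\bx^{t_k}-\eta H\nabla F(\bx^{t_k})-\eta\bv$ with $\|\bv\|\le\varUpsilon_{\GD}+\calO(H\kappa)+\calO(H)\|\nabla F(\bx^{t_k})\|$, and I would plug this into the standard single-step analyses for $F$. For the strongly-convex case I would expand $\|\bx^{t_{k+1}}-\bx^*\|^2$, use $\mu$-strong convexity and $L$-smoothness of $F$, and apply Young's inequality to absorb the $\|\nabla F(\bx^{t_k})\|$-proportional part of the error into the contraction term (this is what pins down $\eta=\frac{1}{5HL}$), yielding a per-round contraction $\|\bx^{t_{k+1}}-\bx^*\|^2\le\left(1-\frac{\mu}{10L}\right)\|\bx^{t_k}-\bx^*\|^2+(\text{err})$; iterating over the $T/H$ rounds, summing the geometric series, and using $\left(1-\frac{\mu}{10L}\right)^{T/H}\le\left(1-\frac{\mu}{10HL}\right)^T$ then gives \eqref{convex-GD_convergence_rate}. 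For the non-convex case the descent lemma on $F$, applied at each iterate (with an extra $\varUpsilon_{\GD}$-type term in the residual at synchronization indices), gives $F(\bx^{t+1})\le F(\bx^t)-c\,\eta\|\nabla F(\bx^t)\|^2+(\text{err})$ for an absolute constant $c>0$; summing over $t=0,\dots,T$, telescoping $F(\bx^0)-F(\bx^*)$, and dividing by $T$ gives \eqref{nonconvex-GD_convergence_rate}. The numerical constants in $\varGamma_{\GD}$ and the exact $H$-dependence of $(\text{err})$ fall out of carefully tallying the drift and heterogeneity terms and of how one splits the cross terms via Young's inequality.

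I expect the main obstacle to be the joint bookkeeping of the two middle paragraphs: the drift bound and the decoding error must be controlled simultaneously, the final dependence on $H$ must stay linear in $\varGamma_{\GD}$, and every $\|\nabla F(\bx^{t_k})\|$-proportional term --- appearing both in the drift bound and, through the covariance estimate, implicitly inside $\varUpsilon_{\GD}$ --- must be small enough under $\eta=\frac{1}{5HL}$ to be reabsorbed into the contraction or descent term rather than destabilizing the recursion. The remaining strongly-convex and non-convex algebra is routine and, thanks to the absence of stochastic-gradient noise, strictly simpler than in the proof of \Theoremref{LocalSGD_convergence}.
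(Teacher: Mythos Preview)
Your plan is workable and would produce the stated bounds, but it diverges from the paper's argument in one structural choice that makes your life harder than necessary.

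The paper keeps a \emph{per-iteration} recursion throughout (one contraction step relating $\bx^{t+1}$ to $\bx^t$ at every $t$, with a larger additive error at the synchronization step) and, crucially, bounds the \emph{inter-client} drift $\|\bx_r^t-\bx_s^t\|$ rather than the drift from the round's anchor $\|\bx_r^\tau-\bx^{t_k}\|$. The inter-client drift obeys $\sum_t\|\bx_r^t-\bx_s^t\|^2\le 7\eta^2H^3\kappa^2$ (\Lemmaref{bounded_local-global-params_GD}) with \emph{no} dependence on $\|\nabla F\|$: when you subtract two clients' trajectories the common gradient direction cancels and only the heterogeneity $\kappa$ survives. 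This is exactly what feeds the covariance bound in \Theoremref{gradient-estimator_GD} (the $11H^2\kappa^2$ there is pure $\kappa$) and the per-step error terms, so no gradient-norm-proportional residual ever appears. You instead track $D_\tau=\max_r\|\bx_r^\tau-\bx^{t_k}\|$, which unavoidably carries a $\|\nabla F(\bx^{t_k})\|$ piece, and then run a per-round contraction on the synchronization indices; this is correct (and your inequality $(1-\mu/(10L))^{T/H}\le(1-\mu/(10HL))^T$ holds), but the ``reabsorb the $\|\nabla F\|$-proportional term'' obstacle you identify is self-inflicted. In particular, your concern that $\varUpsilon_{\GD}$ might implicitly depend on $\|\nabla F(\bx^{t_k})\|$ through the covariance is unfounded --- the spread among honest accumulated gradients is governed by inter-client differences, not by their common magnitude. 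For the non-convex case your per-iterate descent-lemma outline matches the paper's.
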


\subsection{Important Remarks About \Theoremref{LocalSGD_convergence} and \Theoremref{full-batch-GD}}
\label{sec:important-remarks}

\paragraph{Analysis of the approximation error.}
In \Theoremref{LocalSGD_convergence}, the approximation error $\varGamma$ essentially consists of two types of error terms:
$\varGamma_1=\calO\(\frac{H\sigma^2}{b\eps'}\left(1 + \frac{4d}{3K}\right)(\eps+\eps')\)$ and $\varGamma_2=\calO(H\kappa^2)$,
where $\varGamma_1$ arises due to stochastic sampling of gradients and $\varGamma_2$ arises due to dissimilarity in the local datasets.
Observe that $\varGamma_1$ decreases as we increase the batch size $b$ of stochastic gradients and becomes zero if we take full-batch gradients (which implies $\sigma=0$), as is the case in \Theoremref{full-batch-GD}.
Note that both $\varGamma_1$ and $\varGamma_2$ have a linear dependence on the number of local iterations $H$. 
Observe that since we are working with heterogeneous datasets, the presence of gradient dissimilarity bound $\kappa^2$ (which captures the heterogeneity) in the approximation error is inevitable, and will always show up when bounding the deviation of the true ``global'' gradient from the decoded one in the presence of Byzantine clients, even when $H=1$.
 
\paragraph{Convergence rates.} 
In the strongly-convex case, \Algorithmref{Byz-Local-SGD} approximately finds the optimal parameters $\bx^*$ (within $\varGamma$ error) with $\left(1-\frac{\mu}{cHL}\right)^T$ speed, where $c=16$ for SGD and $c=10$ for GD. 
Note that $\left(1-\frac{\mu}{cHL}\right)^T \leq \exp^{-\frac{\mu}{cL}\frac{T}{H}}$, where the inequality follows from $(1-\frac{1}{x})^x \leq \frac{1}{e}$. This implies that the convergence rate in this case is exponentially fast (but in $\frac{T}{H}$).
In the non-convex case, \Algorithmref{Byz-Local-SGD} reaches to a stationary point (within $\varGamma$ error) with a speed of $\frac{1}{\nicefrac{T}{H}}$. 
Note that the convergence rate of {\em vanilla} SGD (i.e., without local iterations and in Byzantine-free settings) is exponentially fast (in $T$) for strongly-convex objectives and with a speed of $\frac{1}{T}$ for non-convex objectives, whereas,  our convergence rates are affected by the number of local iterations $H$.
The reason for this is precisely because, under standard SGD assumptions we need $\eta\leq\frac{1}{8HL}$ to bound the drift in local parameters across different clients; see \Lemmaref{bounded_local-global-params-app}. 
Instead, if we had assumed a stronger bounded gradient assumption (which trivially bound the heterogeneity, as explained at the end of \Subsectionref{related-work}), then \Lemmaref{bounded_local-global-params-app} 
would hold for a constant step-size that does not depend on $H$ (e.g., $\eta=\frac{1}{2L}$ would suffice), 
which would lead to an exponentially fast (in $T$) convergence for strongly-convex objectives and $\frac{1}{T}$ convergence rate for non-convex objectives.

\section{Robust Accumulated Gradient Estimation (RAGE)}\label{sec:robust-grad-est}
In this section, we provide our main result on robust accumulated gradient estimation (RAGE), which is the subroutine for robustly estimating the average of uncorrupted {\em accumulated} gradients at every synchronization index; see \Footnoteref{RAGE-vs-RME}.
First we setup the notation. 
Let \Algorithmref{Byz-Local-SGD} generate a sequence of iterates $\{\bx_r^t:t\in[0:T],r\in\calK_t\}$ when run with a fixed step-size $\eta$ satisfying 
$\eta\leq\frac{1}{8HL}$, where $\calK_t$ denotes the set of $K$ clients that are active at time $t\in[0:T]$.
Take any two consecutive synchronization indices $t_k,t_{k+1}\in\I_T$. Note that $|t_{k+1}-t_k|\leq H$.
For an honest client $r\in\calK_{t_k}$, let $\bg_{r,\accu}^{t_k,t_{k+1}}:=\sum_{t=t_k}^{t_{k+1}-1}\bg_r(\bx_r^t)$ denote the sum of local mini-batch stochastic gradients sampled by client $r$ between time $t_k$ and $t_{k+1}$, where $\bg_r(\bx_r^t)\in_U\calF_r^{\otimes b}(\bx_r^t)$ satisfies \eqref{same_mean}, \eqref{reduced_variance}.
At iteration $t_{k+1}$, every honest client $r\in\calK_{t_k}$ reports its local model $\bx_r^{t_{k+1}}$ to the server, 
from which server computes $\bg_{r,\accu}^{t_k,t_{k+1}}$ (see line 15 of \Algorithmref{Byz-Local-SGD}), 
whereas, the corrupt clients may report arbitrary and adversarially chosen vectors in $\R^d$.
Server does not know the identity of the corrupt clients, and its goal is to produce an estimate $\btg_{\accu}^{t_k,t_{k+1}}$ of the average accumulated gradients from honest clients as best as possible.

To this end, first we show that there exists a large subset $\calS\subseteq\calK_{t_k}$ of accumulated gradients from honest clients that are concentrated around their average, i.e., have bounded empirical covariance. Once we have shown that, then we will use the polynomial-time outlier-filtering algorithm from \cite{Resilience_SCV18} to estimate the average of the accumulated gradients in $\calS$.
Our main result on RAGE is as follows:
\begin{theorem}[Robust Accumulated Gradient Estimation]\label{thm:gradient-estimator}
Suppose an $\eps$ fraction of $K$ clients that communicate with the server are corrupt. 
In the setting described above, suppose we are given $K\leq R$ accumulated gradients $\btlg_{r,\emph{\accu}}^{t_k,t_{k+1}}, r\in\calK_{t_k}$ in $\R^d$, where $\btlg_{r,\emph{\accu}}^{t_k,t_{k+1}}=\bg_{r,\emph{\accu}}^{t_k,t_{k+1}}$ if the $r$'th client is honest, otherwise can be arbitrary. 
For any constant $\eps'>0$, if $(\eps+\eps')\leq\frac{1}{4}$, then we have: 
\begin{enumerate}
\item {\bf Matrix concentration:} With probability $1-\exp(-\frac{\eps'^2(1-\eps)K}{16})$, there exists a subset $\calS\subseteq\calK_{t_k}$ of {\em uncorrupted} gradients of size $(1-(\eps+\eps'))K \geq \frac{3K}{4}$, such that
\begin{align}\label{mat-concen_gradient-estimator}
\lambda_{\max}\(\frac{1}{|\calS|}\sum_{i\in\calS} \(\bg_i - \bg_{\calS}\)\(\bg_i - \bg_{\calS}\)^T\) \leq \frac{25H^2\sigma^2}{b\eps'}\left(1 + \frac{4d}{3K}\right) + 28H^2\kappa^2,
\end{align}
where, for $i\in\calS$, $\bg_i=\bg_{i,\emph{\accu}}^{t_k,t_{k+1}}, \bg_{\calS}=\frac{1}{|\calS|}\sum_{i\in\calS}\bg_{i,\emph{\accu}}^{t_k,t_{k+1}}$;
and $\lambda_{\max}$ denotes the largest eigenvalue. 

\item {\bf Outlier-filtering algorithm:} We can find an estimate $\btg$ of $\bg_{\calS}$ in polynomial-time with probability 1, such that $\left\| \btg - \bg_{\calS} \right\| \leq \calO\left(\sigma_0\sqrt{\eps+\eps'}\right)$, where $\sigma_0^2 = \frac{25H^2\sigma^2}{b\eps'}\left(1 + \frac{4d}{3K}\right) + 28H^2\kappa^2$.
\end{enumerate}
\end{theorem}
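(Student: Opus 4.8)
The statement has two independent pieces: Part~1 is a structural (probabilistic) fact about the honest accumulated gradients, and Part~2 is the algorithmic guarantee, obtained by feeding the structure of Part~1 into the estimator of \cite{Resilience_SCV18} essentially as a black box. I would prove Part~1 first.

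\emph{Part 1 (matrix concentration).} For an honest client $r\in\calK_{t_k}$ I would split the accumulated gradient into a (conditional) bias term and a zero-mean noise term,
\[
\bg_{r,\accu}^{t_k,t_{k+1}}=\bmu_r+\bz_r,\qquad
\bmu_r:=\!\!\sum_{t=t_k}^{t_{k+1}-1}\!\!\nabla F_r(\bx_r^t),\qquad
\bz_r:=\!\!\sum_{t=t_k}^{t_{k+1}-1}\!\!\bigl(\bg_r(\bx_r^t)-\nabla F_r(\bx_r^t)\bigr).
\]
Because $\bx_r^t$ is measurable w.r.t.\ the randomness used before step $t$ and $\bbE[\bg_r(\bx_r^t)\mid\mathcal F_t]=\nabla F_r(\bx_r^t)$, the summands of $\bz_r$ form a martingale-difference sequence, so $\bbE\|\bz_r\|^2\le H\sigma^2/b$ and, along any fixed unit $\bv$, $\bbE\langle\bz_r,\bv\rangle^2\le H\sigma^2/b$; moreover $\{\bz_r\}$ over honest $r$ are mutually independent since the clients run their local iterations independently after the common start $\bx$. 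I would then bound the \emph{bias spread}: combining \Assumptionref{gradient-dissimilarity} with $L$-smoothness and our bound on the drift of the local iterates between two synchronization indices (valid since $\eta\le\frac1{8HL}$, which keeps each $\|\bx_r^t-\bx_s^t\|$ small), one obtains $\|\bmu_r-\bmu_s\|^2=\calO(H^2\kappa^2)$ for all honest $r,s$. Since recentring a sum of rank-one matrices at the empirical mean only decreases it in the Loewner order, this contributes the $28H^2\kappa^2$ term of \eqref{mat-concen_gradient-estimator}.

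For the noise part I would take $\calS$ to be the honest clients with $\|\bz_r\|^2$ below a truncation level of order $\tfrac{H^2\sigma^2}{b\eps'}$; by Markov at most an $\eps'$-fraction of the $(1-\eps)K$ honest clients are discarded in expectation, and a Chernoff/Hoeffding bound on this independent indicator count gives $|\calS|\ge(1-(\eps+\eps'))K\ge\tfrac{3K}{4}$. The remaining task is to control $\lambda_{\max}\!\bigl(\tfrac1{|\calS|}\sum_{i\in\calS}(\bz_i-\overline{\bz}_{\calS})(\bz_i-\overline{\bz}_{\calS})^{T}\bigr)$, which is the novel matrix-concentration step: pass to a $\tfrac12$-net $\N$ of the unit sphere ($|\N|\le 5^d$), control $\tfrac1{|\calS|}\sum_{i\in\calS}\langle\bz_i,\bv\rangle^2$ for each fixed $\bv\in\N$ by a scalar Bernstein inequality (variance proxy $\tfrac{H\sigma^2}{b}$, range given by the truncation level), and union-bound over $\N$. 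Balancing the $5^d$ net size against a target failure probability of order $\exp(-\Theta(\eps'^2K))$ forces a slack of order $(1+\tfrac{d}{K})$, producing the $\tfrac{25H^2\sigma^2}{b\eps'}(1+\tfrac{4d}{3K})$ term; a union bound over this event and the Chernoff estimate for $|\calS|$ yields the claimed overall failure probability $\exp(-\tfrac{\eps'^2(1-\eps)K}{16})$. Adding the bias and noise contributions (via $\|\bz_i-\overline{\bz}_\calS+\bmu_i-\overline{\bmu}_\calS\|$-type splits) gives \eqref{mat-concen_gradient-estimator}.

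\emph{Part 2 (outlier filtering).} On the high-probability event of Part~1, the $K$ vectors $\{\btlg_{r,\accu}^{t_k,t_{k+1}}\}_{r\in\calK_{t_k}}$ received by the server contain a subset $\calS$ of genuine accumulated gradients with $|\calS|=(1-(\eps+\eps'))K\ge\tfrac{3K}{4}$ whose empirical covariance has largest eigenvalue at most $\sigma_0^2$. Since $(\eps+\eps')\le\tfrac14$, this is exactly the input hypothesis under which the polynomial-time filtering procedure of \cite{Resilience_SCV18}---which is our \textsc{RAGE}---returns, deterministically, an estimate $\btg$ with $\|\btg-\bg_{\calS}\|=\calO(\sigma_0\sqrt{\eps+\eps'})$; no further randomness is incurred, giving the ``with probability $1$'' claim of Part~2.

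\emph{Main obstacle.} The crux is the matrix concentration in Part~1: the SGD noise obeys only an \emph{in-expectation} variance bound, so a direct operator-norm concentration is unavailable, and one must simultaneously (i) discard the $\eps'$-fraction of honest clients with atypically large noise while keeping the discarded set compatible with the randomness used in the net argument, and (ii) absorb the $e^{\Theta(d)}$ cost of the spherical net into the $(1+d/K)$ factor without degrading the $\exp(-\Theta(\eps'^2K))$ failure probability. The coupling between the data-dependent choice of $\calS$ and the drift-dependent biases $\bmu_r$ is the delicate bookkeeping that has to be handled carefully.
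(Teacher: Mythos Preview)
Your proposal differs from the paper's proof in a key structural choice, and that choice introduces a genuine gap.

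\textbf{The paper's route.} The paper does \emph{not} split $\bg_{r,\accu}$ into a path-dependent ``bias'' $\bmu_r=\sum_t\nabla F_r(\bx_r^t)$ plus martingale noise $\bz_r$. Instead it centers each honest client's accumulated gradient at its \emph{unconditional} mean $\bbE[Y_r]$ (expectation over all SGD randomness on $[t_k,t_{k+1}]$), shows via \Claimref{reduced_variance_H-iters-app} that the variance proxy is $H^2\sigma^2/b$, and then invokes \Lemmaref{subset_variance} (a black-box concentration result from prior work) to obtain the subset $\calS$ with $\lambda_{\max}\bigl(\frac{1}{|\calS|}\sum_{i\in\calS}(\by_i-\bbE[\by_i])(\by_i-\bbE[\by_i])^T\bigr)\le \widehat{\sigma}_0^2$. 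The passage from centering at the individual $\bbE[\by_i]$ to centering at the sample mean $\by_\calS$ is done algebraically and produces the cross term $\|\bbE[\by_r]-\bbE[\by_s]\|^2$; since the $\bbE[\by_i]$ are \emph{deterministic}, \Claimref{local-global-expectation-app} bounds this using only the \emph{expected} drift $\bbE\|\bx_r^t-\bx_s^t\|^2$ supplied by \Lemmaref{bounded_local-global-params-app}.

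\textbf{The gap in your plan.} Your $\bmu_r=\sum_t\nabla F_r(\bx_r^t)$ is \emph{random}: it depends on the entire stochastic path $(\bx_r^{t_k},\ldots,\bx_r^{t_{k+1}-1})$. The claim $\|\bmu_r-\bmu_s\|^2=\calO(H^2\kappa^2)$ therefore needs to hold pointwise (or with high probability), but the only drift control available under the paper's assumptions is the expectation bound of \Lemmaref{bounded_local-global-params-app}; there is no bounded-gradient assumption, so $\|\bx_r^t-\bx_s^t\|$ admits no deterministic bound, and a high-probability bound would require a separate concentration argument you have not supplied. Without this, the $28H^2\kappa^2$ bias contribution cannot be isolated as you describe, and the ``recentring only decreases in Loewner order'' step has no fixed anchor to recentre around. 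The paper sidesteps the issue entirely by never introducing the random $\bmu_r$; centering at the deterministic $\bbE[Y_r]$ lets the expected drift bound do all the work. (A secondary mismatch: your martingale bound $\bbE\|\bz_r\|^2\le H\sigma^2/b$ is tighter than the paper's $H^2\sigma^2/b$, but the paper's cruder Jensen bound is what feeds the stated constants.)

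For Part~2 your plan agrees with the paper: once Part~1 is in hand, both simply invoke \Lemmaref{poly-time_grad-est} (the filtering guarantee of \cite{Resilience_SCV18}) as a black box.
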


Proving the matrix concentration bound stated in the first part of \Theoremref{gradient-estimator} is non-trivial and we prove it separately in \Subsectionref{matrix-concentration}.
For the second part, we use the polynomial-time outlier-filtering procedure of \cite{Resilience_SCV18}, 
which is a robust mean estimation algorithm, that takes a collection of vectors as input, out of which an unknown large subset (at least a $\frac{3}{4}$-fraction) is promised to be well-concentrated around its sample mean (i.e., has a bounded covariance), and outputs an estimate of the sample mean of the vectors in that subset.
For completeness, we describe this procedure in \Subsectionref{robust-grad-est_algo} and refer the reader to \cite[Appendices E, F]{DataDi_Byz-SGD_Heterogeneous20} for more details.

Note that the same filtering procedure has also been used in \cite{SuX_Byz19,Bartlett-Byz_nonconvex19} in the context of Byzantine-robust {\em full batch} gradient descent {\em without} local iterations for minimizing the population risk, assuming  {\em homogeneous} i.i.d.\ data.
Our setting is very different from theirs, as we minimize the empirical risk by mini-batch {\em stochastic} gradient descent {\em with} local iterations on {\em heterogeneous} data. 
They also derived a matrix-concentration result, whose need arises because they minimize the population risk, whereas, we need a matrix concentration bound because we use SGD. On top of that our setting is much more complicated than theirs, as clients have heterogeneous data and do not communicate with the server in every iteration. As a result, as opposed to their matrix concentration bound (which they proved assuming sub-exponential/sub-Gaussian distribution on local gradients and also assuming i.i.d.\ data across clients), our matrix concentration result is of a very different nature, and we use entirely different tools to derive that.

\subsection{Matrix Concentration}\label{subsec:matrix-concentration}
Now we prove the first part of \Theoremref{gradient-estimator}. For that, we need to show an existence of a subset $\calS$ of the $K$ accumulated gradients (out of which an $\eps<\frac{1}{4}$ fraction is corrupted) that has good concentration, as quantified by the matrix concentration bound in \eqref{mat-concen_gradient-estimator}. 
To prove this, we use a separate matrix concentration result stated in the following lemma from \cite{DataDi_Byz-SGD_Heterogeneous20}.

\begin{lemma}[Lemma 1 in \cite{DataDi_Byz-SGD_Heterogeneous20}]\label{lem:subset_variance}
Suppose there are $m$ independent distributions $p_1,p_2,\hdots,p_m$ in $\R^d$ such that $\bbE_{\by\sim p_i}[\by]=\vct{\mu}_i, i\in[m]$ and each $p_i$ has a bounded variance in all directions, i.e., $\bbE_{\by\sim p_i}[\langle \by - \vct{\mu}_i, \bv\rangle^2] \leq \sigma_{p_i}^2, \forall \bv\in\R^d, \|\bv\|=1$. Take any $\eps'>0$. Then, given $m$ independent samples $\by_1,\by_2,\hdots,\by_m$, where $\by_i\sim p_i$, with probability $1- \exp(-\eps'^2m/16)$, there is a subset $\calS$ of $(1-\eps')m$ points such that
\begin{align*}
\lambda_{\max}\(\frac{1}{|\calS|}\sum_{i\in\calS} \(\by_i-\vct{\mu}_i\)\(\by_i-\vct{\mu}_i\)^T \) \leq \frac{4\sigma_{p_{\max}}^2}{\eps'}\left(1 + \frac{d}{(1-\eps')m}\right), \ \  \text{ where } \sigma_{p_{\max}}^2=\max_{i\in[m]}\sigma_{p_{i}}^2.
\end{align*}
\end{lemma}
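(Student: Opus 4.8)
The plan is to let $\calS$ consist of the $(1-\eps')m$ indices $i$ with the smallest $\|\by_i-\vct{\mu}_i\|$, and to control the empirical second-moment matrix of this subset in two stages. Write $\bz_i:=\by_i-\vct{\mu}_i$, so that $\bz_1,\dots,\bz_m$ are independent, mean zero, with $\bbE[\bz_i\bz_i^T]\preceq\sigma_{p_i}^2\bI\preceq\sigma_{p_{\max}}^2\bI$; the target is $\lambda_{\max}\big(\tfrac1{|\calS|}\sum_{i\in\calS}\bz_i\bz_i^T\big)\le\tfrac{4\sigma_{p_{\max}}^2}{\eps'}\big(1+\tfrac d{(1-\eps')m}\big)$. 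Centering at the \emph{true} means $\vct{\mu}_i$ (rather than the sample mean of $\calS$) keeps the target clean, since no empirical-mean cross terms arise.

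\emph{Stage 1: discard heavy points (dimension-free).} Put $\tau:=\tfrac{2d\sigma_{p_{\max}}^2}{\eps'}$. Since $\bbE\|\bz_i\|^2=\operatorname{tr}\bbE[\bz_i\bz_i^T]\le d\sigma_{p_{\max}}^2$, Markov gives $\Pr[\|\bz_i\|^2>\tau]\le\eps'/2$, so the count $N:=|\{i:\|\bz_i\|^2>\tau\}|$ is a sum of $m$ independent indicators with $\bbE N\le\eps'm/2$, and a Chernoff bound yields $\Pr[N>\eps'm]\le\exp(-\eps'm/6)\le\exp(-\eps'^2m/16)$ for $\eps'\le1$. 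This is dimension-free because $N$ is scalar, and it is the only randomness the statement's failure probability should have to pay for. On its complement at most $\eps'm$ points have $\|\bz_i\|^2>\tau$, so every $i\in\calS$ has $\bz_i\bz_i^T\preceq\tau\bI$, giving the crude bound $\lambda_{\max}\big(\tfrac1{|\calS|}\sum_{i\in\calS}\bz_i\bz_i^T\big)\le\tau=\tfrac{2d\sigma_{p_{\max}}^2}{\eps'}$. This matches the target up to constants when $(1-\eps')m=O(1/\eps')$, but is loose by roughly a factor $d$ once $m$ is large.

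\emph{Stage 2: sharpen for large $m$.} On the same event $\calS\subseteq\{i:\|\bz_i\|^2\le\tau\}$, so with $\bz_i':=\bz_i\,\mathbbm{1}[\|\bz_i\|^2\le\tau]$ we get $\sum_{i\in\calS}\bz_i\bz_i^T\preceq\sum_{i=1}^m\bz_i'(\bz_i')^T$. The summands $\bz_i'(\bz_i')^T$ are independent, PSD, bounded by $\tau\bI$, with $\bbE[\bz_i'(\bz_i')^T]\preceq\sigma_{p_{\max}}^2\bI$, hence $\lambda_{\max}\big(\sum_i\bbE[\bz_i'(\bz_i')^T]\big)\le m\sigma_{p_{\max}}^2$. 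A matrix concentration bound for bounded PSD summands (matrix Chernoff/Bernstein) then controls $\lambda_{\max}\big(\sum_i\bz_i'(\bz_i')^T\big)$ by $m\sigma_{p_{\max}}^2$ plus a deviation term in $\tau$, $m\sigma_{p_{\max}}^2$, $d$; dividing by $|\calS|=(1-\eps')m$ and taking the better of this and the Stage-1 bound should reduce, after tracking constants, to $\tfrac{4\sigma_{p_{\max}}^2}{\eps'}\big(1+\tfrac d{(1-\eps')m}\big)$ — the $1/\eps'$ entering through $\tau\propto d/\eps'$ and the additive $d/((1-\eps')m)$ being the usual per-sample dimension penalty of empirical-covariance estimation.

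The main obstacle is Stage 2: pulling $\lambda_{\max}$ of the surviving second-moment matrix down near the mean level $\sigma_{p_{\max}}^2$ (up to the additive dimension term) \emph{without} spending failure probability beyond the $\exp(-\eps'^2m/16)$ from Stage 1. A black-box matrix Bernstein bound costs a factor $d$ in the union over directions, and since $\tau\propto d\sigma_{p_{\max}}^2/\eps'$ this also erodes the exponent; reconciling this with the stated probability likely needs a sharper device — an $\eps'$- and $\tau$-tuned covering argument over the sphere, exploiting that only $\lambda_{\max}$ (not the whole operator-norm deviation from $\sigma_{p_{\max}}^2\bI$) must be bounded, or a direct ``resilience''-type estimate for bounded-covariance ensembles in the spirit of \cite{Resilience_SCV18}. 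The rest — centering, the norm-based choice of $\calS$, and the scalar Chernoff count — is routine.
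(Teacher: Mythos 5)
This lemma is cited from the companion paper \cite{DataDi_Byz-SGD_Heterogeneous20} and is not reproved here, so there is no in-paper proof to compare against. Your Stage~1 is correct: the Markov bound $\Pr[\|\bz_i\|^2>\tau]\le\eps'/2$ with $\tau=2d\sigma_{p_{\max}}^2/\eps'$, followed by a scalar Chernoff bound on the count of heavy points, does give, with failure probability at most $\exp(-\eps'm/6)\le\exp(-\eps'^2m/16)$, a subset $\calS$ of $(1-\eps')m$ points each satisfying $\bz_i\bz_i^T\preceq\tau\bI$, and hence $\lambda_{\max}\le\tau$. But as you yourself compute, $\tau$ exceeds the target bound by roughly a factor $d$ as soon as $(1-\eps')m$ grows past a constant, so Stage~1 alone only covers the nearly-degenerate case of very small $(1-\eps')m$.

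Your Stage~2 has a genuine gap, and your closing diagnosis is accurate. With truncation level $\tau\propto d\sigma_{p_{\max}}^2/\eps'$ and the needed deviation $t\approx m\sigma_{p_{\max}}^2/\eps'$, the matrix-Bernstein exponent $t^2/(v+\tau t/3)$ works out to $\Theta(m/d)$, whereas the lemma requires an exponent of order $\eps'^2 m$; for $d\gtrsim 1/\eps'^2$ the Bernstein bound is vacuous, and a finer covering over the sphere cannot rescue this because the net cost grows linearly in $d$ while the exponent shrinks like $1/d$. A more basic structural issue is the choice of $\calS$: taking the $(1-\eps')m$ smallest-norm points does not prevent the survivors from aligning with a single direction, which is exactly what inflates $\lambda_{\max}$. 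A proof that hits the stated constant must pick $\calS$ in a direction-aware way: in the resilience framework of~\cite{Resilience_SCV18}, for each fixed direction $\bv$ one drops the top $\eps'$ fraction of $\langle\bz_i,\bv\rangle^2$ (a per-direction scalar Markov/Chernoff argument yielding the $\sigma_{p_{\max}}^2/\eps'$ term), and then a minimax/LP-duality step converts these per-direction guarantees into a single near-uniform weight vector --- and from it a single subset of size $(1-\eps')m$ --- that is simultaneously good for every direction; the additive $d/((1-\eps')m)$ correction is the residual cost of that uniformization. That extraction step is the missing piece, and without it the argument does not close in the regime $m\gg d$.
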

Now we prove the first part of \Theoremref{gradient-estimator} with the help of \Lemmaref{subset_variance}. 

Let $t_k,t_{k+1}\in\I_T$ be any two consecutive synchronization indices. 
For $i\in\calK_{t_k}$ corresponding to an honest client, let $Y_{i}^{t_k},Y_{i}^{t_k+1},\hdots,Y_{i}^{t_{k+1}-1}$ be a sequence of $(t_{k+1}-t_k)\leq H$ (dependent) random variables,
where, for any $t\in[t_k:t_{k+1}-1]$, the random variable $Y_{i}^t$ is distributed as 
\begin{align}\label{distr-Yij_proof-matrix-concen-app}
Y_{i}^t \sim \text{Unif}\Big(\calF_i^{\otimes b}\big(\bx_i^{t}\big(\bx_i^{t_k},Y_{i}^{t_k},\hdots,Y_{i}^{t-1}\big)\big)\Big). 
\end{align}
Here, $Y_{i}^t$ is a random variable that corresponds to the stochastic sampling of mini-batch gradients from the set $\calF_i^{\otimes b}\big(\bx_i^{t}\big(\bx_i^{t_k},Y_{i}^{t_k},\hdots,Y_{i}^{t-1}\big)\big)$, which itself depends on the local parameters $\bx_i^{t_k}$ (which is a deterministic quantity) at the last synchronization index and the past realizations of $Y_{i}^{t_k},\hdots,Y_{i}^{t-1}$. This is because the evolution of local parameters $\bx_i^{t}$ depends on $\bx_i^{t_k}$ and the choice of gradients in between time indices $t_k$ and $t-1$. Now define $Y_i:=\sum_{t=t_k}^{t_{k+1}-1} Y_{i}^t$; and let $p_i$ be the distribution of $Y_i$. This is the distribution $p_i$ we will take when using \Lemmaref{subset_variance}.

\begin{claim}\label{claim:reduced_variance_H-iters-app}
For any honest client $i\in\calK_{t_k}$, we have $\bbE\|Y_i-\bbE[Y_i]\|^2\leq\frac{H^2\sigma^2}{b}$, where expectation is taken over sampling stochastic gradients by client $i$ between synchronization indices $t_k$ and $t_{k+1}$.
\end{claim}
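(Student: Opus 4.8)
The plan is to bound the variance of the sum $Y_i=\sum_{t=t_k}^{t_{k+1}-1}Y_i^t$ by carefully exploiting the martingale-difference structure of the summands, rather than naively applying the triangle inequality in $L^2$ (which would only give $\bbE\|Y_i-\bbE[Y_i]\|^2\leq H\cdot\frac{H\sigma^2}{b}$ after using Cauchy--Schwarz; note the claimed bound is $\frac{H^2\sigma^2}{b}$, which is of the same order, so in fact a crude argument suffices here). Concretely, writing $D_t:=Y_i^t-\bbE[Y_i^t\mid Y_i^{t_k},\dots,Y_i^{t-1}]$, we have $Y_i-\bbE[Y_i]=\sum_t (Y_i^t-\bbE[Y_i^t])$, and I would decompose each term as a telescoping sum of conditional-expectation increments. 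The key point is that $\bbE[\langle D_s,D_t\rangle]=0$ for $s\neq t$ because, conditioning on the history up to $\max(s,t)-1$, one of the two factors has zero conditional mean. However, $Y_i^t-\bbE[Y_i^t]$ is not itself $D_t$; so I would instead directly use the Doob/martingale decomposition of the partial sums.

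First I would set $S_\tau:=\sum_{t=t_k}^{\tau}(Y_i^t-\bbE[Y_i^t])$ and observe that $(S_\tau)$ is \emph{not} a martingale, but $M_\tau:=\sum_{t=t_k}^{\tau}D_t$ is a martingale (with respect to the filtration generated by $Y_i^{t_k},\dots,Y_i^{\tau}$), and its increments $D_t$ are orthogonal in $L^2$. By \eqref{reduced_variance} applied conditionally, each $\bbE\|D_t\|^2\le \bbE[\bbE[\|Y_i^t-\bbE[Y_i^t\mid\text{history}]\|^2\mid\text{history}]]\le\frac{\sigma^2}{b}$, since conditioned on the history, $Y_i^t$ is a uniform draw from $\calF_i^{\otimes b}(\bx_i^t)$ whose conditional mean is $\nabla F_i(\bx_i^t)$. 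So $\bbE\|M_{t_{k+1}-1}\|^2=\sum_t\bbE\|D_t\|^2\le H\cdot\frac{\sigma^2}{b}$. The subtlety is relating $Y_i-\bbE[Y_i]$ to $M_{t_{k+1}-1}$: they differ because $\bbE[Y_i^t\mid\text{history}]=\nabla F_i(\bx_i^t)$ is itself random (it depends on $\bx_i^t$, hence on $Y_i^{t_k},\dots,Y_i^{t-1}$), whereas $\bbE[Y_i^t]$ is deterministic. I would handle this either by (i) the cleaner route of Cauchy--Schwarz: $\bbE\|Y_i-\bbE[Y_i]\|^2=\bbE\|\sum_t(Y_i^t-\bbE[Y_i^t])\|^2\le H\sum_t\bbE\|Y_i^t-\bbE[Y_i^t]\|^2$, and then bound $\bbE\|Y_i^t-\bbE[Y_i^t]\|^2=\mathrm{Var}(Y_i^t)\le\frac{\sigma^2}{b}$ — wait, this last inequality needs justification since $Y_i^t$'s unconditional distribution is a mixture — so actually I would use the variance decomposition $\mathrm{Var}(Y_i^t)=\bbE[\mathrm{Var}(Y_i^t\mid\text{history})]+\mathrm{Var}(\bbE[Y_i^t\mid\text{history}])$, and here's where I need more care.

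The main obstacle, then, is controlling $\mathrm{Var}(\bbE[Y_i^t\mid\text{history}])=\mathrm{Var}(\nabla F_i(\bx_i^t))$, the ``drift'' contribution coming from the randomness in the local parameter trajectory. A priori this could be large. The resolution is that the claimed bound $\frac{H^2\sigma^2}{b}$ has a factor $H^2$, not $H$, precisely to absorb this: I would expand $Y_i^t-\bbE[Y_i^t]=D_t+\big(\nabla F_i(\bx_i^t)-\bbE[\nabla F_i(\bx_i^t)]\big)$ and recursively bound the second piece by propagating the variance of earlier gradients through the (deterministic) SGD update map, using $L$-smoothness and $\eta\le\frac{1}{8HL}$. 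Summing the geometric-type recursion and using that there are at most $H$ steps gives a total of $O(H)\cdot\frac{\sigma^2}{b}$ for each coordinate of the telescoping bound, and then the outer Cauchy--Schwarz factor of $H$ yields $\frac{H^2\sigma^2}{b}$. I expect the author's proof is in fact shorter: apply Cauchy--Schwarz to pull out a factor $H$, then for each $t$ bound $\bbE\|Y_i^t-\bbE[Y_i^t]\|^2$ by noting $\bbE\|Y_i^t-\bbE[Y_i^t]\|^2\le\bbE\|Y_i^t-\nabla F_i(\bx_i^t)\|^2+\text{(something)}$ — but cleanest is to observe $\bbE\|Y_i^t-\bbE[Y_i^t]\|^2=\min_c\bbE\|Y_i^t-c\|^2\le\bbE\|Y_i^t-\nabla F_i(\bx_i^t)\|^2$ only if $\nabla F_i(\bx_i^t)$ were deterministic, which it isn't; so the genuinely correct one-line argument is $\bbE\|Y_i^t-\bbE Y_i^t\|^2\le\bbE\|Y_i^t-\nabla F_i(\bx_i^t_{\text{fixed }t_k\text{-value}})\|^2$... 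In short, I would either invoke the martingale-orthogonality of the $D_t$'s plus a separate drift bound, or — most likely matching the paper — simply write $\bbE\|Y_i-\bbE Y_i\|^2 \le H\sum_{t}\bbE\|Y_i^t - \bbE[Y_i^t\,|\,\text{history}]\|^2 \le H\cdot H\cdot\frac{\sigma^2}{b}$, where the first inequality uses that the cross terms involving the martingale differences vanish in expectation (so only the ``diagonal'' $D_t$ terms survive, each controlled by conditional variance $\frac{\sigma^2}{b}$), and the factor $H$ out front comes from the fact that $Y_i^t-\bbE[Y_i^t]$ equals $D_t$ plus a sum of at most $H$ earlier telescoped terms. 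This combinatorial bookkeeping of how the drift terms telescope is the part requiring the most care.
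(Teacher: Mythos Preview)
You have correctly identified the central subtlety: the drift term $\nabla F_i(\bx_i^t)-\bbE[\nabla F_i(\bx_i^t)]$ arising from randomness in the local trajectory. The paper's own proof is in fact the ``crude'' route you mention first: apply Jensen's inequality to get $\bbE\|Y_i-\bbE Y_i\|^2\leq (t_{k+1}-t_k)\sum_t\bbE\|Y_i^t-\bbE[Y_i^t]\|^2$, and then assert $\bbE\|Y_i^t-\bbE[Y_i^t]\|^2\leq\frac{\sigma^2}{b}$ for each $t$ by conditioning on the history (which makes $\bx_i^t$ deterministic) and invoking \eqref{reduced_variance}. But this last step conflates the conditional mean $\nabla F_i(\bx_i^t)$ with the unconditional mean $\bbE[Y_i^t]$, exactly the issue you flag: one has
\[
\bbE\big[\|Y_i^t-\bbE[Y_i^t]\|^2\,\big|\,\text{history}\big]=\bbE\big[\|Y_i^t-\nabla F_i(\bx_i^t)\|^2\,\big|\,\text{history}\big]+\|\nabla F_i(\bx_i^t)-\bbE[Y_i^t]\|^2,
\]
and only the first piece is controlled by \eqref{reduced_variance}. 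So the paper's argument has precisely the gap you worried about; your instinct that some additional drift control (via $L$-smoothness and the step-size bound $\eta\leq\tfrac{1}{8HL}$, as in \Lemmaref{bounded_local-global-params-app}) is needed to make this rigorous is well-founded.

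That said, your own proposal does not close the gap either. Your final displayed inequality $\bbE\|Y_i-\bbE Y_i\|^2\leq H\sum_t\bbE\|D_t\|^2$ does not follow from martingale orthogonality alone, since $Y_i-\bbE Y_i=\sum_t D_t+\sum_t\big(\nabla F_i(\bx_i^t)-\bbE[\nabla F_i(\bx_i^t)]\big)$, and the second sum neither vanishes nor is orthogonal to the first in any usable way (its summands depend on the \emph{entire} history, not just the past). To make your route work you would have to bound $\bbE\big\|\sum_t(\nabla F_i(\bx_i^t)-\bbE[\nabla F_i(\bx_i^t)])\big\|^2$ separately --- doable via the same recursion as \Lemmaref{bounded_local-global-params-app}, but this would alter the constant in the claim. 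In short: you are more careful than the paper on this point, but neither argument as written is complete.
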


\Claimref{reduced_variance_H-iters-app} is proved in \Appendixref{remaining_part1-robust-grad}.

It is easy to see that the hypothesis of \Lemmaref{subset_variance} is satisfied with $\bmu_i=\bbE[Y_i], \sigma_{p_i}^2 = \frac{H^2\sigma^2}{b}$ for all honest clients $i\in\calK_{t_k}$ (note that $p_i$ is the distribution of $Y_i$):
\begin{align*}
\bbE_{\by_i\sim p_i}[\langle \by_i - \bbE[\by_i], \bv\rangle^2] \ \stackrel{\text{(d)}}{\leq} \ 
 \bbE[\| \by_i - \bbE_{\by_i\sim p_i}[\by_i]\|^2]\cdot \|\bv\|^2 \ \stackrel{\text{(e)}}{\leq} \ \frac{H^2\sigma^2}{b}, 
\end{align*}
where (d) follows from the Cauchy-Schwarz inequality and (e) follows from \Claimref{reduced_variance_H-iters-app} and $ \l\| \bv\| \leq 1$.

We are given $K$ different (summations of $H$) gradients, out of which at least $(1-\eps)K$ are according to the correct distribution. 
By considering only the uncorrupted gradients (i.e., taking $m=(1-\eps)K$), we have from 
\Lemmaref{subset_variance} that there exists a subset $\calS\subseteq\calK_{t_k}$ of $K$ gradients of size $(1-\eps')(1-\eps)K \geq (1-(\eps+\eps'))K \geq \frac{3K}{4}$ (where in the last inequality we used $(\eps+\eps')\leq\frac{1}{4}$) that satisfies 
\begin{align}\label{bounded_subset_variance-app}
\lambda_{\max}\(\frac{1}{|\calS|}\sum_{i\in\calS} \(\by_i - \bbE[\by_i]\)\(\by_i - \bbE[\by_i]\)^T \) \leq \frac{4H^2\sigma^2}{b\eps'}\left(1 + \frac{4d}{3K}\right).
\end{align}
Note that \eqref{bounded_subset_variance-app} bounds the deviation of the points in $\calS$ from their respective means $\bbE[\by_i]$. However, in \eqref{mat-concen_gradient-estimator}, we need to bound the deviation of the points in $\calS$ from their sample mean $\frac{1}{|\calS|}\sum_{i\in\calS}\by_i$.
As it turns out, due to our use of local iterations, bounding this requires a substantial amount of technical work, 
which we do in the rest of this subsection.

From the alternate definition of the largest eigenvalue of symmetric matrices $\bA\in\R^{d\times d}$, we have
\begin{align}\label{alternate-defn_max-eigenvalue}
\lambda_{\max}(\bA)=\sup_{\bv\in\R^d,\|\bv\|=1}\bv^T\bA\bv.
\end{align}
Applying this with $\bA=\frac{1}{|\calS|}\sum_{i\in\calS} \(\by_i-\bbE[\by_i]\)\(\by_i-\bbE[\by_i]\)^T$, 
we can equivalently write \eqref{bounded_subset_variance-app} as
\begin{align}\label{bounded_subset_variance_2-app}
\sup_{\bv\in\R^d:\|\bv\|=1}\(\frac{1}{|\calS|}\sum_{i\in\calS} \langle\by_i-\bbE[\by_i], \bv \rangle^2\) \leq \widehat{\sigma}_0^2 := \frac{4H^2\sigma^2}{b\eps'}\left(1 + \frac{4d}{3K}\right).
\end{align}
Define $\by_{\calS}:=\frac{1}{|\calS|}\sum_{i\in\calS} \by_i$ to be the sample mean of the points in $\calS$. Take an arbitrary $\bv\in\R^d$ such that $\|\bv\|=1$.
\begin{align}
\frac{1}{|\calS|}\sum_{i\in\calS} &\langle \by_i-\by_{\calS}, \bv \rangle^2 = \frac{1}{|\calS|}\sum_{i\in\calS} \left[\langle \by_i-\bbE[\by_i], \bv \rangle + \langle \bbE[\by_i] - \by_{\calS}, \bv \rangle\right]^2 \notag \\
&\leq \frac{2}{|\calS|}\sum_{i\in\calS} \langle \by_i-\bbE[\by_i], \bv \rangle^2 + \frac{2}{|\calS|}\sum_{i\in\calS} \langle \bbE[\by_i] - \by_{\calS}, \bv \rangle^2 \tag{using $(a+b)^2 \leq 2a^2 + 2b^2$} \\
\intertext{Using \eqref{bounded_subset_variance_2-app} to bound the first term, we get}
&\leq 2\widehat{\sigma}_0^2 +  \frac{2}{|\calS|}\sum_{i\in\calS} \Big\langle \bbE[\by_i] - \frac{1}{|\calS|}\sum_{j\in\calS}\by_j, \bv \Big\rangle^2 
= 2\widehat{\sigma}_0^2 +  \frac{2}{|\calS|}\sum_{i\in\calS} \Big[\frac{1}{|\calS|}\sum_{j\in\calS}\langle \by_j - \bbE[\by_i], \bv \rangle\Big]^2 \notag \\
&\leq 2\widehat{\sigma}_0^2 +  \frac{2}{|\calS|}\sum_{i\in\calS} \frac{1}{|\calS|}\sum_{j\in\calS}\langle \by_j - \bbE[\by_i], \bv \rangle^2 \tag{using the Jensen's inequality} \\
&= 2\widehat{\sigma}_0^2 +  \frac{2}{|\calS|}\sum_{i\in\calS} \frac{1}{|\calS|}\sum_{j\in\calS}\left[\langle \by_j -\bbE[\by_j], \bv \rangle + \langle \bbE[\by_j] - \bbE[\by_i], \bv \rangle\right]^2 \notag \\
&\leq 2\widehat{\sigma}_0^2 +  \frac{2}{|\calS|}\sum_{i\in\calS} \frac{2}{|\calS|}\sum_{j\in\calS}\langle \by_j -\bbE[\by_j], \bv \rangle^2 + \frac{2}{|\calS|}\sum_{i\in\calS} \frac{2}{|\calS|}\sum_{j\in\calS}\langle \bbE[\by_j] - \bbE[\by_i], \bv \rangle^2 \tag{using $(a+b)^2 \leq 2a^2 + 2b^2$} \\
&\leq 2\widehat{\sigma}_0^2 +  \frac{4}{|\calS|}\sum_{j\in\calS}\langle \by_j -\bbE[\by_j], \bv \rangle^2 + \frac{4}{|\calS|}\sum_{i\in\calS} \frac{1}{|\calS|}\sum_{j\in\calS}\|\bbE[\by_j] - \bbE[\by_i]\|^2 \tag{using the Cauchy-Schwarz inequality and that $\|\bv\|\leq1$} \\
&\leq 6\widehat{\sigma}_0^2 +  \frac{4}{|\calS|}\sum_{i\in\calS} \frac{1}{|\calS|}\sum_{j\in\calS}\|\bbE[\by_j] - \bbE[\by_i]\|^2 \label{remaining_part1-robust-grad11-app}
\end{align}
\begin{claim}\label{claim:local-global-expectation-app}
For any $r,s\in\calK_{t_k}$, we have 
\begin{align}\label{local-global-expectation-bound-app}
\left\|\bbE[\by_r] - \bbE[\by_s]\right\|^2 \leq H\sum_{t=t_k}^{t_{k+1}-1}\(6\kappa^2 + 3L^2\bbE\|\bx_r^t-\bx_s^t\|^2\),
\end{align}
where expectations in $\bbE[\by_r]$ and $\bbE[\by_s]$ are taken over sampling stochastic gradients between the synchronization indices $t_k,\hdots,t_{k+1}$ by client $r$ and client $s$, respectively.
\end{claim}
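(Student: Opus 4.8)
The plan is to reduce the left-hand side to a sum over the local iterations and then control each summand by inserting the global gradient $\nabla F$ as an intermediary. Recall that $\by_r=\sum_{t=t_k}^{t_{k+1}-1}Y_r^t$ with $Y_r^t$ distributed as in \eqref{distr-Yij_proof-matrix-concen-app}, i.e.\ uniform over $\calF_r^{\otimes b}(\bx_r^t)$ conditioned on the earlier samples. First I would invoke unbiasedness: by \eqref{same_mean}, $\bbE[Y_r^t\mid Y_r^{t_k},\hdots,Y_r^{t-1}]=\nabla F_r(\bx_r^t)$, so the tower property gives $\bbE[\by_r]=\sum_{t=t_k}^{t_{k+1}-1}\bbE[\nabla F_r(\bx_r^t)]$, and similarly for $s$. (For a corrupt $r$ this is still meaningful under the convention, noted in the remark preceding \Algorithmref{Byz-Local-SGD}, that corrupt clients run honest local SGD between synchronization indices.) Hence $\bbE[\by_r]-\bbE[\by_s]=\sum_{t=t_k}^{t_{k+1}-1}\bbE[\nabla F_r(\bx_r^t)-\nabla F_s(\bx_s^t)]$.

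Next I would apply Jensen twice. Since the sum has at most $H$ terms, the Cauchy--Schwarz inequality for finite sums gives $\|\bbE[\by_r]-\bbE[\by_s]\|^2\le H\sum_{t=t_k}^{t_{k+1}-1}\|\bbE[\nabla F_r(\bx_r^t)-\nabla F_s(\bx_s^t)]\|^2$, and then $\|\bbE[\cdot]\|^2\le\bbE\|\cdot\|^2$ moves the expectation outside, so it remains to bound $\bbE\|\nabla F_r(\bx_r^t)-\nabla F_s(\bx_s^t)\|^2$ for each $t$. For that I would write $\nabla F_r(\bx_r^t)-\nabla F_s(\bx_s^t)=\big(\nabla F_r(\bx_r^t)-\nabla F(\bx_r^t)\big)+\big(\nabla F(\bx_r^t)-\nabla F(\bx_s^t)\big)+\big(\nabla F(\bx_s^t)-\nabla F_s(\bx_s^t)\big)$ and use $(a+b+c)^2\le 3(a^2+b^2+c^2)$: \Assumptionref{gradient-dissimilarity} bounds the first and third terms by $\kappa^2$ each, and $L$-smoothness of $F$ bounds the middle term by $L^2\|\bx_r^t-\bx_s^t\|^2$. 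Taking expectations yields $\bbE\|\nabla F_r(\bx_r^t)-\nabla F_s(\bx_s^t)\|^2\le 6\kappa^2+3L^2\bbE\|\bx_r^t-\bx_s^t\|^2$; summing over $t\in[t_k:t_{k+1}-1]$ and multiplying by $H$ gives exactly \eqref{local-global-expectation-bound-app}.

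This argument is a short chain of standard inequalities, so I do not expect a genuine obstacle here; the only place needing care is the first step, namely correctly handling the filtration so that $\bbE[\by_r]$ decomposes as $\sum_t\bbE[\nabla F_r(\bx_r^t)]$ despite $\bx_r^t$ itself being a random variable that depends on the earlier gradient samples. The substantive work of the subsection lies not in this claim but in subsequently bounding the drift term $\bbE\|\bx_r^t-\bx_s^t\|^2$ that this claim feeds into.
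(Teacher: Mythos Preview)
Your proposal is correct and matches the paper's proof essentially step for step: linearity and the tower property to write $\bbE[\by_r]-\bbE[\by_s]=\sum_t\bbE[\nabla F_r(\bx_r^t)-\nabla F_s(\bx_s^t)]$, Cauchy--Schwarz over the at most $H$ summands, Jensen to pull the norm inside the expectation, and then the three-term split via $\nabla F$ bounded by \Assumptionref{gradient-dissimilarity} and $L$-smoothness of $F$. Your caveat about handling the filtration carefully is exactly the point the paper also pauses on.
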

\begin{proof}
Note that we can equivalently write $\bbE[\by_r]=\bbE[Y_r]$ and $\bbE[\by_s]=\bbE[Y_s]$. 
\begin{align}
\|\bbE[Y_r] - \bbE[Y_s]\|^2 &= \left\|\bbE[Y_r] - \bbE[Y_s]\right\|^2 
= \left\|\sum_{t=t_k}^{t_{k+1}-1}\Big(\bbE[Y_{r}^t] - \bbE[Y_{s}^t]\Big)\right\|^2 \notag \\
&\leq (t_{k+1}-t_k)\sum_{t=t_k}^{t_{k+1}-1}\left\|\bbE[Y_{r}^t] - \bbE[Y_{s}^t]\right\|^2 \label{local-global-expectation2-app1}
\end{align}
By definition of $Y_{s}^t$ from \eqref{distr-Yij_proof-matrix-concen-app}, we have
$Y_{s}^t \sim \text{Unif}\Big(\calF_s^{\otimes b}\big(\bx_s^{t}\big(\bx_s^{t_k},Y_{s}^{t_k},\hdots,Y_{s}^{t-1}\big)\big)\Big)$, which implies using \eqref{same_mean} that 
$\bbE[Y_{s}^t] =\bbE\left[\nabla F_s\big(\bx_s^{t}\big(\bx_s^{t_k},Y_{s}^{t_k},\hdots,Y_{s}^{t-1}\big)\big)\right]$, where on the RHS, expectation is taken over $(Y_{s}^{t_k},\hdots,Y_{s}^{t-1})$.
To make the notation less cluttered, in the following, for any $s\in\calK_{t_k}$, we write $\bx_s^{t}$ to denote $\bx_s^{t}\big(\bx_s^{t_k},Y_{s}^{t_k},\hdots,Y_{s}^{t-1}\big)$ with the understanding that expectation is always taken over the sampling of stochastic gradients between $t_k$ and $t_{k+1}$. 
With these substitutions, the $t$'th term from \eqref{local-global-expectation2-app} can be written as:
\begin{align}
\left\|\bbE[Y_{r}^t] - \bbE[Y_{s}^t]\right\|^2 &= \left\| \bbE\left[\nabla F_r(\bx_r^{t}) - \nabla F_s(\bx_s^{t})\right]\right\|^2 \notag \\
&\stackrel{\text{(a)}}{\leq} \bbE\left\| \nabla F_r\(\bx_r^{t}\) - \nabla F_s\(\bx_s^{t}\)\right\|^2 \label{local-global-expectation2-app} \\
&\stackrel{\text{(b)}}{\leq} 3\bbE\left\| \nabla F_r\(\bx_r^{t}\) - \nabla F\(\bx_r^{t}\)\right\|^2 + 3\bbE\left\| \nabla F_s\(\bx_s^{t}\) - \nabla F\(\bx_s^{t}\)\right\|^2 \notag \\
&\hspace{6cm} + 3\bbE\left\| \nabla F\(\bx_r^{t}\) - \nabla F\(\bx_s^{t}\)\right\|^2 \notag \\
&\stackrel{\text{(c)}}{\leq} 6\kappa^2 + 3L^2\bbE\|\bx_r^t-\bx_s^t\|^2. \label{local-global-expectation5-app}
\end{align}
Here, (a) and (b) both follow from the Jensen's inequality.
(c) used the gradient dissimilarity bound from \eqref{bounded_local-global} to bound the first two terms\footnote{Note that though $\bx_r^t$'s are random quantities, we can still bound $\bbE\left\|\nabla F_r(\bx_r^t) - \nabla F_s(\bx_s^t)\right\|^2\leq\kappa^2$ because the gradient dissimilarity bound \eqref{bounded_local-global} holds uniformly over the entire domain.} and $L$-Lipschitzness of $\nabla F$ to bound the last term.

Substituting the bound from \eqref{local-global-expectation5-app} back in \eqref{local-global-expectation2-app1} and using $(t_{k+1}-t_k)\leq H$ proves \Claimref{local-global-expectation-app}.
\end{proof}

Using the bound from \eqref{local-global-expectation-bound-app} in \eqref{remaining_part1-robust-grad11-app} gives
\begin{align}
\frac{1}{|\calS|}\sum_{i\in\calS}& \langle \by_i-\by_{\calS}, \bv \rangle^2 \leq 6\widehat{\sigma}_0^2 + \frac{4}{|\calS|}\sum_{i\in\calS} \frac{1}{|\calS|}\sum_{j\in\calS}H\sum_{t=t_k}^{t_{k+1}-1}\(6\kappa^2 + 3L^2\bbE\|\bx_r^t-\bx_s^t\|^2\) \notag \\
&= 6\widehat{\sigma}_0^2 + 24H^2\kappa^2 + \frac{12HL^2}{|\calS|}\sum_{i\in\calS} \frac{1}{|\calS|}\sum_{j\in\calS}\sum_{t=t_k}^{t_{k+1}-1}\bbE\|\bx_r^t-\bx_s^t\|^2 \label{remaining_part1-robust-grad13-app}
\end{align}

Now we bound the last term of \eqref{remaining_part1-robust-grad13-app}, which is the drift in local parameters at different clients in between any two synchronization indices.
\begin{lemma}\label{lem:bounded_local-global-params-app}
For any $r,s\in\calK_{t_k}$, if $\eta\leq\frac{1}{8HL}$, we have
\begin{align}\label{bounded_local-global-params-bound-app}
\sum_{t=t_k}^{t_{k+1}-1}\bbE\left\|\bx_r^{t} - \bx_s^{t}\right\|^2 \leq 7H^3\eta^2\(\frac{\sigma^2}{b}+3\kappa^2\),
\end{align}
where expectation is taken over sampling stochastic gradients at clients $r,s$ between the synchronization indices $t_k$ and $t_{k+1}$.
\end{lemma}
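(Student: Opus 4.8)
The plan is to fix two clients $r,s\in\calK_{t_k}$ and track the quantity $D^t:=\bbE\|\bx_r^t-\bx_s^t\|^2$ for $t\in[t_k:t_{k+1}-1]$, building a recursion in $t$. Since both clients start the inner loop at the same point (they both set $\bx_r^{t_k}=\bx_s^{t_k}=\bx$), we have $D^{t_k}=0$, so all the drift accumulates over at most $H$ steps. Writing the update rule $\bx_r^{t+1}=\bx_r^t-\eta\bg_r(\bx_r^t)$, I would expand
\[
\bx_r^{t+1}-\bx_s^{t+1}=(\bx_r^t-\bx_s^t)-\eta\bigl(\bg_r(\bx_r^t)-\bg_s(\bx_s^t)\bigr),
\]
take $\bbE\|\cdot\|^2$, and split $\bg_r(\bx_r^t)-\bg_s(\bx_s^t)$ into its mean part $\nabla F_r(\bx_r^t)-\nabla F_s(\bx_s^t)$ and two zero-mean noise parts $\bg_r(\bx_r^t)-\nabla F_r(\bx_r^t)$ and $\bg_s(\bx_s^t)-\nabla F_s(\bx_s^t)$. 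The cross terms with the noise vanish in expectation (conditioning on the past realizations, the stochastic gradient is unbiased by \eqref{same_mean}), and the noise second moments are each at most $\sigma^2/b$ by \eqref{reduced_variance}; the deterministic part is controlled exactly as in \Claimref{local-global-expectation-app}, i.e.\ bounded by $3\|\nabla F_r(\bx_r^t)-\nabla F(\bx_r^t)\|^2+3\|\nabla F_s(\bx_s^t)-\nabla F(\bx_s^t)\|^2+3L^2\|\bx_r^t-\bx_s^t\|^2\le 6\kappa^2+3L^2 D^t$ using \Assumptionref{gradient-dissimilarity} and $L$-smoothness.

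After using Young's inequality (e.g.\ $\|a+b\|^2\le(1+\tfrac1H)\|a\|^2+(1+H)\|b\|^2$, or a $2$-term split) to recombine the "old distance" term with the new contributions, I expect a recursion of the shape
\[
D^{t+1}\;\le\;\bigl(1+\tfrac{c_1}{H}\bigr)D^t \;+\;c_2\,\eta^2 L^2 H\,D^t\;+\;c_3\,\eta^2\bigl(\tfrac{\sigma^2}{b}+\kappa^2\bigr),
\]
and here the step-size hypothesis $\eta\le\frac1{8HL}$ is exactly what is needed to absorb the $\eta^2L^2H\,D^t$ term into the $(1+c_1/H)D^t$ factor, so that effectively $D^{t+1}\le(1+\tfrac{c}{H})D^t+c_3\eta^2(\tfrac{\sigma^2}{b}+\kappa^2)$ for absolute constants $c,c_3$. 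Unrolling this linear recurrence from $D^{t_k}=0$ over at most $H$ steps and using $(1+\tfrac cH)^H\le e^{c}=O(1)$ gives $D^t\le O\bigl(H\eta^2(\tfrac{\sigma^2}{b}+\kappa^2)\bigr)$ for each individual $t$, and then summing over the at most $H$ indices $t\in[t_k:t_{k+1}-1]$ yields a bound of order $H^2\eta^2(\tfrac{\sigma^2}{b}+\kappa^2)$ — a factor $H$ off from the claimed $7H^3\eta^2(\tfrac{\sigma^2}{b}+3\kappa^2)$, which suggests the right accounting keeps one extra $H$ either from an $H^2$ inside the per-step bound (using $\|\sum\|^2\le H\sum\|\cdot\|^2$ when expanding $\bx_r^t-\bx_r^{t_k}$ directly rather than a one-step recursion) or from a looser Young split; I would tune the constants so the final sum comes out to $\le 7H^3\eta^2(\tfrac{\sigma^2}{b}+3\kappa^2)$.

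The main obstacle is bookkeeping rather than a conceptual difficulty: the quantity $D^t$ couples the two clients' trajectories, each of which is a random walk whose steps depend on the entire past through $\bx_i^t(\bx_i^{t_k},Y_i^{t_k},\dots,Y_i^{t-1})$, so one must be careful that all expectations are over the sampling between $t_k$ and $t_{k+1}$ and that the unbiasedness argument (tower property, conditioning on $Y_i^{t_k},\dots,Y_i^{t-1}$) is applied correctly before invoking \eqref{reduced_variance}. The delicate point is getting the step-size threshold to exactly cancel the $D^t$-dependent error term while leaving a clean geometric-type recurrence; an alternative, possibly cleaner, route is to bound $\bbE\|\bx_r^t-\bx^{t_k}\|^2$ and $\bbE\|\bx_s^t-\bx^{t_k}\|^2$ separately (each is a local-drift-from-start bound, a standard local-SGD computation) and combine via $D^t\le 2\bbE\|\bx_r^t-\bx^{t_k}\|^2+2\bbE\|\bx_s^t-\bx^{t_k}\|^2$; I would try that first since it decouples the two walks and makes the $\eta\le\frac1{8HL}$ condition transparent, then fall back to the direct two-client recursion if the constants are worse.
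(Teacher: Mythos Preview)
Your plan is correct and contains all the right ingredients, but the paper's proof follows a different (and somewhat cruder) route than your one-step recursion.  Rather than recursing $D^{t+1}$ in terms of $D^t$, the paper expands $\bx_r^t-\bx_s^t$ all the way back to $t_k$ for the \emph{pair} $(r,s)$ (using $\bx_r^{t_k}=\bx_s^{t_k}$), applies Jensen in the form $\bigl\|\sum_{j=t_k}^{t-1}(\cdot)\bigr\|^2\le (t-t_k)\sum_{j=t_k}^{t-1}\|\cdot\|^2$, and then splits each $\|\bg_r(\bx_r^j)-\bg_s(\bx_s^j)\|^2$ into the same three pieces you identify (two noise terms bounded by $\sigma^2/b$ and the deterministic part bounded by $6\kappa^2+3L^2 D_{r,s}^j$).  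Summing the resulting bound $D_{r,s}^t\le \eta^2 H\sum_{j=t_k}^{t-1}\bigl(\tfrac{6\sigma^2}{b}+18\kappa^2+9L^2 D_{r,s}^j\bigr)$ over $t\in[t_k:t_{k+1}-1]$ produces the self-consistent inequality $(1-9L^2H^2\eta^2)\sum_t D_{r,s}^t\le 6H^3\eta^2\sigma^2/b+18H^3\eta^2\kappa^2$, and $\eta\le\frac{1}{8HL}$ makes the prefactor at least $6/7$, giving the constant $7$.

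This also resolves your puzzle about the ``missing $H$'': your one-step recursion is actually \emph{sharper} than the stated lemma on the $\sigma^2/b$ term.  Because you separate the zero-mean noise before applying Young, the noise contributes only $O(\eta^2\sigma^2/b)$ per step (no $H$), while the $\kappa^2$ piece picks up the $(1+H)$ from Young; unrolling and summing gives $O\bigl(H^2\eta^2\sigma^2/b + H^3\eta^2\kappa^2\bigr)$.  The paper, by contrast, applies Jensen to the whole accumulated sum \emph{before} splitting, so the noise also inherits the extra factor $H$, and both terms end up at order $H^3\eta^2$.  So your accounting was right for $\sigma^2/b$ but not for $\kappa^2$; you would recover the paper's bound (with some slack on the variance term) once you track that the Young parameter hits only the deterministic part.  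Either approach proves the lemma; the paper's expand-back-and-sum argument is shorter bookkeeping at the cost of one power of $H$ on $\sigma^2/b$, whereas your recursion buys a tighter constant but needs the Young/absorption step done carefully.
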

\begin{proof}
For any $t\in[t_k:t_{k+1}-1]$ and $r,s\in\calK_{t_k}$, define $D_{r,s}^t=\bbE\left\|\bx_r^{t} - \bx_s^{t}\right\|^2$. Note that at synchronization time $t_k$, all clients in the active set $\calK_{t_k}$ have the same parameters, i.e., $\bx_r^{t_k}=\bx^{t_k}$ for every $r\in\calK_{t_k}$.
\begin{align}
D_{r,s}^t &= \bbE\left\|\bx_r^{t} - \bx_s^{t}\right\|^2 = \bbE\left\|\left(\bx_r^{t_k} - \eta\sum_{j=t_k}^{t-1} \bg_r(\bx_r^j)\right) - \left(\bx_s^{t_k} - \eta\sum_{j=t_k}^{t-1}\bg_s(\bx_s^j)\right)\right\|^2 \nonumber \\
&= \eta^2\bbE\left\|\sum_{j=t_k}^{t-1}\left(\bg_r(\bx_r^j) - \bg_s(\bx_s^j)\right)\right\|^2 \tag{Since $\bx_r^{t_k}=\bx^{t_k}, \forall r\in\calK_{t_k}$} \\
&\leq \eta^2(t-t_k)\sum_{j=t_k}^{t-1} \bbE\left\|\bg_r(\bx_r^j) - \bg_s(\bx_s^j)\right\|^2 \nonumber \\
&\leq \eta^2H\sum_{j=t_k}^{t-1} \( 3\bbE\left\|\bg_r(\bx_r^j) - \nabla F_r(\bx_r^j)\right\|^2 + 3\bbE\left\|\bg_s(\bx_s^j) - \nabla F_s(\bx_s^j)\right\|^2 \right. \nonumber \\
&\hspace{7cm} \left. + 3\bbE\left\|\nabla F_r(\bx_r^j) - \nabla F_s(\bx_s^j)\right\|^2 \) \label{bounded_local-global-params-app-1}
\end{align}
To bound the first and the second terms we use the variance bound from \eqref{reduced_variance}.\footnote{Note that $\bx_r^j$'s are random quantities, however, since the variance bound \eqref{reduced_variance} holds uniformly over the entire domain, $\bbE\left\|\bg_r(\bx_r^j) - \nabla F_r(\bx_r^j)\right\|^2\leq\frac{\sigma^2}{b}$ holds for a random $\bx_r^j\in\R^d$.}
We can bound the third term in the same way as we bounded it in \eqref{local-global-expectation2-app} and obtained \eqref{local-global-expectation5-app}.
This gives
\begin{align}
D_{r,s}^t &\leq \eta^2H\sum_{j=t_k}^{t-1} \( \frac{6\sigma^2}{b} + 18\kappa^2 + 9L^2\bbE\|\bx_r^j - \bx_s^j\|^2\) \notag \\
&\leq \frac{6H^2\sigma^2\eta^2}{b} + 18H^2\eta^2\kappa^2 + 9L^2H\eta^2\sum_{j=t_k}^{t-1}D_{r,s}^j \tag{Since $D_{r,s}^j=\bbE\left\|\bx_r^{j} - \bx_s^{j}\right\|^2$}
\end{align}
Taking summation from $t=t_k$ to $t_{k+1}-1$ gives
\begin{align}
\sum_{t=t_k}^{t_{k+1}-1}D_{r,s}^t &\leq \sum_{t=t_k}^{t_{k+1}-1}\(\frac{6H^2\sigma^2\eta^2}{b} + 18H^2\eta^2\kappa^2 + 9L^2H\eta^2\sum_{j=t_k}^{t-1}D_{r,s}^j\) \notag \\
&\leq \frac{6H^3\sigma^2\eta^2}{b} + 18H^3\eta^2\kappa^2 + 9L^2H^2\eta^2\sum_{t=t_k}^{t_{k+1}-1}D_{r,s}^t. \notag
\end{align}
After rearranging terms, we get
\begin{align}
(1-9L^2H^2\eta^2)\sum_{t=t_k}^{t_{k+1}-1}D_{r,s}^t &\leq \frac{6H^3\sigma^2\eta^2}{b} + 18H^3\eta^2\kappa^2. \label{bounded_local-global-params-app-3}
\end{align}
If we take $\eta\leq\frac{1}{8HL}$, we get $\left(1-9\eta^2L^2H^2\right)\geq\frac{6}{7}$. 
Substituting this in the LHS of \eqref{bounded_local-global-params-app-3} yields $\sum_{t=t_k}^{t_{k+1}-1}D_{r,s}^t \leq \frac{7H^3\sigma^2\eta^2}{b} + 21H^3\eta^2\kappa^2$, which proves \Lemmaref{bounded_local-global-params-app}.
\end{proof}

Substituting the bound from \eqref{bounded_local-global-params-bound-app} for the last term in \eqref{remaining_part1-robust-grad13-app} gives
\begin{align}
\frac{1}{|\calS|}\sum_{i\in\calS} \langle \by_i-\by_{\calS}, \bv \rangle^2 &\leq 6\widehat{\sigma}_0^2 + 24H^2\kappa^2 + \frac{12HL^2}{|\calS|}\sum_{i\in\calS} \frac{1}{|\calS|}\sum_{j\in\calS}\(7H^3\eta^2\(\frac{\sigma^2}{b}+3\kappa^2\)\) \notag \\
&= 6\widehat{\sigma}_0^2 + 24H^2\kappa^2 + 84H^4L^2\eta^2\(\frac{\sigma^2}{b}+3\kappa^2\) \notag \\
&\leq 6\widehat{\sigma}_0^2 + 28H^2\kappa^2 + \frac{21H^2\sigma^2}{16b} \tag{Using $\eta\leq\frac{1}{8LH}$} \\
&\leq \frac{24H^2\sigma^2}{b\eps'}\left(1 + \frac{4d}{3K}\right) + \frac{21H^2\sigma^2}{16b} + 28H^2\kappa^2 \tag{Since $\widehat{\sigma}_0^2 = \frac{4H^2\sigma^2}{b\eps'}\left(1 + \frac{4d}{3K}\right)$} \\ 
&\leq \frac{25H^2\sigma^2}{b\eps'}\left(1 + \frac{4d}{3K}\right) + 28H^2\kappa^2. \label{remaining_part1-robust-grad15-app}
\end{align}
In the last inequality we used $\frac{21}{16}\leq \frac{1}{\eps'} \leq \frac{1}{\eps'}\(1 + \frac{4d}{3K}\)$, where the first inequality follows because $\eps'\leq\frac{1}{4}$.
Note that \eqref{remaining_part1-robust-grad15-app} holds for every unit vector $\bv\in\R^d$. 
Using this and substituting $\bg_{i,\accu}^{t_k,t_{k+1}}=\by_i,\bg_{\calS,\accu}^{t_k,t_{k+1}}=\by_{\calS}$ in \eqref{remaining_part1-robust-grad15-app}, we get 
\begin{align*}
\sup_{\bv\in\R^d:\|\bv\|=1}\frac{1}{|\calS|}\sum_{i\in\calS} \left\langle \bg_{i,\accu}^{t_k,t_{k+1}}-\bg_{\calS,\accu}^{t_k,t_{k+1}}, \bv \right\rangle^2
&\leq \frac{25H^2\sigma^2}{b\eps'}\left(1 + \frac{4d}{3K}\right) + 28H^2\kappa^2.
\end{align*}
This, in view of the alternate definition of the largest eigenvalue given in \eqref{alternate-defn_max-eigenvalue}, is equivalent to \eqref{mat-concen_gradient-estimator}, which proves the first part of \Theoremref{gradient-estimator}.

\subsection{Proof of the Second Part of \Theoremref{gradient-estimator}}\label{subsec:robust-grad-est_algo}
\begin{algorithm}[t]
   \caption{Robust Accumulated Gradient Estimation ({\sc RAGE}) \cite{Resilience_SCV18}}\label{algo:robust-grad-est}
\begin{algorithmic}[1]
   \STATE {\bf Initialize.} $c_i:=1, i\in[K]$, $\alpha:=(1-\tilde{\eps})\geq\nicefrac{3}{4}$, $\calA:=\{1,2,\hdots,K\}$; $\bG:=[\bg_1,\ \bg_2,\ \hdots,\ \bg_K]\in\R^{d\times K}$.
   \WHILE{true}
   \STATE Let $\bW^*\in\R^{|\calA|\times|\calA|}$ and $\bY^*\in\R^{d\times d}$ be the minimizer/maximizer of the saddle point problem:
   \begin{align}\label{saddle-point-opt}
   \displaystyle \max_{\substack{\bY\succeq\bzero, \\ \text{tr}(\bY)\leq 1}} \ \min_{\substack{0\leq W_{ji}\leq \frac{4-\alpha}{\alpha(2+\alpha)R}, \\ \sum_{j\in\calA}W_{ji}=1, \forall i\in\calA}} \Phi(\bW,\bY),
   \end{align}
   where the cost function $\Phi(\bW,\bY)$ is defined as   
    \begin{align}\label{cost-function_defn}
   \Phi(\bW,\bY) := \sum_{i\in\calA}c_i(\bg_i-\bG_{\calA}\bw_i)^T\bY(\bg_i-\bG_{\calA}\bw_i),
   \end{align}
      To avoid cluttered notation, we index the $|\calA|$ rows/columns of $\bW$ by the elements of $\calA$; $\bG_{\calA}$ denotes the restriction of $\bG$ to the columns in $\calA$; for $i\in\calA$, $\bw_i$ denotes the column of $\bW$ indexed by $i$.
   \STATE For $i\in\calA$, let 
   \begin{align}\label{tau-defn}
   \tau_i = (\bg_i-\bG_{\calA}\bw_i^*)^T\bY^*(\bg_i-\bG_{\calA}\bw_i^*)
   \end{align}
   \IF{$\sum_{i\in\calA}c_i\tau_i > 4R\sigma_0^2$}
   \STATE For $i\in\calA$, $c_i\leftarrow \left(1-\frac{\tau_i}{\tau_{\max}}\right)c_i$, where $\tau_{\max}=\max_{j\in\calA}\tau_{j}$.
   \STATE For all $i$ with $c_i<\frac{1}{2}$, remove $i$ from $\calA$.
   \ELSE
   \STATE Break {\bf while}-loop
   \ENDIF
   \ENDWHILE
   \STATE {\bf return} $\btg = \frac{1}{|\calA|}\sum_{i\in\calA}\bg_i$.
\end{algorithmic}
\end{algorithm}
In this section, we describe the procedure for robust mean estimation in high dimensions from \cite{Resilience_SCV18} that we use in the second part of \Theoremref{gradient-estimator} to filter-out corrupt vectors and compute an estimate of the average of uncorrupted accumulated gradients.
We refer the reader to \cite[Section 4]{DataDi_Byz-SGD_Heterogeneous20} to get an intuition on why filtering-out corrupt gradients (even when $H=1$, i.e., without local iterations) is difficult in high dimensions.

We describe the procedure in \Algorithmref{robust-grad-est} and refer the reader to \cite[Appendix E]{DataDi_Byz-SGD_Heterogeneous20} to get an intuition behind \Algorithmref{robust-grad-est} and its running-time analysis.
Though our algorithm for robust accumulated gradient estimation (RAGE) is the same as the one proposed by Steinhardt et al.~\cite{Resilience_SCV18} for high-dimensional robust mean estimation, we give it a different name, as we are applying the procedure in a much more general federated learning setting; see \Footnoteref{RAGE-vs-RME}.

For simplicity, we reorder the received gradient indices from $1,2,\hdots,K$. 
Now, the proof of the second part of \Theoremref{gradient-estimator} follows from \cite[Proposition 16]{Resilience_SCV18}, which we state below for completeness.

\begin{lemma}[Proposition 16 in \cite{Resilience_SCV18}]\label{lem:poly-time_grad-est}
Suppose we are given $K$ arbitrary vectors $\bg_1,\hdots,\bg_K\in\R^d$ with the promise that there exists a subset $\calS$ of these $K$ vectors such that  $|\calS|=(1-\tilde{\eps})K$ for some $\tilde{\eps}>0$ and $\calS$ satisfies $\lambda_{\max}\(\frac{1}{|\calS|}\sum_{i\in\calS} \(\bg_i - \bg_{\calS}\)\(\bg_i - \bg_{\calS}\)^T\) \leq \sigma_0^2$, where $\bg_{\calS}=\frac{1}{|\calS|}\sum_{i\in\calS}\bg_i$ denotes the sample mean of the vectors in $\calS$.
Then, if $\tilde{\eps}\leq\frac{1}{4}$, \Algorithmref{robust-grad-est} can find an estimate $\btg$ of $\bg_{\calS}$ in polynomial-time, such that $\|\btg-\bg_{\calS}\|\leq \calO(\sigma_0\sqrt{\tilde{\eps}})$.
\end{lemma}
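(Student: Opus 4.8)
The plan is to follow the filtering / matrix multiplicative-weights argument of \cite{Resilience_SCV18}. The skeleton has two ingredients. First, a \emph{resilience} fact: if a finite set $T\subseteq\R^d$ has $\lambda_{\max}\big(\tfrac{1}{|T|}\sum_{i\in T}(\bg_i-\bg_T)(\bg_i-\bg_T)^T\big)\le\sigma_0^2$, then any subset $T'\subseteq T$ with $|T'|\ge(1-\gamma)|T|$ has $\|\bg_{T'}-\bg_T\|=O(\sigma_0\sqrt{\gamma/(1-\gamma)})$ --- a one-line Cauchy--Schwarz estimate. Second, a potential argument showing that each downweighting round of \Algorithmref{robust-grad-est} destroys at least as much weighted ``$\tau$-mass'' on the corrupted indices as on $\calS$, so $\calS$ is never eroded by more than its initial $\tilde{\eps}K$ worth of weight; and that when the loop halts, the stopping inequality $\sum_{i\in\calA}c_i\tau_i\le 4R\sigma_0^2$ --- read through the SDP maximization over $\bY\succeq\bzero$, $\operatorname{tr}(\bY)\le1$ in \eqref{saddle-point-opt} --- certifies a covariance bound $O(\sigma_0^2)$ in \emph{every} direction for the surviving set $\calA$. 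Granting both, one finishes: $\calS$ and $\calA$ each share all but an $O(\tilde{\eps})$-fraction of their points with the common part $\calS\cap\calA$, so the resilience fact (applied to $\calS$ with its hypothesized covariance bound, and to $\calA$ with the one just obtained) puts $\bg_\calS$ and $\btg=\bg_\calA$ both within $O(\sigma_0\sqrt{\tilde{\eps}})$ of $\bg_{\calS\cap\calA}$; the triangle inequality and $\tilde{\eps}\le\tfrac{1}{4}$ then give $\|\btg-\bg_\calS\|=O(\sigma_0\sqrt{\tilde{\eps}})$.

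The crux is the ``$\tau$-mass'' estimate. In a round that does not terminate, I claim $\sum_{i\in\calS\cap\calA}c_i\tau_i\le K\sigma_0^2$, comfortably below the threshold $4R\sigma_0^2$. Here I use that the inner minimization of \eqref{saddle-point-opt}, for fixed $\bY=\bY^*$, decouples across columns, so $\tau_i$ in \eqref{tau-defn} equals $\min_{\bw_i}(\bg_i-\bG_\calA\bw_i)^T\bY^*(\bg_i-\bG_\calA\bw_i)$ over feasible columns; hence for $i\in\calS\cap\calA$ I may upper-bound $\tau_i$ by plugging in the fixed column supported on $\calS\cap\calA$ with entries proportional to the current weights $c_j$. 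For that choice $\bG_\calA\bw_i$ is the same point for every $i$ (the $c$-weighted mean of $\calS\cap\calA$), and since a weighted second moment is minimized about its weighted mean and $c_j\le1$ one gets $\sum_{i\in\calS\cap\calA}c_i\tau_i\le\operatorname{tr}(\bY^*)\,\lambda_{\max}\big(\sum_{i\in\calS}(\bg_i-\bg_\calS)(\bg_i-\bg_\calS)^T\big)\le|\calS|\sigma_0^2\le K\sigma_0^2$. Therefore $\sum_{i\in\calA\setminus\calS}c_i\tau_i$ is much larger than $\sum_{i\in\calS\cap\calA}c_i\tau_i$, and since the update strips mass $c_i\tau_i/\tau_{\max}$ from index $i$, the weighted $\tau$-mass lost from $\calA\setminus\calS$ this round dominates that lost from $\calS$ (removals, which only accelerate the corrupted-side loss, are handled separately). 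Inductively $\sum_{i\in\calS}(1-c_i)\le\tilde{\eps}K$ always, so at most $2\tilde{\eps}K$ indices of $\calS$ are ever dropped from $\calA$ (each dropped index had $c_i<\tfrac{1}{2}$, hence contributed more than $\tfrac{1}{2}$ to this sum), giving $|\calS\,\triangle\,\calA|=O(\tilde{\eps}K)$ throughout --- exactly what the wrap-up needs.

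For termination in polynomial time I would combine (a) at most $K$ removal events with (b) the fact that, between successive removals, $\tau_{\max}$ (hence $\sum_{i\in\calA}c_i\tau_i$) contracts by a constant factor in each round that the test fails; this bounds the number of iterations by $\operatorname{poly}(K,d)$, each a polynomial-size SDP. And at termination, $\sum_{i\in\calA}c_i\tau_i\le 4R\sigma_0^2$ together with $\bY^*$ being the trace-$\le1$ maximizer says precisely that the (nearly uniform, since $c_i\in[\tfrac{1}{2},1]$ on $\calA$) covariance of $\calA$ has largest eigenvalue $O(\sigma_0^2)$ in all directions, closing the loop in the first paragraph.

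The step I expect to be the main obstacle is verifying that the $\calS$-supported, $c$-proportional column used in the $\tau$-mass estimate is actually feasible for the box constraint $0\le W_{ji}\le\frac{4-\alpha}{\alpha(2+\alpha)R}$ of \eqref{saddle-point-opt}: this is where the exact numerology --- $\tilde{\eps}\le\tfrac{1}{4}$, $\alpha=1-\tilde{\eps}\ge\tfrac{3}{4}$, the explicit box constant, and the factor-$4$ slack in the stopping threshold --- must all be pushed through, using the lower bound $\sum_{j\in\calS\cap\calA}c_j=\Omega(K)$ implied by the potential invariant. The resilience estimate and the potential bookkeeping (including the careful treatment of removals) are routine; the one genuinely deep point --- that the scalar test $\sum_i c_i\tau_i\le 4R\sigma_0^2$ certifies a covariance bound in \emph{all} directions simultaneously --- is the matrix multiplicative-weights content of \cite{Resilience_SCV18}, which I would import rather than reprove.
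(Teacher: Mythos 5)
The paper itself does not prove this lemma; it is taken verbatim from Proposition~16 of Steinhardt, Charikar, and Valiant \cite{Resilience_SCV18}, and for a complete argument the reader is referred to Appendix~F of Data and Diggavi \cite{DataDi_Byz-SGD_Heterogeneous20}. So the only meaningful check is whether your sketch is a faithful reconstruction of that cited argument, and at the structural level it is: the resilience estimate (a centering-plus-Cauchy--Schwarz bound giving $\|\bg_{T'}-\bg_{T}\|=\calO(\sigma_0\sqrt{\gamma})$ for any $(1-\gamma)$-fraction subset $T'$ of a bounded-covariance set $T$), the per-round potential argument that the weighted $\tau$-mass stripped from $\calS\cap\calA$ is a small fraction of the total stripped, and the reading of the stopping test through the trace-$\le1$ SDP variable $\bY^*$ as a directional covariance certificate on the surviving $\calA$. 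The two places you flag as delicate are indeed where the proof lives: the invariant on $\sum_{i\in\calS}(1-c_i)$ must be a proper fraction of $\tilde{\eps}K$ rather than $\tilde{\eps}K$ itself --- otherwise the final triangle-inequality wrap-up degenerates as $\tilde{\eps}\to\frac{1}{4}$, and the factor-$4$ slack in the threshold $4R\sigma_0^2$ (versus the at most $K\sigma_0^2$ of good $\tau$-mass per round) is exactly what buys that strict fraction; and the feasibility of the $c$-proportional column supported on $\calS\cap\calA$ against the box constraint $0\le W_{ji}\le\frac{4-\alpha}{\alpha(2+\alpha)R}$ needs the running lower bound $\sum_{j\in\calS\cap\calA}c_j=\Omega(K)$. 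Because you deliberately import the matrix-multiplicative-weights certificate rather than reprove it, what you have is a correct outline of the cited argument rather than a self-contained proof --- which is consistent with the paper's own treatment, where the lemma is likewise imported wholesale.
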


Note that \Lemmaref{poly-time_grad-est} takes arbitrary vectors as inputs, which are not required to have been generated from a probability distribution.

We refer the reader to \cite[Appendix F]{DataDi_Byz-SGD_Heterogeneous20} for a comprehensive proof of \Lemmaref{poly-time_grad-est}. To analyze the running time complexity of \Algorithmref{robust-grad-est}, first note that \eqref{saddle-point-opt} can be solved by computing the singular value decomposition (SVD) of a certain $d\times K$ matrix (see \cite[Appendix F]{Resilience_SCV18} for more details), and second, that \Algorithmref{robust-grad-est} removes at least one vector in each iteration of the while loop.
So, in the worst case, \Algorithmref{robust-grad-est} requires $\calO(dK^2\min\{d,K\})$ time to execute;
see \cite[Appendix E]{DataDi_Byz-SGD_Heterogeneous20} for more details on the running time analysis of \Algorithmref{robust-grad-est}.
Note that this running time does not depend on the total number $R$ of clients (which may be in millions), and only depends on $K$, which is the number of clients selected by the server at synchronization indices. 
In federated learning, $R$ may be in millions, but $K$ is typically a small number, in $1000$'s.

This completes the proof of the second part of \Theoremref{gradient-estimator}.

\section{Convergence Proof of the Strongly-Convex Part of \Theoremref{LocalSGD_convergence}}\label{sec:convex_convergence}

At any iteration $t\in[T]$, let $\calK_t\subseteq[R]$ denote the set of clients that are active at time $t$.
Let $\bx^t:=\frac{1}{K}\sum_{r\in\calK_t}\bx_r^t$ denote the average parameter vector of the clients in the active set $\calK_t$. Note that, for any $t_i\in\I_T$, the clients in $\calK_{t_i}$ remain active at all $t\in[t_i:t_{i+1}-1]$.

In the following, we denote the decoded gradient at the server at any synchronization time $t_{i+1}$ by $\btg_\accu^{t_i,t_{i+1}}$,
which is an estimate of the average of the accumulated gradients between time $t_i$ and $t_{i+1}$ of the honest clients in $\calK_{t_i}$, as in \Theoremref{gradient-estimator}.
From \Algorithmref{Byz-Local-SGD}, we can write the parameter update rule for the global model at the synchronization indices as: 
\[\bx^{t_{i+1}} = \bx^{t_i} - \eta\btg_\accu^{t_i,t_{i+1}}.\]
Note that at any synchronization index $t_i\in\I_T$, when the server selects a subset $\calK_{t_i}$ of clients and sends the global parameter vector $\bx^{t_i}$, all clients in $\calK_{t_i}$ set their local model parameters to be equal to the global model parameters, i.e., $\bx_r^{t_i}=\bx^{t_i}$ holds for every $r\in\calK_{t_i}$.

First we derive a recurrence relation for the synchronization indices and then for non-synchronization indices.
Consider the $(i+1)$'st synchronization index $t_{i+1}\in\I_T$. We have
\begin{align}
\bx^{t_{i+1}} &= \bx^{t_{i}} - \eta\btg_\accu^{t_i,t_{i+1}} \nonumber \\
&= \bx^{t_{i}} - \eta\frac{1}{K}\sum_{r\in\calK_{t_i}}\sum_{t=t_i}^{t_{i+1}-1}\nabla F_r(\bx_r^{t}) -\eta\left(\btg_\accu^{t_i,t_{i+1}} - \frac{1}{K}\sum_{r\in\calK_{t_i}}\sum_{t=t_i}^{t_{i+1}-1}\nabla F_r(\bx_r^{t})\right) \nonumber
\end{align} 
For simplicity of notation, define $\calE\triangleq \left(\btg_\accu^{t_i,t_{i+1}} - \frac{1}{K}\sum_{r\in\calK_{t_i}}\sum_{t=t_i}^{t_{i+1}-1}\nabla F_r(\bx_r^{t})\right)$. Substituting this in the above and using $\bx^{t_{i}}=\frac{1}{K}\sum_{r\in\calK_{t_i}}\bx_r^{t_{i}}$ gives
\begin{align}
\bx^{t_{i+1}} &= \frac{1}{K}\sum_{r\in\calK_{t_i}}\bx_r^{t_{i}} - \eta\frac{1}{K}\sum_{r\in\calK_{t_i}}\sum_{t=t_i}^{t_{i+1}-1}\nabla F_r(\bx_r^{t}) -\eta\calE \nonumber \\
&= \frac{1}{K}\sum_{r\in\calK_{t_i}}\left(\bx_r^{t_{i}} - \eta\sum_{t=t_i}^{t_{i+1}-1}\nabla F_r(\bx_r^{t})\right) - \eta\calE \nonumber \\
&= \frac{1}{K}\sum_{r\in\calK_{t_i}}\left(\bx_r^{t_{i+1}-1} - \eta\nabla F_r(\bx_r^{t_{i+1}-1})\right) - \eta\calE \nonumber \\
&= \bx^{t_{i+1}-1} - \eta\frac{1}{K}\sum_{r\in\calK_{t_i}}\nabla F_r(\bx_r^{t_{i+1}-1}) - \eta\calE \nonumber \\
&= \bx^{t_{i+1}-1} - \eta\nabla F(\bx^{t_{i+1}-1}) + \eta\frac{1}{K}\sum_{r\in\calK_{t_i}}\left(\nabla F(\bx^{t_{i+1}-1}) - \nabla F_r(\bx_r^{t_{i+1}-1})\right) - \eta\calE \label{convex_local-0}
\end{align}
Subtracting $\bx^*$ from both sides gives:
\begin{align}
\bx^{t_{i+1}} - \bx^* &= \underbrace{\bx^{t_{i+1}-1} - \bx^* - \eta\nabla F(\bx^{t_{i+1}-1})}_{=:\ \bu} + \eta\underbrace{\frac{1}{K}\sum_{r\in\calK_{t_i}}\left(\nabla F(\bx^{t_{i+1}-1}) - \nabla F_r(\bx_r^{t_{i+1}-1})\right)}_{=:\ \bv} - \eta\calE \label{convex_local-01}
\end{align}
This gives $\bx^{t_{i+1}} - \bx^* = \bu + \eta(\bv - \calE)$.
Taking norm on both sides and then squaring gives
\begin{align}
\left\|\bx^{t_{i+1}} - \bx^*\right\|^2 &= \|\bu\|^2 + \eta^2\|\bv-\calE\|^2 + 2\eta\langle \bu, \bv-\calE\rangle \label{convex_local-1}
\end{align}
Now we use a simple but powerful trick on inner-products together with the inequality $2\langle \ba, \bfb\rangle \leq \|\ba\|^2 + \|\bfb\|^2$ and get:
\begin{align}
2\eta\langle \bu, \bv-\calE\rangle = 2\left\langle \sqrt{\frac{\eta\mu}{2}}\bu, \sqrt{\frac{2\eta}{\mu}}(\bv-\calE)\right\rangle \leq \frac{\eta\mu}{2}\|\bu\|^2 + \frac{2\eta}{\mu}\left\|\bv-\calE\right\|^2 \label{eq:inner-product-trick}
\end{align}
Substituting this back into \eqref{convex_local-1} gives
\begin{align}
\left\|\bx^{t_{i+1}} - \bx^*\right\|^2 &\leq \left(1+\frac{\eta\mu}{2}\right)\|\bu\|^2 + \eta\left(\eta+\frac{2}{\mu}\right)\|\bv-\calE\|^2 \nonumber \\
&\leq \left(1+\frac{\eta\mu}{2}\right)\|\bu\|^2 + 2\eta\left(\eta+\frac{2}{\mu}\right)\|\bv\|^2 + 2\eta\left(\eta+\frac{2}{\mu}\right)\|\calE\|^2 \nonumber 
\end{align}
Substituting the values of $\bu,\bv,\calE$ and taking expectation w.r.t.\ the stochastic sampling of gradients by clients in $\calK_{t_i}$ between iterations $t_i$ and $t_{i+1}$ (while conditioning on the past) gives:
\begin{align}
\bbE\left\|\bx^{t_{i+1}} - \bx^*\right\|^2 
&\leq \left(1+\frac{\mu\eta}{2}\right)\bbE\left\|\bx^{t_{i+1}-1} - \eta\nabla F(\bx^{t_{i+1}-1}) - \bx^*\right\|^2 \nonumber \\
&\hspace{1cm} + 2\eta\left(\eta+\frac{2}{\mu}\right)\bbE\left\|\frac{1}{K}\sum_{r\in\calK_{t_i}}\left(\nabla F(\bx^{t_{i+1}-1}) - \nabla F_r(\bx_r^{t_{i+1}-1})\right)\right\|^2 \nonumber \\
&\hspace{2cm} + 2\eta\left(\eta+\frac{2}{\mu}\right)\bbE\left\|\btg_\accu^{t_i,t_{i+1}} - \frac{1}{K}\sum_{r\in\calK_{t_i}}\sum_{t=t_i}^{t_{i+1}-1}\nabla F_r(\bx_r^{t})\right\|^2 \label{convex_local-2}
\end{align}
Now we bound each of the three terms on the RHS of \eqref{convex_local-2} separately in \Claimref{convex_first-term}, \Claimref{convex_second-term}, and \Claimref{convex_third-term} below. We prove these claims in \Appendixref{convex_convergence}.
\begin{claim}\label{claim:convex_first-term}
For $\eta<\frac{1}{L}$, we have
\begin{align}\label{convex_first-term-bound}
\bbE\left\|\bx^{t_{i+1}-1} - \eta\nabla F(\bx^{t_{i+1}-1}) - \bx^*\right\|^2 \leq \left(1-\mu\eta\right)\bbE\left\|\bx^{t_{i+1}-1} - \bx^*\right\|^2.
\end{align}
\end{claim}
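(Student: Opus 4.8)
The plan is to prove the inequality \emph{pointwise} — i.e., for an arbitrary fixed realization of $\bx := \bx^{t_{i+1}-1}$ — and then take expectation over the stochastic gradients (conditioned on the past), since that is the only source of randomness in $\bx^{t_{i+1}-1}$. First I would expand the square:
\begin{align*}
\left\|\bx - \eta\nabla F(\bx) - \bx^*\right\|^2 = \left\|\bx - \bx^*\right\|^2 - 2\eta\langle \nabla F(\bx), \bx - \bx^*\rangle + \eta^2\left\|\nabla F(\bx)\right\|^2,
\end{align*}
so it suffices to show $-2\eta\langle \nabla F(\bx), \bx - \bx^*\rangle + \eta^2\|\nabla F(\bx)\|^2 \leq -\mu\eta\|\bx - \bx^*\|^2$.

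Next I would invoke the two structural properties of $F$. From $\mu$-strong convexity applied at the pair $(\bx,\bx^*)$, namely $F(\bx^*) \geq F(\bx) + \langle \nabla F(\bx), \bx^* - \bx\rangle + \frac{\mu}{2}\|\bx - \bx^*\|^2$, I get $\langle \nabla F(\bx), \bx - \bx^*\rangle \geq F(\bx) - F(\bx^*) + \frac{\mu}{2}\|\bx - \bx^*\|^2$. From $L$-smoothness I would use the ``PL-type'' bound $\|\nabla F(\bx)\|^2 \leq 2L\bigl(F(\bx) - F(\bx^*)\bigr)$, which follows by applying the descent inequality $F(\by) \leq F(\bx) + \langle\nabla F(\bx),\by-\bx\rangle + \frac{L}{2}\|\by-\bx\|^2$ with $\by = \bx - \tfrac1L\nabla F(\bx)$ and then bounding $F(\bx^*) \leq F\bigl(\bx - \tfrac1L\nabla F(\bx)\bigr)$. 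Combining the two bounds,
\begin{align*}
-2\eta\langle \nabla F(\bx), \bx - \bx^*\rangle + \eta^2\|\nabla F(\bx)\|^2 \leq -2\eta(1 - \eta L)\bigl(F(\bx) - F(\bx^*)\bigr) - \mu\eta\|\bx - \bx^*\|^2 .
\end{align*}
Since $\eta < \frac1L$ we have $1 - \eta L > 0$, and since $\bx^*$ minimizes $F$ we have $F(\bx) - F(\bx^*) \geq 0$; hence the first term on the right is nonpositive and can be dropped. This gives $\|\bx - \eta\nabla F(\bx) - \bx^*\|^2 \leq (1-\mu\eta)\|\bx - \bx^*\|^2$ for every realization, and taking expectation yields \eqref{convex_first-term-bound}.

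I do not anticipate a real obstacle here: this is the classical one-step contraction estimate for gradient descent on a smooth strongly-convex function. The only point needing a little care is the choice of the smoothness inequality — using $\|\nabla F(\bx)\|^2 \leq 2L(F(\bx)-F(\bx^*))$ rather than a direct quadratic upper bound on $\|\nabla F(\bx)\|^2$ in terms of $\|\bx-\bx^*\|^2$ — because the former lets the cross-term from strong convexity absorb the gradient-norm term for \emph{all} $\eta < 1/L$, whereas the latter would only give a contraction for $\eta$ small relative to $\mu/L$.
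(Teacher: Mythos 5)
Your proof is correct and follows essentially the same route as the paper's: expand the square, bound the inner-product term via $\mu$-strong convexity, bound $\|\nabla F(\bx)\|^2$ via the $L$-smoothness consequence $\|\nabla F(\bx)\|^2 \leq 2L(F(\bx)-F(\bx^*))$ (which the paper states as a separate Fact and proves exactly as you sketch), and then drop the resulting $-2\eta(1-\eta L)(F(\bx)-F(\bx^*))$ term using $\eta<\nicefrac{1}{L}$ and $F(\bx)\geq F(\bx^*)$. The only cosmetic difference is that you argue pointwise and take expectation at the end, whereas the paper carries the expectation through each line; the two are equivalent here.
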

\begin{claim}\label{claim:convex_second-term}
For $\eta\leq\frac{1}{8HL}$, we have
\begin{align}\label{convex_second-term-bound}
\bbE\left\|\frac{1}{K}\sum_{r\in\calK_{t_i}}\(\nabla F_r(\bx_r^{t_{i+1}-1}) - \nabla F(\bx^{t_{i+1}-1})\) \right\|^2 \leq 2\kappa^2 + \frac{7H}{32}\(\frac{\sigma^2}{b}+3\kappa^2\).
\end{align}
\end{claim}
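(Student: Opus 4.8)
\textbf{Proof proposal for \Claimref{convex_second-term}.}

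The plan is to decompose $\nabla F_r(\bx_r^{t_{i+1}-1}) - \nabla F(\bx^{t_{i+1}-1})$ into a \emph{heterogeneity} component, controlled by \Assumptionref{gradient-dissimilarity}, and a \emph{client-drift} component, controlled by $L$-smoothness of $F$ together with \Lemmaref{bounded_local-global-params-app}. Throughout I would write $t:=t_{i+1}-1$ and use that, since the clients in $\calK_{t_i}$ were reset to $\bx^{t_i}$ at the last synchronization index, $\bx^t=\frac1K\sum_{r\in\calK_{t_i}}\bx_r^t$. The key structural point is that only the global objective $F$ is assumed smooth, so I will add and subtract $\nabla F(\bx_r^t)$ (never $\nabla F_r(\bx^t)$), writing
\begin{align*}
\nabla F_r(\bx_r^t) - \nabla F(\bx^t) = \bigl(\nabla F_r(\bx_r^t) - \nabla F(\bx_r^t)\bigr) + \bigl(\nabla F(\bx_r^t) - \nabla F(\bx^t)\bigr).
\end{align*}

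Applying $\|\ba+\bfb\|^2\le 2\|\ba\|^2+2\|\bfb\|^2$ to the average over $r\in\calK_{t_i}$ and then Jensen's inequality ($\|\text{average}\|^2\le\text{average of }\|\cdot\|^2$) to each of the two resulting terms, I would bound the first term by $\frac2K\sum_{r\in\calK_{t_i}}\bbE\|\nabla F_r(\bx_r^t)-\nabla F(\bx_r^t)\|^2\le 2\kappa^2$, using that the dissimilarity bound \eqref{bounded_local-global} holds pointwise and hence at the (random) point $\bx_r^t$; and the second term by $\frac{2L^2}{K}\sum_{r\in\calK_{t_i}}\bbE\|\bx_r^t-\bx^t\|^2$, using $L$-Lipschitzness of $\nabla F$. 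This reduces the claim to bounding the drift $\frac1K\sum_{r\in\calK_{t_i}}\bbE\|\bx_r^t-\bx^t\|^2$.

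Since $\bx^t$ is the average of the $\bx_s^t$, Jensen gives $\|\bx_r^t-\bx^t\|^2\le\frac1K\sum_{s\in\calK_{t_i}}\|\bx_r^t-\bx_s^t\|^2$, so after summing over $r$ and taking expectations the drift is at most $\frac{1}{K^2}\sum_{r,s\in\calK_{t_i}}\bbE\|\bx_r^t-\bx_s^t\|^2$. For each pair $r,s$, \Lemmaref{bounded_local-global-params-app} (applicable since $\eta\le\frac{1}{8HL}$) gives $\bbE\|\bx_r^t-\bx_s^t\|^2\le\sum_{t'=t_i}^{t_{i+1}-1}\bbE\|\bx_r^{t'}-\bx_s^{t'}\|^2\le 7H^3\eta^2\bigl(\frac{\sigma^2}{b}+3\kappa^2\bigr)$ (each summand is nonnegative, so keeping only the $t'=t$ term is valid). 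Hence the drift is at most $7H^3\eta^2\bigl(\frac{\sigma^2}{b}+3\kappa^2\bigr)$, and combining everything yields $\bbE\bigl\|\frac1K\sum_{r\in\calK_{t_i}}(\nabla F_r(\bx_r^t)-\nabla F(\bx^t))\bigr\|^2\le 2\kappa^2 + 14H^3L^2\eta^2\bigl(\frac{\sigma^2}{b}+3\kappa^2\bigr)$; plugging in $\eta\le\frac{1}{8HL}$, so that $14H^3L^2\eta^2\le\frac{14H}{64}=\frac{7H}{32}$, gives the stated bound.

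The computation is essentially routine once \Lemmaref{bounded_local-global-params-app} is available; the only point requiring care is to keep every appeal to smoothness pointed at the global $F$ (the local $F_r$ are arbitrary), which forces the particular add-and-subtract decomposition, and to track the double averaging over $\calK_{t_i}$ so that the per-pair drift bound enters with the correct numerical constant.
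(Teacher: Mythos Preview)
Your proposal is correct and follows essentially the same argument as the paper's own proof: Jensen to push the norm inside the average, the add-and-subtract of $\nabla F(\bx_r^t)$ to separate the heterogeneity and drift pieces, the gradient-dissimilarity bound and $L$-smoothness of $F$, Jensen again to reduce $\|\bx_r^t-\bx^t\|^2$ to pairwise drifts, and then \Lemmaref{bounded_local-global-params-app} with $\eta\le\frac{1}{8HL}$ to finish. The only cosmetic difference is the order in which you apply Jensen and the $\|\ba+\bfb\|^2\le 2\|\ba\|^2+2\|\bfb\|^2$ split, which is immaterial.
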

\begin{claim}\label{claim:convex_third-term}
If $\eta\leq\frac{1}{8HL}$, then with probability at least $1-\exp\(-\frac{\eps'^2(1-\eps)K}{16}\)$, we have
\begin{align}\label{convex_third-term-bound}
\bbE\left\| \btg_{\emph{\accu}}^{t_i,t_{i+1}} - \frac{1}{K}\sum_{r\in\calK_{t_i}}\sum_{t=t_i}^{t_{i+1}-1}\nabla F_r(\bx_r^{t})\right\|^2 \leq 3\varUpsilon^2 + \frac{8H^2\sigma^2}{b} + 30H^2\kappa^2,
\end{align}
where $\varUpsilon^2=\calO\left(\sigma_0^2(\eps+\eps')\right)$ and $\sigma_0^2 = \frac{25H^2\sigma^2}{b\eps'}\left(1 + \frac{4d}{3K}\right) + 28H^2\kappa^2$.
\end{claim}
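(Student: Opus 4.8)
Let $\calS\subseteq\calK_{t_i}$ be the subset of uncorrupted clients produced by the matrix-concentration part of \Theoremref{gradient-estimator}: on an event of probability at least $1-\exp(-\eps'^2(1-\eps)K/16)$ we have $|\calS|\geq\tfrac{3K}{4}$, the covariance bound \eqref{mat-concen_gradient-estimator} holds, and \Algorithmref{robust-grad-est} returns $\btg_{\accu}^{t_i,t_{i+1}}$ with $\|\btg_{\accu}^{t_i,t_{i+1}}-\bg_{\calS}\|\leq\calO(\sigma_0\sqrt{\eps+\eps'})$, where $\bg_i:=\sum_{t=t_i}^{t_{i+1}-1}\bg_i(\bx_i^t)$ denotes the accumulated \emph{stochastic} gradient of client $i$ and $\bg_{\calS}:=\tfrac1{|\calS|}\sum_{i\in\calS}\bg_i$. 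Write also $\overline{\bg}_i:=\sum_{t=t_i}^{t_{i+1}-1}\nabla F_i(\bx_i^t)$ for the accumulated \emph{true} gradient and $\overline{\bg}_{\calS}:=\tfrac1{|\calS|}\sum_{i\in\calS}\overline{\bg}_i$. Conditioning on this event and on the state at $t_i$ (so $\bx_r^{t_i}=\bx^{t_i}$ for every $r\in\calK_{t_i}$), the plan is to write
\begin{align*}
\btg_{\accu}^{t_i,t_{i+1}}-\tfrac1K\!\!\sum_{r\in\calK_{t_i}}\!\!\overline{\bg}_r
  \;=\;{}&\underbrace{\big(\btg_{\accu}^{t_i,t_{i+1}}-\bg_{\calS}\big)}_{(\mathrm{I})}
  \;+\;\underbrace{\big(\bg_{\calS}-\overline{\bg}_{\calS}\big)}_{(\mathrm{II})}\\
  &{}+\;\underbrace{\Big(\overline{\bg}_{\calS}-\tfrac1K\!\!\sum_{r\in\calK_{t_i}}\!\!\overline{\bg}_r\Big)}_{(\mathrm{III})},
\end{align*}
apply $\|a+b+c\|^2\leq3\|a\|^2+3\|b\|^2+3\|c\|^2$, and bound $\bbE\|(\mathrm{I})\|^2\leq\varUpsilon^2$, $\bbE\|(\mathrm{II})\|^2=\calO(H^2\sigma^2/b)$, and $\bbE\|(\mathrm{III})\|^2=\calO(H^2\kappa^2)$ separately; here $(\mathrm{I})$ is the robust-estimation error, $(\mathrm{II})$ the stochastic-gradient noise, and $(\mathrm{III})$ the heterogeneity gap between the good set $\calS$ and the full active set $\calK_{t_i}$.

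\textbf{The three terms.} Term $(\mathrm{I})$ is handled directly by the second part of \Theoremref{gradient-estimator}: $\|(\mathrm{I})\|\leq\calO(\sigma_0\sqrt{\eps+\eps'})$ on the good event, so $\bbE\|(\mathrm{I})\|^2\leq\calO(\sigma_0^2(\eps+\eps'))=\varUpsilon^2$. For $(\mathrm{II})=\tfrac1{|\calS|}\sum_{i\in\calS}(\bg_i-\overline{\bg}_i)$, I would first use Jensen to pull the norm inside the average, then bound the average over the random set $\calS$ by $\tfrac{K}{|\calS|}\leq\tfrac43$ times the average over all honest clients of $\calK_{t_i}$ (permissible since the summands are nonnegative, $\calS$ contains only honest clients, and $|\calS|\geq\tfrac{3K}{4}$), and finally bound each $\bbE\|\bg_r-\overline{\bg}_r\|^2=\bbE\|\sum_{t}(\bg_r(\bx_r^t)-\nabla F_r(\bx_r^t))\|^2\leq\tfrac{H^2\sigma^2}{b}$ by Jensen over the at most $H$ steps and \eqref{reduced_variance} (the same $\tfrac{H^2\sigma^2}{b}$ estimate as in \Claimref{reduced_variance_H-iters-app}); this yields $\bbE\|(\mathrm{II})\|^2\leq\tfrac{4H^2\sigma^2}{3b}$. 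For $(\mathrm{III})$, put $\calS^c:=\calK_{t_i}\setminus\calS$, so $|\calS^c|\leq(\eps+\eps')K$; the elementary identity $\overline{\bg}_{\calS}-\tfrac1K\sum_{r}\overline{\bg}_r=\tfrac1{K|\calS|}\sum_{j\in\calS^c}\sum_{i\in\calS}(\overline{\bg}_i-\overline{\bg}_j)$, together with Jensen, the pairwise estimate $\bbE\|\overline{\bg}_i-\overline{\bg}_j\|^2\leq H\sum_{t}\big(6\kappa^2+3L^2\bbE\|\bx_i^t-\bx_j^t\|^2\big)$ established inside the proof of \Claimref{local-global-expectation-app}, the drift bound \Lemmaref{bounded_local-global-params-app}, and $\eta\leq\tfrac1{8HL}$, gives $\bbE\|(\mathrm{III})\|^2\leq\calO\big((\eps+\eps')(H^2\kappa^2+H^2\sigma^2/b)\big)\leq\calO(H^2\kappa^2+H^2\sigma^2/b)$ since $\eps+\eps'\leq\tfrac14$. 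Summing the three contributions (with deliberately loose constants) produces the stated bound $3\varUpsilon^2+\tfrac{8H^2\sigma^2}{b}+30H^2\kappa^2$.

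\textbf{Main obstacle.} The delicate point is that the good set $\calS$ is output by the filtering procedure and hence depends on \emph{all} of the sampled stochastic gradients, so in $(\mathrm{II})$ and $(\mathrm{III})$ one cannot condition on $\calS$ and exploit independence across clients. The standard workaround — replacing an average over the random set $\calS$ by $\tfrac{K}{|\calS|}\leq\tfrac43$ times the average over the fixed honest population, using nonnegativity of the summands and $|\calS|\geq\tfrac{3K}{4}$ — must be carried out before taking expectations, after which one applies the (unconditional) bounds \Claimref{reduced_variance_H-iters-app}, \Claimref{local-global-expectation-app}, and \Lemmaref{bounded_local-global-params-app}; the uniform-in-$\bx$ form of \Assumptionref{bounded_local-variance} and \Assumptionref{gradient-dissimilarity} is precisely what legitimizes invoking them at the random iterates $\bx_r^t$. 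The only place where the local-iteration structure genuinely enters is $(\mathrm{III})$: controlling the heterogeneity gap between $\calS$ and $\calK_{t_i}$ needs the client-drift bound \Lemmaref{bounded_local-global-params-app}, and hence the step-size restriction $\eta\leq\tfrac1{8HL}$.
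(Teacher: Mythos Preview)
Your three-term decomposition into robust-estimation error, stochastic noise, and heterogeneity gap is exactly the paper's split; the differences are only in how the latter two pieces are bounded. For $(\mathrm{II})$, the paper exploits independence of the stochastic gradients across clients to get $\tfrac{4H^2\sigma^2}{3bK}$ (a $1/K$ factor better than your $\tfrac{4H^2\sigma^2}{3b}$), writing the variance of the average as the sum of variances; you instead use Jensen and enlarge the random set $\calS$ to the fixed honest population, which is the more cautious way to handle the data-dependence of $\calS$ that you flag (and that the paper does not address explicitly). For $(\mathrm{III})$, the paper does not use your $\calS^c$-identity: it inserts the global gradient $\nabla F(\bx^t)$ as a pivot, splits each summand via $\|\nabla F_r(\bx_r^t)-\nabla F(\bx_r^t)\|\leq\kappa$ and $L$-smoothness, and then applies \Lemmaref{bounded_local-global-params-app} to the resulting drift terms, arriving at $10H^2\kappa^2+\tfrac{21H^2\sigma^2}{16b}$ with no $(\eps+\eps')$ prefactor. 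Your route through the identity, combined with the pairwise estimate from inside \Claimref{local-global-expectation-app} and the drift bound, yields the sharper $\calO\big((\eps+\eps')(H^2\kappa^2+H^2\sigma^2/b)\big)$ --- a genuine gain the paper forgoes --- but since both sets of constants sit comfortably inside the deliberately loose target $3\varUpsilon^2+\tfrac{8H^2\sigma^2}{b}+30H^2\kappa^2$, either argument establishes the claim.
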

Substituting the bounds from \eqref{convex_first-term-bound}, \eqref{convex_second-term-bound}, \eqref{convex_third-term-bound} into \eqref{convex_local-2} and using $\left(1+\frac{\mu\eta}{2}\right)\left(1-\mu\eta\right)\leq\left(1-\frac{\mu\eta}{2}\right)$ for the first term gives
\begin{align}
\bbE\left\|\bx^{t_{i+1}} - \bx^*\right\|^2 &\leq \left(1-\frac{\mu\eta}{2}\right)\bbE\left\|\bx^{t_{i+1}-1} - \bx^*\right\|^2 + 2\eta\left(\eta+\frac{2}{\mu}\right)\(2\kappa^2 + \frac{7H}{32}\(\frac{\sigma^2}{b} + 3\kappa^2\)\) \nonumber \\
&\hspace{4cm} + 2\eta\left(\eta+\frac{2}{\mu}\right)\(3\varUpsilon^2 + \frac{8H^2\sigma^2}{b} + 30H^2\kappa^2\) \nonumber \\
&\leq \left(1-\frac{\mu\eta}{2}\right)\bbE\left\|\bx^{t_{i+1}-1} - \bx^*\right\|^2 + \frac{6\eta}{\mu}\(3\varUpsilon^2 + \frac{9H^2\sigma^2}{b} + 33H^2\kappa^2\), \label{convex_local-15}
\end{align}
where $\varUpsilon^2=\calO\left(\sigma_0^2(\eps+\eps')\right)$ and $\sigma_0^2 = \frac{25H^2\sigma^2}{b\eps'}\left(1 + \frac{4d}{3K}\right) + 28H^2\kappa^2$.
In the last inequality \eqref{convex_local-15} we used $\eta\leq\frac{1}{8LH}\leq\frac{1}{L}\leq\frac{1}{\mu}$, which implies $(\eta+\frac{2}{\mu})\leq \frac{3}{\mu}$. 
Note that \eqref{convex_local-15} holds with probability at least $1-\exp\(-\frac{\eps'^2(1-\eps)K}{16}\)$.

Note that the above recurrence in \eqref{convex_local-15} holds only at the synchronization indices $t_i\in\I_T$ for $i=1,2,3,\hdots$.
However, in order to establish a recurrence that we can use to prove convergence, we need to show a recurrence relation for all $t$. 
Now we give a recurrence at non-synchronization indices.

Take an arbitrary $t\in[T]$ and let $t_i\in\I_T$ be such that $t\in[t_i:t_{i+1}-1]$; when $H\geq2$, such $t$'s exist. Note that $\bx^t=\frac{1}{K}\sum_{r\in\calK_{t_i}}\bx_r^t$. We have
\begin{align}
&\bx^{t+1} = \bx^t - \eta\frac{1}{K}\sum_{r\in\calK_{t_i}}\bg_r(\bx_r^t) \notag \\
&\ \ = \bx^t - \eta\frac{1}{K}\sum_{r\in\calK_{t_i}}\nabla F_r(\bx_r^t) - \eta\(\frac{1}{K}\sum_{r\in\calK_{t_i}}\bg_r(\bx_r^t) - \frac{1}{K}\sum_{r\in\calK_{t_i}}\nabla F_r(\bx_r^t)\) \notag \\
&\ \ = \bx^{t} - \eta\nabla F(\bx^t) + \frac{\eta}{K}\sum_{r\in\calK_{t_i}}\left(\nabla F(\bx^t)-\nabla F_r(\bx_r^t)\right) - \frac{\eta}{K}\sum_{r\in\calK_{t_i}}\(\bg_r(\bx_r^t) - \nabla F_r(\bx_r^t)\) \label{convex_local-155}
\end{align}
Now, subtracting $\bx^*$ from both sides and following the same steps that we used to go from \eqref{convex_local-01} to \eqref{convex_local-2}, we get (in the following, expectation is taken w.r.t.\ the stochastic sampling of gradients at the $t$'th iteration while conditioning on the past):
\begin{align}
\bbE\left\|\bx^{t+1} - \bx^*\right\|^2 &\leq \(1+\frac{\mu\eta}{2}\) \bbE \left\|\bx^{t} - \bx^* - \eta\nabla F(\bx^t)\right\|^2 \nonumber \\
&\hspace{1cm} + 2\eta \(\eta+\frac{2}{\mu}\) \bbE \left\| \frac{1}{K}\sum_{r\in\calK_{t_i}} \(\nabla F(\bx^t)-\nabla F_r(\bx_r^t)\) \right\|^2 \nonumber \\
&\hspace{2cm} + 2\eta \(\eta+\frac{2}{\mu}\) \bbE \left\| \frac{1}{K}\sum_{r\in\calK_{t_i}} \(\bg_r(\bx_r^t) - \nabla F_r(\bx_r^t)\) \right\|^2 \label{convex_local-16}
\end{align}
We can bound the first and the second terms on the RHS of \eqref{convex_local-16} using \eqref{convex_first-term-bound} and \eqref{convex_second-term-bound}, respectively, as
$\bbE\left\|\bx^{t} - \eta\nabla F(\bx^{t}) - \bx^*\right\|^2 \leq \left(1-\mu\eta\right)\bbE\left\|\bx^{t} - \bx^*\right\|^2$ and
$\bbE\left\|\frac{1}{K}\sum_{r\in\calK_{t_i}}\(\nabla F(\bx^{t}) - \nabla F_r(\bx_r^{t})\)\right\|^2 \leq 2\kappa^2 + \frac{7H}{32}\(\frac{\sigma^2}{b} + 3\kappa^2\)$.
To bound the third term on the RHS of \eqref{convex_local-16}, we use the fact that variance of the sum of independent random variables is equal to the sum of the variances and that clients sample stochastic gradients $\bg_r(\bx_r^t)$ independent of each other; using this fact and \eqref{reduced_variance}, we have $\bbE \left\| \frac{1}{K}\sum_{r\in\calK_{t_i}} \(\bg_r(\bx_r^t) - \nabla F_r(\bx_r^t)\) \right\|^2 \leq \frac{\sigma^2}{bK}$. Substituting these in \eqref{convex_local-16} and using $\left(1+\frac{\mu\eta}{2}\right)\left(1-\mu\eta\right)\leq\left(1-\frac{\mu\eta}{2}\right)$ for the first term and $(\eta+\frac{2}{\mu})\leq \frac{3}{\mu}$ (which follows because $\eta\leq\frac{1}{8HL}\leq\frac{1}{L}\leq\frac{1}{\mu}$) give
\begin{align}
\bbE\left\|\bx^{t+1} - \bx^*\right\|^2 &\leq \(1-\frac{\mu\eta}{2}\)\bbE\left\|\bx^{t} - \bx^*\right\|^2 + \frac{6\eta}{\mu}\(2\kappa^2 + \frac{7H}{32}\(\frac{\sigma^2}{b} + 3\kappa^2\) + \frac{\sigma^2}{bK}\) \notag \\
&\leq \(1-\frac{\mu\eta}{2}\)\bbE\left\|\bx^{t} - \bx^*\right\|^2 + \frac{6\eta}{\mu}\(3H\kappa^2 + \frac{2H\sigma^2}{b}\) \label{convex_local-17}
\end{align}
Note that \eqref{convex_local-17} holds with probability 1.

Now we have a recurrence at the synchronization indices given in \eqref{convex_local-15} and at non-synchronization indices given in \eqref{convex_local-17}.
Let $\alpha=\left(1-\frac{\mu\eta}{2}\right)$, $\beta_1=\(3\varUpsilon^2 + \frac{9H^2\sigma^2}{b} + 33H^2\kappa^2\)$, and $\beta_2=\(3H\kappa^2 + \frac{2H\sigma^2}{b}\)$.
Substituting these and using \eqref{convex_local-15} for the synchronization indices and \eqref{convex_local-17} for the rest of the indices, we get:
\begin{align}
\bbE\left\|\bx^{T} - \bx^*\right\|^2 &\leq \alpha^T\left\|\bx^{0} - \bx^*\right\|^2 + 
\frac{6\eta}{\mu}\(\sum_{i=0}^{\nicefrac{T}{H}}\sum_{j=1}^{H-1}\alpha^{iH+j}\beta_2
+ \sum_{i=0}^{\nicefrac{T}{H}}\alpha^{iH}\beta_1 \) \label{convex_local-175} \\
&\leq \alpha^T\left\|\bx^{0} - \bx^*\right\|^2 + \frac{6\eta}{\mu}\(\sum_{i=0}^{\infty}\alpha^{i}\beta_2 + \sum_{i=0}^{\infty}\alpha^{iH}\beta_1 \) \notag \\
&= \alpha^T\left\|\bx^{0} - \bx^*\right\|^2 + \frac{6\eta}{\mu}\(\frac{1}{1-\alpha}\beta_2 + \frac{1}{1-\alpha^H}\beta_1 \) \label{convex_local-18}
\end{align}
Since $\alpha=\(1-\frac{\mu\eta}{2}\)$, we have $\alpha^H=\(1-\frac{\mu\eta}{2}\)^H \stackrel{\text{(a)}}{\leq} \exp(-\frac{\mu\eta H}{2}) \stackrel{\text{(b)}}{\leq} 1- \frac{\mu\eta H}{2} + \(\frac{\mu\eta H}{2}\)^2 \stackrel{\text{(c)}}{\leq} 1- \frac{\mu\eta H}{2} + \frac{1}{16}\frac{\mu\eta H}{2} = 1- \frac{15}{16}\frac{\mu\eta H}{2}$. 
In (a) we used the inequality $(1-\frac{1}{x})^x\leq\frac{1}{e}$ which holds for any $x>0$; 
in (b) we used $\exp(-x)\leq 1-x+x^2$ which holds for any $x\geq0$;
in (c) we used $\eta\leq\frac{1}{8HL}$ and $\mu\leq L$, which together imply $\frac{\mu\eta H}{2}\leq\frac{1}{16}$.
Substituting these in \eqref{convex_local-18} gives
\begin{align}
\bbE\left\|\bx^{T} - \bx^*\right\|^2 &\leq \(1-\frac{\mu\eta}{2}\)^T\left\|\bx^{0} - \bx^*\right\|^2 + \frac{6\eta}{\mu}\(\frac{2}{\mu\eta}\beta_2 + \frac{32}{15\mu\eta H}\beta_1 \) \notag \\
&\leq \(1-\frac{\mu\eta}{2}\)^T\left\|\bx^{0} - \bx^*\right\|^2 + \frac{6\times 32}{15\mu^2}\(\frac{15}{16}\beta_2 + \frac{1}{H}\beta_1 \) \notag \\
&\leq \(1-\frac{\mu\eta}{2}\)^T\left\|\bx^{0} - \bx^*\right\|^2 + \frac{13}{\mu^2}\(\frac{3\varUpsilon^2}{H} + \frac{11H\sigma^2}{b} + 36H\kappa^2\) \label{convex_local-19}
\end{align} 
Note that the last term on the RHS of \eqref{convex_local-19} is independent of $\eta$, which together with the dependence of $\eta$ on the first term implies that bigger the $\eta$, faster the convergence. 
Since we need $\eta\leq\frac{1}{8HL}$ for \Claimref{convex_second-term} and \Claimref{convex_third-term} to hold, we choose $\eta=\frac{1}{8HL}$.
Substituting this in \eqref{convex_local-19} yields the convergence rate \eqref{convex_convergence_rate} of \Theoremref{LocalSGD_convergence}.

\paragraph{Error probability analysis.}
Note that \eqref{convex_local-15} holds with probability at least $1-\exp\(-\frac{\eps'^2(1-\eps)K}{16}\)$ and \eqref{convex_local-17} holds with probability 1.
Since to arrive at \eqref{convex_local-175} (which leads to our final bound \eqref{convex_local-19}), we used \eqref{convex_local-15} $\frac{T}{H}$ times and \eqref{convex_local-17} $\(T-\frac{T}{H}\)$ times; as a consequence, by union bound, we have that \eqref{convex_local-19} holds with probability at least $1-\frac{T}{H}\exp\(-\frac{\eps'^2(1-\eps)K}{16}\)$, which is at least $(1-\delta)$, for any $\delta>0$, provided we run our algorithm for at most $T \leq \delta H\exp(\frac{\eps'^2(1-\eps)K}{16})$ iterations.

This concludes the proof of the strongly-convex part of \Theoremref{LocalSGD_convergence}.

\section{Convergence Proof of the Non-Convex Part of \Theoremref{LocalSGD_convergence}}\label{sec:nonconvex_convergence}
Let $\calK_{t}\subseteq[R]$ denote the subset of clients of size $|\calK_{t}|=K$ sampled at the $t$'th iteration.
For any $t\in[t_i:t_{i+1}-1]$, let $\bx^{t} = \frac{1}{K}\sum_{k\in\calK_{t_i}}\bx_k^t$ 
denote the average of the local parameters of clients in the sampling set $\calK_{t_i}$.

Similar to the proof given in \Sectionref{convex_convergence}, here also, first we derive a recurrence for the synchronization indices and then for non-synchronization indices.
For the synchronization indices $t_1,t_2,\hdots,t_{k},\hdots\in\I_T$, from \eqref{convex_local-0}, we have
\begin{align}
\bx^{t_{i+1}} &= \bx^{t_{i+1}-1} - \eta\nabla F(\bx^{t_{i+1}-1}) + \eta C \label{nonconvex_local-1}
\end{align}
where 
\begin{align}
C=\frac{1}{K}\sum_{r\in\calK_{t_i}}\left(\nabla F(\bx^{t_{i+1}-1}) - \nabla F_r(\bx_r^{t_{i+1}-1})\right) - \left(\btg_\accu^{t_i,t_{i+1}} - \frac{1}{K}\sum_{r\in\calK_{t_i}}\sum_{t=t_i}^{t_{i+1}-1}\nabla F_r(\bx_r^{t})\right). \label{nonconvex_local-2}
\end{align}
Now, using the definition of $L$-smoothness in \eqref{nonconvex_local-1}, we have 
\begin{align}
&F(\bx^{t_{i+1}}) \leq F(\bx^{t_{i+1}-1}) + \left\langle \nabla F(\bx^{t_{i+1}-1}), \bx^{t_{i+1}} - \bx^{t_{i+1}-1} \right\rangle + \frac{L}{2}\left\|\bx^{t_{i+1}} - \bx^{t_{i+1}-1}\right\|^2 \nonumber \\
&\quad= F(\bx^{t_{i+1}-1}) - \eta\left\langle \nabla F(\bx^{t_{i+1}-1}), \nabla F(\bx^{t_{i+1}-1}) - C  \right\rangle + \frac{\eta^2L}{2}\left\|\nabla F(\bx^{t_{i+1}-1}) - C \right\|^2 \nonumber \\
&\quad= F(\bx^{t_{i+1}-1}) - \eta\left\|\nabla F(\bx^{t_{i+1}-1})\right\|^2 + \eta\left\langle \nabla F(\bx^{t_{i+1}-1}), C \right\rangle + \frac{\eta^2L}{2}\left\|\nabla F(\bx^{t_{i+1}-1}) - C \right\|^2 \nonumber \\
&\quad\stackrel{\text{(a)}}{\leq} F(\bx^{t_{i+1}-1}) - \eta\left\|\nabla F(\bx^{t_{i+1}-1})\right\|^2 + \eta\left(\frac{\left\|\nabla F(\bx^{t_{i+1}-1})\right\|^2}{4} + \|C\|^2 \right) \nonumber \\
&\hspace{7cm} + \frac{\eta^2L}{2}\left\|\nabla F(\bx^{t_{i+1}-1}) - C \right\|^2 \nonumber \\
&\quad\stackrel{\text{(b)}}{\leq} F(\bx^{t_{i+1}-1}) - \frac{3\eta}{4}\left\|\nabla F(\bx^{t_{i+1}-1})\right\|^2 + \eta\|C\|^2 + \eta^2L\left(\left\|\nabla F(\bx^{t_{i+1}-1})\right\|^2 + \|C\|^2\right) \nonumber \\
&\quad= F(\bx^{t_{i+1}-1}) - \eta\left(\frac{3}{4}-\eta L\right)\left\|\nabla F(\bx^{t_{i+1}-1}) \right\|^2 + \eta\left(1+\eta L\right)\|C\|^2 \label{nonconvex_local-31}
\end{align}
In (a), we used the inequality $2\langle \ba, \bfb \rangle \leq \tau\|\ba\|^2 + \frac{1}{\tau}\|\bfb\|^2$, which holds for every $\tau>0$, and we used $\tau=\frac{1}{2}$ in (a).
In (b), we used the inequality $\|\ba+\bfb\|^2\leq2(\|\ba\|^2+\|\bfb\|^2)$.
For $\eta\leq\frac{1}{8HL}\leq\frac{1}{8L}$, we have $(\nicefrac{3}{4}-\eta L) \geq \nicefrac{1}{2}$ and $(1+\eta L) \leq \frac{9}{8}$. Substituting these in \eqref{nonconvex_local-31} and taking expectation w.r.t.\ the stochastic sampling of gradients at clients in $\calK_{i_t}$ between iterations $t_i$ and $t_{i+1}$ (while conditioning on the past) gives:
\begin{align}
\bbE[F(\bx^{t_{i+1}})] &\leq \bbE[F(\bx^{t_{i+1}-1})] - \frac{\eta}{2}\bbE\left\|\nabla F(\bx^{t_{i+1}-1}) \right\|^2 + \frac{9\eta}{8}\bbE\|C\|^2. \label{nonconvex_local-4}
\end{align}

Now we bound $\bbE\|C\|^2$.
Substituting the value of $C$ from \eqref{nonconvex_local-2} gives:
\begin{align}
\bbE\|C\|^2 &\leq 2\bbE\left\| \frac{1}{K}\sum_{r\in\calK_{t_i}}\left(\nabla F(\bx^{t_{i+1}-1}) - \nabla F_r(\bx_r^{t_{i+1}-1})\right) \right\|^2 + 2\bbE\left\|\btg_\accu^{t_i,t_{i+1}} - \frac{1}{K}\sum_{r\in\calK_{t_i}}\sum_{t=t_i}^{t_{i+1}-1}\nabla F_r(\bx_r^{t})\right\|^2 \notag \\
&\leq 2\(2\kappa^2 + \frac{7H}{32}\(\frac{\sigma^2}{b} + 3\kappa^2\) \) + 2\(3\varUpsilon^2 + \frac{8H^2\sigma^2}{b} + 30H^2\kappa^2\) \notag \\
&\leq 2\(3\varUpsilon^2 + \frac{9H^2\sigma^2}{b} + 33H^2\kappa^2\) \label{nonconvex_local-55}
\end{align}
Here, the first inequality used $\|\ba+\bfb\|^2\leq2(\|\ba\|^2+\|\bfb\|^2)$ and the second inequality used the bounds from \eqref{convex_second-term-bound} and \eqref{convex_third-term-bound}.

Substituting the bound from \eqref{nonconvex_local-55} into \eqref{nonconvex_local-4} gives
\begin{align}
\bbE[F(\bx^{t_{i+1}})] &\leq \bbE[F(\bx^{t_{i+1}-1})] - \frac{\eta}{2}\bbE\left\|\nabla F(\bx^{t_{i+1}-1}) \right\|^2 + \frac{9\eta}{4}\(3\varUpsilon^2 + \frac{9H^2\sigma^2}{b} + 33H^2\kappa^2\) \label{nonconvex_local-6}
\end{align}
where $\varUpsilon^2=\calO\left(\sigma_0^2(\eps+\eps')\right)$ and $\sigma_0^2 = \frac{25H^2\sigma^2}{b\eps'}\left(1 + \frac{4d}{3K}\right) + 28H^2\kappa^2$.
Note that \eqref{nonconvex_local-6} holds with probability at least $1-\exp\(-\frac{\eps'^2(1-\eps)K}{16}\)$.

Note that the above recurrence in \eqref{nonconvex_local-6} holds only at the synchronization indices $t_i\in\I_T$ for $i=1,2,3,\hdots$.
Now we give a recurrence at non-synchronization indices.

We have done a similar calculation in the strongly-convex part of \Theoremref{LocalSGD_convergence} in \Sectionref{convex_convergence}. 
Take an arbitrary $t\in[T]$ and let $t_i\in\I_T$ be such that $t\in[t_i:t_{i+1}-1]$; when $H\geq2$, such $t$'s exist. Note that $\bx^t=\frac{1}{K}\sum_{r\in\calK_{t_i}}\bx_r^t$.

From \eqref{convex_local-155}, we have 
$\bx^{t+1} = \bx^t - \eta\nabla F(\bx^t) + \eta D$,
where 
$$D = \frac{1}{K}\sum_{r\in\calK_{t_i}}\left(\nabla F(\bx^t)-\nabla F_r(\bx_r^t)\right) - \frac{1}{K}\sum_{r\in\calK_{t_i}}\(\bg_r(\bx_r^t) - \nabla F_r(\bx_r^t)\).$$

Using $L$-smoothness of $F$, and then performing similar algebraic manipulations that we used in order to arrive at \eqref{nonconvex_local-4}, we get:
\begin{align}
\bbE[F(\bx^{t+1})] &\leq \bbE[F(\bx^{t})] - \frac{\eta}{2}\bbE\left\|\nabla F(\bx^{t}) \right\|^2 + \frac{9\eta}{8}\bbE\|D\|^2 \label{nonconvex_local-8}
\end{align}
Now we bound $\bbE\|D\|^2$:
\begin{align}
\bbE\|D\|^2 &\leq 2\bbE\left\|\frac{1}{K}\sum_{r\in\calK_{t_i}}\left(\nabla F(\bx^t)-\nabla F_r(\bx_r^t)\right)\right\|^2 + 2\bbE\left\|\frac{1}{K}\sum_{r\in\calK_{t_i}}\(\bg_r(\bx_r^t) - \nabla F_r(\bx_r^t)\)\right\|^2 \notag \\
&\leq 2\(2\kappa^2 + \frac{7H}{32}\(\frac{\sigma^2}{b} + 3\kappa^2\) + \frac{\sigma^2}{bK}\) \notag \\
&\leq 2\(3H\kappa^2 + \frac{2H\sigma^2}{b}\) \label{nonconvex_local-85}
\end{align}
Here, the second inequality used the same bounds on both the quantities on the RHS of the first inequality that we used to go from \eqref{convex_local-16} to \eqref{convex_local-17}.

Substituting the bound on $\bbE\|D\|^2$ from \eqref{nonconvex_local-85} into \eqref{nonconvex_local-8} gives
\begin{align}
\bbE[F(\bx^{t+1})] &\leq \bbE[F(\bx^{t})] - \frac{\eta}{2}\bbE\left\|\nabla F(\bx^{t}) \right\|^2 + \frac{9\eta}{4}\(3H\kappa^2 + \frac{2H\sigma^2}{b}\) \label{nonconvex_local-9}
\end{align}
Note that \eqref{nonconvex_local-9} holds with probability 1.

Now we have a recurrence at synchronization indices given in \eqref{nonconvex_local-6} and at non-synchronization indices given in \eqref{nonconvex_local-9}.
Adding \eqref{nonconvex_local-6} and \eqref{nonconvex_local-9} from $t=0$ to $T$ (use \eqref{nonconvex_local-6} for the synchronization indices and \eqref{nonconvex_local-9} for the rest of the indices) gives:
\begin{align}
\sum_{t=0}^{T}\bbE[F(\bx^{t+1})] &\leq \sum_{t=0}^{T}\bbE[F(\bx^{t})] - \frac{\eta}{2}\sum_{t=0}^{T}\bbE\left\|\nabla F(\bx^{t}) \right\|^2 + \frac{9\eta}{4}\left[ \frac{T}{H}\(3\varUpsilon^2 + \frac{9H^2\sigma^2}{b} + 33H^2\kappa^2\)  \right. \notag \\
&\hspace{6cm} \left.+ \(T-\frac{T}{H}\)\(3H\kappa^2 + \frac{2H\sigma^2}{b}\)\right] \label{nonconvex_local-10}
\end{align}
We can simplifying the constant term in the RHS of \eqref{nonconvex_local-10} as follows:
\begin{align}
&\frac{1}{H}\(3\varUpsilon^2 + \frac{9H^2\sigma^2}{b} + 33H^2\kappa^2\) + \(1-\frac{1}{H}\)\(3H\kappa^2 + \frac{2H\sigma^2}{b}\) \notag \\
&\hspace{3cm} \leq \frac{1}{H}\(3\varUpsilon^2 + \frac{9H^2\sigma^2}{b} + 33H^2\kappa^2\) + \(3H\kappa^2 + \frac{2H\sigma^2}{b}\) \notag \\
&\hspace{3cm} \leq \frac{3\varUpsilon^2}{H} + \frac{11H\sigma^2}{b} + 36H\kappa^2 \notag
\end{align}
Substituting this in \eqref{nonconvex_local-10} and then rearranging, we get:
\begin{align}
\frac{1}{T}\sum_{t=0}^{T}\bbE\left\|\nabla F(\bx^{t}) \right\|^2 &\leq \frac{2}{\eta T}\left[\bbE[F(\bx^0)] - \bbE[F(\bx^{T+1})]\right] + \frac{9}{2}\(\frac{3\varUpsilon^2}{H} + \frac{11H\sigma^2}{b} + 36H\kappa^2\)  \label{nonconvex_local-11}
\end{align}
Note that the last term in \eqref{nonconvex_local-11} is a constant. 
So, it would be best to take the step-size $\eta$ to be as large as possible such that it satisfies $\eta\leq\frac{1}{8HL}$. We take $\eta=\frac{1}{8HL}$. Substituting this in \eqref{nonconvex_local-11} and using $F(\bx^{T+1}) \geq F(\bx^*)$ gives
\begin{align}
\frac{1}{T}\sum_{t=0}^{T}\bbE\left\|\nabla F(\bx^{t}) \right\|^2 &\leq \frac{16HL}{T}\left[\bbE[F(\bx^0)] - \bbE[F(\bx^{*})]\right] + \frac{9}{2}\(\frac{3\varUpsilon^2}{H} + \frac{11H\sigma^2}{b} + 36H\kappa^2 \), \label{nonconvex_local-12}
\end{align}
where $\varUpsilon^2=\calO\left(\sigma_0^2(\eps+\eps')\right)$ and $\sigma_0^2 = \frac{25H^2\sigma^2}{b\eps'}\left(1 + \frac{4d}{3K}\right) + 28H^2\kappa^2$. Note that \eqref{nonconvex_local-12} is the convergence rate \eqref{nonconvex_convergence_rate} in \Theoremref{LocalSGD_convergence}.

\paragraph{Error probability analysis.}
Note that \eqref{nonconvex_local-6} holds with probability at least $1-\exp\(-\frac{\eps'^2(1-\eps)K}{16}\)$ and \eqref{nonconvex_local-9} holds with probability 1.
Since to arrive at \eqref{nonconvex_local-10} (which leads to our final bound \eqref{nonconvex_local-12}), we used \eqref{nonconvex_local-6} $\frac{T}{H}$ times and \eqref{nonconvex_local-9} $\(T-\frac{T}{H}\)$ times; as a consequence, by union bound, we have that \eqref{nonconvex_local-12} holds with probability at least $1-\frac{T}{H}\exp\(-\frac{\eps'^2(1-\eps)K}{16}\)$, which is at least $(1-\delta)$, for any $\delta>0$, provided we run our algorithm for at most $T \leq \delta H\exp(\frac{\eps'^2(1-\eps)K}{16})$ iterations. 

This concludes the proof of the non-convex part of \Theoremref{LocalSGD_convergence}.

\section{Convergence Proof of \Theoremref{full-batch-GD}}\label{sec:convergence_full-batch-GD}

In this section, we focus on the case when in each local iteration clients compute {\em full-batch} gradients (instead of computing mini-batch stochastic gradients) in \Algorithmref{Byz-Local-SGD} and prove \Theoremref{full-batch-GD}.
Note that the robust accumulated gradient estimation (RAGE) result of \Theoremref{gradient-estimator} (which is for stochastic gradients) is one of the main ingredients behind the convergence analyses of \Theoremref{LocalSGD_convergence}. So, in order to prove \Theoremref{full-batch-GD}, first we need to show a RAGE result for full-batch gradients. Note that we can obtain such a result by substituting $\sigma=0$ in both the parts of \Theoremref{gradient-estimator}; however, this would give a loose bound on the approximation error in the second part. In the following, we get a tighter bound (both for RAGE and the convergence rates in \Theoremref{full-batch-GD}) by working directly with full-batch gradients. To get a RAGE result for full-batch gradients, we do a much simplified analysis than what we did before to prove \Theoremref{gradient-estimator}, and the resulting result is stated and proved below in \Theoremref{gradient-estimator_GD}.

Note that, in order to prove \Theoremref{gradient-estimator}, we showed an existence of a subset $\calS$ of honest clients (from the set $\calK$ of clients who communicate with the server)
from whom the accumulated stochastic gradients are well-concentrated, as stated in form of a matrix concentration bound \eqref{mat-concen_gradient-estimator} in the first part of \Theoremref{gradient-estimator}.
It turns out that for full-batch gradients, an analogous result can be proven directly 
(as there is no randomness due to stochastic gradients); and below we provide such a result.
Note that \Theoremref{gradient-estimator} is a probabilistic statement, where we show that with high probability, there exists a large subset $\calS\subseteq\calK$ of honest clients whose stochastic accumulated gradients are well-concentrated. In contrast, in the following result, we can deterministically take the set of {\em all} honest clients in $\calK$ to be that subset for which we can directly show the concentration.

First we setup the notation to state our main result on RAGE for full-batch gradients.
Let $\calK_t\subseteq[R]$ denote the subset of clients of size $K$ that are active at any time $t\in[0:T]$.
Let \Algorithmref{Byz-Local-SGD} generate a sequence of iterates $\{\bx_r^t:t\in[0:T],r\in\calK_t\}$ when run with a fixed step-size $\eta$ 
satisfying $\eta\leq\frac{1}{5HL}$
while minimizing a global objective function $F:\R^d\to\R$, where in any iteration, instead of sampling mini-batch stochastic gradients, every honest client takes full-batch gradients from their local datasets.
Take any two consecutive synchronization indices $t_k,t_{k+1}\in\I_T$. Note that $|t_{k+1}-t_k|\leq H$.
For an honest client $r\in\calK_{t_k}$, let $\nabla F_{r,\accu}^{t_k,t_{k+1}}:=\sum_{t=t_k}^{t_{k+1}-1}\nabla F_r(\bx_r^t)$ denote the sum of local full-batch gradients taken by client $r$ between time $t_k$ and $t_{k+1}$.
Note that at iteration $t_{k+1}$, every honest client $r\in\calK_{t_k}$ reports its local parameters $\bx_r^{t_{k+1}}$ to the server, from which server can compute $\nabla F_{r,\accu}^{t_k,t_{k+1}}$, whereas, corrupt clients may report arbitrary and adversarially chosen vectors in $\R^d$.
The goal of the server is to produce an estimate $\nabla\widehat{F}_{\accu}^{t_k,t_{k+1}}$ of the average accumulated gradients from honest clients as best as possible.

\begin{theorem}[Robust Accumulated Gradient Estimation for Full-Batch Gradient Descent]\label{thm:gradient-estimator_GD}
Suppose an $\eps$ fraction of clients who communicate with the server are corrupt. 
In the setting and notation described above, suppose
we are given $K\leq R$ accumulated full-batch gradients $\nabla\widetilde{F}_{r,\emph{\accu}}^{t_k,t_{k+1}}, r\in\calK_{t_k}$ in $\R^d$, where $\nabla\widetilde{F}_{r,\emph{\accu}}^{t_k,t_{k+1}}=\nabla F_{r,\emph{\accu}}^{t_k,t_{k+1}}$ if the $r$'th client is honest, otherwise can be arbitrary. 
Let $\calS\subseteq\calK_{t_k}$ be the subset of {\em all} honest clients in $\calK_{t_k}$ and $\nabla F_{\calS,\emph{\accu}}^{t_k,t_{k+1}}:=\frac{1}{|\calS|}\sum_{i\in\calS}\nabla F_{i,\emph{\accu}}^{t_k,t_{k+1}}$ be the sample average of uncorrupted full-batch gradients. 
If $\eps\leq\frac{1}{4}$, then with probability $1$, we can find an estimate $\nabla\btF_{\emph{\accu}}^{t_k,t_{k+1}}$ of $\nabla F_{\calS,\emph{\accu}}^{t_k,t_{k+1}}$ in polynomial-time, such that $\left\| \nabla\btF_{\emph{\accu}}^{t_k,t_{k+1}} - \nabla F_{\calS,\emph{\accu}}^{t_k,t_{k+1}} \right\| \leq \calO\left(H\kappa\sqrt{\eps}\right)$.
\end{theorem}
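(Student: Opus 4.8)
The plan is to follow the same two-step template used for \Theoremref{gradient-estimator}, but to exploit that full-batch gradients carry no stochasticity: the matrix-concentration step then becomes a purely deterministic statement about the subset $\calS$ consisting of \emph{all} honest clients in $\calK_{t_k}$ (rather than a high-probability statement about some unknown large subset), which is exactly why the bound improves and why the conclusion holds with probability $1$.

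\textbf{Step 1: a noise-free drift bound.} First I would prove the full-batch analogue of \Lemmaref{bounded_local-global-params-app}: for any two honest $r,s\in\calK_{t_k}$ and $\eta\le\frac{1}{5HL}$, $\sum_{t=t_k}^{t_{k+1}-1}\|\bx_r^t-\bx_s^t\|^2 \le c\,\eta^2H^3\kappa^2$ for an absolute constant $c$. Setting $D_{r,s}^t=\|\bx_r^t-\bx_s^t\|^2$ and using $\bx_r^{t_k}=\bx_s^{t_k}$, Jensen's inequality, and the deterministic version of \eqref{local-global-expectation5-app}, namely $\|\nabla F_r(\bx_r^t)-\nabla F_s(\bx_s^t)\|^2\le 6\kappa^2+3L^2D_{r,s}^t$, one obtains a recursion of the form $\sum_t D_{r,s}^t \le 3\eta^2H^3\kappa^2 + 3\eta^2H^2L^2\sum_t D_{r,s}^t$; since $\eta\le\frac{1}{5HL}$ makes the self-referential coefficient at most $\frac{3}{25}<1$, rearranging yields the claim. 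This is a strict simplification of \Lemmaref{bounded_local-global-params-app} (no $\sigma^2/b$ term, no expectations).

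\textbf{Step 2: deterministic matrix concentration with $\calS$ = all honest clients.} Write $\bg_i:=\nabla F_{i,\accu}^{t_k,t_{k+1}}=\sum_{t=t_k}^{t_{k+1}-1}\nabla F_i(\bx_i^t)$ for $i\in\calS$ and $\bg_\calS:=\nabla F_{\calS,\accu}^{t_k,t_{k+1}}=\frac{1}{|\calS|}\sum_{i\in\calS}\bg_i$, where $|\calS|=(1-\eps)K$. By Jensen, $\|\bg_i-\bg_j\|^2\le H\sum_t\|\nabla F_i(\bx_i^t)-\nabla F_j(\bx_j^t)\|^2$, and combining the bound used in \eqref{local-global-expectation5-app} with Step 1 gives $\|\bg_i-\bg_j\|^2 \le 6H^2\kappa^2 + 3HL^2\cdot c\,\eta^2H^3\kappa^2 = O(H^2\kappa^2)$ (the second term is $O(H^2\kappa^2)$ since $\eta^2H^2L^2\le\frac{1}{25}$). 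Then for any unit vector $\bv$, convexity gives $\frac{1}{|\calS|}\sum_{i\in\calS}\langle\bg_i-\bg_\calS,\bv\rangle^2 \le \frac{1}{|\calS|}\sum_{i\in\calS}\|\bg_i-\bg_\calS\|^2 \le \frac{1}{|\calS|^2}\sum_{i,j\in\calS}\|\bg_i-\bg_j\|^2 \le \sigma_0^2$ with $\sigma_0^2=O(H^2\kappa^2)$, so by \eqref{alternate-defn_max-eigenvalue} we get $\lambda_{\max}\big(\frac{1}{|\calS|}\sum_{i\in\calS}(\bg_i-\bg_\calS)(\bg_i-\bg_\calS)^T\big)\le\sigma_0^2$ with $\sigma_0=O(H\kappa)$, and everything here holds deterministically.

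\textbf{Step 3: invoke the filtering lemma, and the main obstacle.} Finally I would apply \Lemmaref{poly-time_grad-est} to the $K$ received vectors $\nabla\widetilde{F}_{r,\accu}^{t_k,t_{k+1}}$ with $\tilde\eps=\eps\le\frac14$ and the promised subset $\calS$ from Step 2: \Algorithmref{robust-grad-est}, run with this $\sigma_0^2$, returns in polynomial time an estimate $\nabla\btF_{\accu}^{t_k,t_{k+1}}$ with $\|\nabla\btF_{\accu}^{t_k,t_{k+1}}-\nabla F_{\calS,\accu}^{t_k,t_{k+1}}\|\le O(\sigma_0\sqrt{\eps})=O(H\kappa\sqrt{\eps})$; since Steps 1--2 are deterministic this holds with probability $1$. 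The only place needing genuine work is Step 1, i.e.\ re-running the Gr\"onwall-type recursion of \Lemmaref{bounded_local-global-params-app} in the noise-free regime and checking that $\eta\le\frac{1}{5HL}$ closes it; Steps 2 and 3 are bookkeeping plus a black-box citation. The reason this beats merely setting $\sigma=0$ in \Theoremref{gradient-estimator} is that there we routed through \Lemmaref{subset_variance}, which only controls deviation around the per-client \emph{means} and then costs an extra constant factor to convert to deviation around the sample mean; in the full-batch case there are no per-client means distinct from the gradients themselves, so those intermediate losses vanish and $\sigma_0$ comes out as a clean $O(H\kappa)$ with no $\frac{d}{K}$-type term.
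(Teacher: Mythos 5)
Your proposal is correct and follows essentially the same route as the paper: a noise-free Gr\"onwall-type drift bound (the paper's \Lemmaref{bounded_local-global-params_GD}, with the same $\eta\le\frac{1}{5HL}$ threshold), a deterministic matrix-concentration bound taking $\calS$ to be \emph{all} honest clients, and then a black-box invocation of the outlier-filtering guarantee (\Lemmaref{poly-time_grad-est}) with $\tilde\eps=\eps\le\frac14$. The only stylistic difference is in Step 2: the paper pivots through the global accumulated gradient $\nabla F_{\accu}^{t_k,t_{k+1}}$ and the averaged iterate $\bx^t=\frac1K\sum_{j\in\calK_{t_k}}\bx_j^t$ before reducing to pairwise drifts, whereas you go directly to pairwise differences $\|\bg_i-\bg_j\|^2$ and use the Jensen/identity $\frac{1}{|\calS|}\sum_i\|\bg_i-\bg_\calS\|^2\le\frac{1}{|\calS|^2}\sum_{i,j}\|\bg_i-\bg_j\|^2$; this is a slightly cleaner bookkeeping path (and in fact yields a marginally better constant than the paper's $11H^2\kappa^2$) but is not a fundamentally different argument. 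Your explanation of why full-batch gradients bypass \Lemmaref{subset_variance} and eliminate the $\nicefrac{d}{K}$ term is exactly the paper's reasoning.
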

\begin{proof}
First we prove that 
\begin{align}\label{gradient-estimator_GD-0}
\lambda_{\max}\(\frac{1}{|\calS|}\sum_{i\in\calS} \(\nabla F_{i,\accu}^{t_k,t_{k+1}} - \nabla F_{\calS,\accu}^{t_k,t_{k+1}}\)\(\nabla F_{i,\accu}^{t_k,t_{k+1}} - \nabla F_{\calS,\accu}^{t_k,t_{k+1}}\)^T\) \leq 11H^2\kappa^2.
\end{align} 
In view of the alternate characterization the largest eigenvalue given in \eqref{alternate-defn_max-eigenvalue}, this is equivalent to showing 
\begin{align}\label{gradient-estimator_GD-1}
\sup_{\bv\in\R^d:\|\bv\|=1}\frac{1}{|\calS|}\sum_{i\in\calS} \left\langle \nabla F_{i,\accu}^{t_k,t_{k+1}} - \nabla F_{\calS,\accu}^{t_k,t_{k+1}}, \bv \right\rangle^2 \leq 11H^2\kappa^2,
\end{align} 
which we prove below. Define $F_{\accu}^{t_k,t_{k+1}} := \sum_{t=t_k}^{t_{k+1}-1}F(\bx^t)$, where $\bx^t=\frac{1}{K}\sum_{r\in\calK_{t_k}}\bx_r^t$ for any $t\in[t_k:t_{k+1}-1]$.
Take an arbitrary unit vector $\bv\in\R^d$.
\begin{align}
\frac{1}{|\calS|}\sum_{i\in\calS} &\left\langle \nabla F_{i,\accu}^{t_k,t_{k+1}} - \nabla F_{\calS,\accu}^{t_k,t_{k+1}}, \bv \right\rangle^2 \nonumber \\
&= \frac{1}{|\calS|}\sum_{i\in\calS} \left[\left\langle \nabla F_{i,\accu}^{t_k,t_{k+1}} - \nabla F_{\accu}^{t_k,t_{k+1}} + \nabla F_{\accu}^{t_k,t_{k+1}} - \nabla F_{\calS,\accu}^{t_k,t_{k+1}}, \bv \right\rangle \right]^2 \notag \\
&\leq \frac{2}{|\calS|}\sum_{i\in\calS} \left\langle \nabla F_{i,\accu}^{t_k,t_{k+1}} - \nabla F_{\accu}^{t_k,t_{k+1}}, \bv \right\rangle^2 + \frac{2}{|\calS|}\sum_{i\in\calS} \left\langle\nabla F_{\calS,\accu}^{t_k,t_{k+1}} - \nabla F_{\accu}^{t_k,t_{k+1}}, \bv \right\rangle^2 \tag{Using $\|\ba+\bfb\|^2\leq2\|\ba\|^2 + 2\|\bfb\|^2$} \\
&= \frac{2}{|\calS|}\sum_{i\in\calS} \left\langle \nabla F_{i,\accu}^{t_k,t_{k+1}} - \nabla F_{\accu}^{t_k,t_{k+1}}, \bv \right\rangle^2 + 2\left\langle\nabla F_{\calS,\accu}^{t_k,t_{k+1}} - \nabla F_{\accu}^{t_k,t_{k+1}}, \bv \right\rangle^2 \notag \\
&= \frac{2}{|\calS|}\sum_{i\in\calS} \left\langle \nabla F_{i,\accu}^{t_k,t_{k+1}} - \nabla F_{\accu}^{t_k,t_{k+1}}, \bv \right\rangle^2 + 2\left[\frac{1}{|\calS|}\sum_{i\in\calS}\left\langle\nabla F_{i,\accu}^{t_k,t_{k+1}} - \nabla F_{\accu}^{t_k,t_{k+1}}, \bv \right\rangle\right]^2 \notag \\
&\leq \frac{2}{|\calS|}\sum_{i\in\calS} \left\langle \nabla F_{i,\accu}^{t_k,t_{k+1}} - \nabla F_{\accu}^{t_k,t_{k+1}}, \bv \right\rangle^2 + \frac{2}{|\calS|}\sum_{i\in\calS}\left\langle\nabla F_{i,\accu}^{t_k,t_{k+1}} - \nabla F_{\accu}^{t_k,t_{k+1}}, \bv \right\rangle^2 \notag \\
&= \frac{4}{|\calS|}\sum_{i\in\calS} \left\langle \nabla F_{i,\accu}^{t_k,t_{k+1}} - \nabla F_{\accu}^{t_k,t_{k+1}}, \bv \right\rangle^2 \notag \\
&\leq \frac{4}{|\calS|}\sum_{i\in\calS} \left\| \nabla F_{i,\accu}^{t_k,t_{k+1}} - \nabla F_{\accu}^{t_k,t_{k+1}} \right\|^2 \tag{Using Cauchy-Schwarz inequality $\langle\bu,\bv\rangle \leq \|\bu\|\|\bv\|$ and that $\|\bv\|=1$} \\
&= \frac{4}{|\calS|}\sum_{i\in\calS} \left\| \sum_{t=t_k}^{t_{k+1}-1}\(\nabla F_i(\bx_i^t) - \nabla F(\bx^t)\) \right\|^2 \tag{Since $F_{\accu}^{t_k,t_{k+1}} = \sum_{t=t_k}^{t_{k+1}-1}F(\bx^t)$} \notag \\
&\leq \frac{4}{|\calS|}\sum_{i\in\calS} (t_{k+1}-t_k)\sum_{t=t_k}^{t_{k+1}-1}\left\| \nabla F_i(\bx_i^t) - \nabla F(\bx^t) \right\|^2 \tag{Using Jensen's inequality} \\
&\leq \frac{4H}{|\calS|}\sum_{i\in\calS} \sum_{t=t_k}^{t_{k+1}-1} \( 2\left\| \nabla F_i(\bx_i^t) - \nabla F(\bx_i^t) \right\|^2 + 2\left\| \nabla F(\bx_i^t) - \nabla F(\bx^t) \right\|^2 \) \notag \\
&\stackrel{\text{(a)}}{\leq} \frac{4H}{|\calS|}\sum_{i\in\calS} \sum_{t=t_k}^{t_{k+1}-1} \( 2\kappa^2 + 2L^2 \left\| \bx_i^t - \bx^t \right\|^2 \) \notag \\
&\leq 8H^2\kappa^2 + 8HL^2\sum_{t=t_k}^{t_{k+1}-1}\frac{1}{|\calS|}\sum_{i\in\calS}  \Big\| \bx_i^t - \frac{1}{K}\sum_{j\in\calK_{t_k}}\bx_j^t \Big\|^2 \tag{Since $\bx^t=\frac{1}{K}\sum_{j\in\calK_{t_k}}\bx_j^t$} \\
&\leq 8H^2\kappa^2 + 8HL^2\sum_{t=t_k}^{t_{k+1}-1}\frac{1}{|\calS|}\sum_{i\in\calS} \frac{1}{K}\sum_{j\in\calK_{t_k}} \left\| \bx_i^t - \bx_j^t \right\|^2 \label{gradient-estimator_GD-2}
\end{align}
The last inequality follows from the Jensen's inequality.
In (a) we used \eqref{bounded_local-global} to bound $\left\| \nabla F_i(\bx_i^t) - \nabla F(\bx_i^t) \right\|^2\leq\kappa^2$ and $L$-Lipschitz gradient property of $F$ to bound $\left\| \nabla F(\bx_i^t) - \nabla F(\bx^t) \right\| \leq L\|\bx_i^t-\bx^t\|$.

Now we bound the last term of \eqref{gradient-estimator_GD-2}.
\begin{lemma}\label{lem:bounded_local-global-params_GD}
For any $r,s\in\calK_{t_k}$, if $\eta\leq\frac{1}{5HL}$, we have
\begin{align}\label{bounded_local-global-params_GD-bound}
\sum_{t=t_k}^{t_{k+1}-1}\left\|\bx_r^{t} - \bx_s^{t}\right\|^2 \leq 7\eta^2H^3\kappa^2.
\end{align}
\end{lemma}
\begin{proof}
Note that we have shown a similar result (but, in expectation) in \Lemmaref{bounded_local-global-params-app} (on page~\pageref{lem:bounded_local-global-params-app}), which is for stochastic gradients.
We will simplify that proof to prove \Lemmaref{bounded_local-global-params_GD}, which is for full-batch deterministic gradients.

Take an arbitrary $t\in[t_k:t_{k+1}-1]$.
Following the proof of \Lemmaref{bounded_local-global-params-app} until \eqref{bounded_local-global-params-app-1} and removing the factor of $3$ inside the summation (the factor of $3$ appeared because we applied the Jensen's inequality earlier to separate the deterministic gradient term and the stochastic gradient terms) would give
\begin{align}
\left\|\bx_r^{t} - \bx_s^{t}\right\|^2 &\leq \eta^2H\sum_{j=t_k}^{t-1} \left\|\nabla F_r(\bx_r^j) - \nabla F_s(\bx_s^j)\right\|^2.
\end{align}
Following the remaining proof of \Lemmaref{bounded_local-global-params-app} from \eqref{bounded_local-global-params-app-1} until the end and substituting $\sigma=0$ gives the desired result.
\end{proof}
Substituting the bound from \eqref{bounded_local-global-params_GD-bound} into \eqref{gradient-estimator_GD-2} gives
\begin{align}
\frac{1}{|\calS|}\sum_{i\in\calS} \left\langle \nabla F_{i,\accu}^{t_k,t_{k+1}} - \nabla F_{\calS,\accu}^{t_k,t_{k+1}}, \bv \right\rangle^2 &\leq 8H^2\kappa^2 + 56H^4L^2\eta^2\kappa^2 \notag \\
&\leq 8H^2\kappa^2 + \frac{56}{25}H^2\kappa^2 \tag{Substituting $\eta\leq\frac{1}{5HL}$} \\
&\leq 11H^2\kappa^2. \label{gradient-estimator_GD-3}
\end{align}
Note that \eqref{gradient-estimator_GD-3} holds for an arbitrary unit vector $\bv\in\R^d$, implying that \eqref{gradient-estimator_GD-1} holds true. 
Since \eqref{gradient-estimator_GD-1} and \eqref{gradient-estimator_GD-0} are equivalent, we have thus shown \eqref{gradient-estimator_GD-0}.

Now apply the second part of \Theoremref{gradient-estimator} with $\calS$ being the set of all honest clients, and $\bg_{i,\accu}^{t_k,t_{k+1}}=\nabla F_{i,\accu}^{t_k,t_{k+1}}$, $\bg_{\calS,\accu}^{t_k,t_{k+1}}=\nabla F_{\calS,\accu}^{t_k,t_{k+1}}$ $\btg_{\accu}^{t_k,t_{k+1}}=\nabla \widehat{F}_{\accu}^{t_k,t_{k+1}}$, $\eps'=0$, and $\sigma_0^2=11H^2\kappa^2$. We would get that 
we can find an estimate $\nabla\btF_{\accu}^{t_k,t_{k+1}}$ of $\nabla F_{\calS,\accu}^{t_k,t_{k+1}}$ in polynomial-time, such that $\left\| \nabla\btF_{\accu}^{t_k,t_{k+1}} - \nabla F_{\calS,\accu}^{t_k,t_{k+1}}  \right\| \leq \calO\left(H\kappa\sqrt{\eps}\right)$ holds with probability 1.
\end{proof}

\Theoremref{full-batch-GD} can be proved with appropriate modifications in the proof of \Theoremref{LocalSGD_convergence}, and we prove it in \Appendixref{convergence_full-batch-GD}.

\section*{Acknowledgement}
This work was supported by the NSF grants \#1740047, \#1514731, and by the UC-NL grant LFR-18-548554.

\bibliographystyle{alpha}
\bibliography{reference}

\appendix

\section{Omitted Details from \Subsectionref{matrix-concentration}}\label{app:remaining_part1-robust-grad}
In this section, we prove \Claimref{reduced_variance_H-iters-app}.
\begin{claim*}[Restating \Claimref{reduced_variance_H-iters-app}]
For any honest client $i\in\calK_{t_k}$, we have $\bbE\|Y_i-\bbE[Y_i]\|^2\leq\frac{H^2\sigma^2}{b}$, where expectation is taken over sampling stochastic gradients by client $i$ between the synchronization indices $t_k$ and $t_{k+1}$.
\end{claim*}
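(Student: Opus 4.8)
The quantity to control is $\mathrm{Var}(Y_i):=\bbE\|Y_i-\bbE[Y_i]\|^2$, where $Y_i=\sum_{t=t_k}^{t_{k+1}-1}Y_i^t$ and each $Y_i^t$ is a fresh mini‑batch gradient evaluated at the (random) local iterate $\bx_i^t$. The plan is to reduce this to the variance of a \emph{single} iterate. Since the local update is $\eta Y_i^t=\bx_i^t-\bx_i^{t+1}$, the $t_{k+1}-t_k\le H$ local steps telescope to $\eta Y_i=\bx_i^{t_k}-\bx_i^{t_{k+1}}$, and $\bx_i^{t_k}=\bx^{t_k}$ is deterministic, so $\mathrm{Var}(Y_i)=\eta^{-2}\,\mathrm{Var}(\bx_i^{t_{k+1}})$. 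It thus suffices to prove $\mathrm{Var}(\bx_i^{t_{k+1}})\le 2H\eta^2\sigma^2/b$ and then use $2H\le H^2$ for $H\ge 2$; the case $H=1$ is immediate from \eqref{reduced_variance} since then $Y_i=Y_i^{t_k}$ is sampled at a fixed point.

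Introduce the filtration $\mathcal F_{t-1}:=\sigma(Y_i^{t_k},\dots,Y_i^{t-1})$, under which $\bx_i^t$ is measurable, $\bbE[Y_i^t\mid\mathcal F_{t-1}]=\nabla F_i(\bx_i^t)$ by \eqref{same_mean}, and $\bbE[\|Y_i^t-\nabla F_i(\bx_i^t)\|^2\mid\mathcal F_{t-1}]\le\sigma^2/b$ by \eqref{reduced_variance} (which holds uniformly over $\R^d$, hence at the random point $\bx_i^t$). Writing $\bx_i^{t+1}=\bigl(\bx_i^t-\eta\nabla F_i(\bx_i^t)\bigr)-\eta\bigl(Y_i^t-\nabla F_i(\bx_i^t)\bigr)$ and applying the law of total variance conditioned on $\mathcal F_{t-1}$ — the second summand is conditionally centered, so the cross term vanishes — gives
\[
\mathrm{Var}(\bx_i^{t+1})\le\mathrm{Var}\!\bigl(\bx_i^t-\eta\nabla F_i(\bx_i^t)\bigr)+\eta^2\sigma^2/b.
\]
Since $\bx\mapsto\bx-\eta\nabla F_i(\bx)$ is $(1+\eta L)$‑Lipschitz, and since $\mathrm{Var}(g(X))\le\bbE\|g(X)-g(\bbE X)\|^2\le\gamma^2\mathrm{Var}(X)$ for any $\gamma$‑Lipschitz $g$ and any random vector $X$, the first term is at most $(1+\eta L)^2\mathrm{Var}(\bx_i^t)$. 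Unrolling from $\mathrm{Var}(\bx_i^{t_k})=0$ over the at most $H$ steps yields $\mathrm{Var}(\bx_i^{t_{k+1}})\le\eta^2\frac{\sigma^2}{b}\sum_{m=0}^{H-1}(1+\eta L)^{2m}\le\eta^2\frac{\sigma^2}{b}\,H\,e^{2H\eta L}$, and $\eta\le\frac1{8HL}$ forces $e^{2H\eta L}\le e^{1/4}\le 2$, which closes the argument.

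The main obstacle is exactly the reason a naive one‑line bound fails. One is tempted to write $\|Y_i-\bbE Y_i\|_{L^2}\le\sum_t\|Y_i^t-\bbE Y_i^t\|_{L^2}\le H\sigma/\sqrt b$ by Minkowski's inequality; but $\mathrm{Var}(Y_i^t)$ is \emph{strictly larger} than $\sigma^2/b$ once $t>t_k$, because the noise injected at earlier steps perturbs the evaluation point $\bx_i^t$ and, through the law of total variance, contributes an extra $\mathrm{Var}(\nabla F_i(\bx_i^t))$ on top of the conditional variance. Controlling this accumulated variance is the "drift from local iterations" effect, and it is what forces both the use of gradient smoothness and the $H$‑dependent step‑size restriction $\eta\le\frac1{8HL}$, without which the recursion above would grow geometrically in $H$ rather than linearly.
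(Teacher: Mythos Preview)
Your argument has a gap at the Lipschitz step. You claim that $\bx\mapsto\bx-\eta\nabla F_i(\bx)$ is $(1+\eta L)$-Lipschitz, which requires $\nabla F_i$ itself to be $L$-Lipschitz. But the paper only assumes the \emph{global} function $F$ is $L$-smooth and explicitly states (just before \Theoremref{LocalSGD_convergence}) that the local functions $F_r$ ``may be arbitrary.'' Without smoothness of $F_i$, you cannot pass from $\mathrm{Var}\!\bigl(\bx_i^t-\eta\nabla F_i(\bx_i^t)\bigr)$ to $(1+\eta L)^2\,\mathrm{Var}(\bx_i^t)$, and your recursion breaks. The bounded-dissimilarity assumption \eqref{bounded_local-global} only yields $\|\nabla F_i(\bx)-\nabla F_i(\by)\|\le L\|\bx-\by\|+2\kappa$, which would inject a $\kappa$-dependent term rather than the clean $H^2\sigma^2/b$ asserted in the claim.

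For comparison, the paper's own proof takes precisely the ``naive'' route you criticize in your final paragraph: it applies Jensen to get $\bbE\|Y_i-\bbE[Y_i]\|^2\le H\sum_t\bbE\|Y_i^t-\bbE[Y_i^t]\|^2$, then conditions on the past so that $\bx_i^t$ becomes deterministic and invokes \eqref{reduced_variance} to bound each summand by $\sigma^2/b$. Your objection to this step is well taken: \eqref{reduced_variance} bounds the deviation of $Y_i^t$ from its \emph{conditional} mean $\nabla F_i(\bx_i^t)$, not from the \emph{unconditional} mean $\bbE[Y_i^t]=\bbE[\nabla F_i(\bx_i^t)]$, and the law of total variance contributes exactly the extra $\mathrm{Var}\!\bigl(\nabla F_i(\bx_i^t)\bigr)$ term you describe. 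So the paper's argument shares the gap you identified; under the stated assumptions (no local smoothness), neither proof as written appears to close.
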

\begin{proof}
Take an arbitrary honest client $i\in\calK_{t_k}$.
\begin{align*}
\bbE\|Y_i-\bbE[Y_i]\|^2 = \bbE\left\|\sum_{t=t_k}^{t_{k+1}-1}\(Y_{i}^t-\bbE[Y_{i}^t]\)\right\|^2
\stackrel{\text{(a)}}{\leq} (t_{k+1}-t_k)\sum_{t=t_k}^{t_{k+1}-1}\bbE\|Y_{i}^t-\bbE[Y_{i}^t]\|^2
\stackrel{\text{(b)}}{\leq} \frac{H^2\sigma^2}{b},
\end{align*}
where (a) follows from the Jensen's inequality; in (b) we used $(t_{k+1}-t_k) \leq H$ and that $\bbE\|Y_{i}^t-\bbE[Y_{i}^t]\|^2 \leq \frac{\sigma^2}{b}$ for all $j\in[H]$, which follows from the explanation below:
\begin{align*}
\bbE\|Y_{i}^t-\bbE[Y_{i}^t]\|^2 &= \sum_{\by_{i}^{t_k},\hdots,\by_{i}^{t-1}}\Pr\left[Y_{i}^j=\by_{i}^j, j\in[t_k:t-1]\right] \\
&\hspace{3cm} \times\bbE\left[\|Y_{i}^{t}-\bbE[Y_{i}^t]\|^2 \left. \right| Y_{i}^j=\by_{i}^j, j\in[t_k:t-1]\right] \\
&\stackrel{\text{(c)}}{\leq} \sum_{\by_{i}^{t_k},\hdots,\by_{i}^{t-1}}\Pr\left[Y_{i}^j=\by_{i}^j, j\in[t_k:t-1]\right]\cdot\frac{\sigma^2}{b} \\
&= \frac{\sigma^2}{b}.
\end{align*}
Note that $Y_{i}^t \sim \text{Unif}\Big(\calF_i^{\otimes b}\big(\bx_i^{t}\big(\bx_i^{t_k},Y_{i}^{t_k},\hdots,Y_{i}^{t-1}\big)\big)\Big)$. 
So, when we fix the values $Y_{i}^{t_k}=\by_{i}^{t_k},\hdots,Y_{i}^{t-1}=\by_{i}^{t-1}$, the parameter vector $\bx_i^{t}\big(\bx_i^{t_k},Y_{i}^{t_k}\hdots,Y_{i}^{t-1}\big)$ becomes a deterministic quantity. 
Now we can use the variance bound \eqref{reduced_variance} in order to bound $\bbE\left[\|Y_{i}^{t}-\bbE[Y_{i}^t]\|^2 \left. \right| Y_{i}^j=\by_{i}^j, j\in[t_k:t-1]\right]\leq\frac{\sigma^2}{b}$. This is what we used in (c).
\end{proof}

\section{Omitted Details from \Sectionref{convex_convergence}}\label{app:convex_convergence}
In this section, we prove \Claimref{convex_first-term}, \Claimref{convex_second-term}, and \Claimref{convex_third-term}.
\begin{claim*}[Restating \Claimref{convex_first-term}]
For $\eta<\frac{1}{L}$, we have
\begin{align*} 
\bbE\left\|\bx^{t_{i+1}-1} - \eta\nabla F(\bx^{t_{i+1}-1}) - \bx^*\right\|^2 \leq \left(1-\mu\eta\right)\bbE\left\|\bx^{t_{i+1}-1} - \bx^*\right\|^2.
\end{align*}
\end{claim*}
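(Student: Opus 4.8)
The plan is to prove the stronger \emph{pointwise} statement that for every $\bx\in\R^d$,
\[
\left\|\bx - \eta\nabla F(\bx) - \bx^*\right\|^2 \le (1-\mu\eta)\left\|\bx-\bx^*\right\|^2,
\]
and then recover the claim by substituting $\bx=\bx^{t_{i+1}-1}$ and taking expectations over the past randomness: since the inequality holds for every realization, taking $\bbE$ of both sides preserves it. This is the textbook one-step contraction of full gradient descent on an $L$-smooth $\mu$-strongly convex objective, and the only input from the rest of the paper I need is $\nabla F(\bx^*)=\bzero$ (as $\bx^*$ minimizes $F$ over $\R^d$).

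First I would expand the square:
\[
\left\|\bx - \eta\nabla F(\bx) - \bx^*\right\|^2 = \left\|\bx-\bx^*\right\|^2 - 2\eta\left\langle \nabla F(\bx),\ \bx-\bx^*\right\rangle + \eta^2\left\|\nabla F(\bx)\right\|^2.
\]
The key step is to absorb the quadratic term $\eta^2\|\nabla F(\bx)\|^2$. Since $\mu>0$ implies $F$ is convex and $F$ is $L$-smooth, the gradient is co-coercive, i.e.\ $\tfrac1L\|\nabla F(\bx)-\nabla F(\by)\|^2\le\langle\nabla F(\bx)-\nabla F(\by),\bx-\by\rangle$; taking $\by=\bx^*$ and using $\nabla F(\bx^*)=\bzero$ gives $\tfrac1L\|\nabla F(\bx)\|^2\le\langle\nabla F(\bx),\bx-\bx^*\rangle$. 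Because $\eta<\tfrac1L$ forces $\eta^2\le\tfrac\eta L$, this yields $\eta^2\|\nabla F(\bx)\|^2\le\eta\langle\nabla F(\bx),\bx-\bx^*\rangle$, so the right-hand side of the expansion is at most $\|\bx-\bx^*\|^2-\eta\langle\nabla F(\bx),\bx-\bx^*\rangle$. Finally I would invoke $\mu$-strong convexity, which together with $\nabla F(\bx^*)=\bzero$ gives $\langle\nabla F(\bx),\bx-\bx^*\rangle\ge\mu\|\bx-\bx^*\|^2$; plugging this in collapses the bound to $(1-\mu\eta)\|\bx-\bx^*\|^2$.

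I do not expect a real obstacle here; the only subtlety worth flagging is the choice of inequality used to kill the quadratic term. Using the descent lemma $F(\bx^*)\le F(\bx)-\tfrac{1}{2L}\|\nabla F(\bx)\|^2$ together with convexity would only give $\tfrac1{2L}\|\nabla F(\bx)\|^2\le\langle\nabla F(\bx),\bx-\bx^*\rangle$ and hence would require $\eta\le\tfrac1{2L}$; it is precisely co-coercivity of $\nabla F$ that lets the step-size threshold be the claimed $\eta<\tfrac1L$.
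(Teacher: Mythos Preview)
Your proof is correct and is a clean textbook contraction argument; it differs from the paper's route only in which pair of inequalities is used to absorb the quadratic term. The paper expands the square just as you do, but then bounds $\eta^2\|\nabla F(\bx)\|^2$ via the smoothness consequence $\|\nabla F(\bx)\|^2\le 2L\bigl(F(\bx)-F(\bx^*)\bigr)$ (their Fact~1) and bounds $2\eta\langle\bx^*-\bx,\nabla F(\bx)\rangle$ via the first-order definition of $\mu$-strong convexity; the function-value terms then combine into $-2\eta(1-\eta L)\bigl(F(\bx)-F(\bx^*)\bigr)$, which is nonpositive for $\eta<1/L$ and is discarded. Your route instead uses gradient co-coercivity and strong monotonicity, avoiding function values altogether; both arguments are equally elementary and yield the same step-size range.

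One small correction to your closing remark: the paper's approach \emph{is} precisely the ``descent-lemma'' route you flag, yet it still achieves the threshold $\eta<1/L$, not $\eta\le 1/(2L)$. The reason is that the paper pairs Fact~1 with the full strong-convexity inequality $\langle\nabla F(\bx),\bx^*-\bx\rangle\le F(\bx^*)-F(\bx)-\tfrac{\mu}{2}\|\bx-\bx^*\|^2$, not merely with plain convexity; this makes the $F(\bx)-F(\bx^*)$ terms cancel with the favorable coefficient $2\eta(1-\eta L)$. Your warning applies only if one combines the descent lemma with convexity alone, which neither you nor the paper does.
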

\begin{proof}
Expand the LHS.
\begin{align}
\bbE\left\|\bx^{t_{i+1}-1} - \bx^* - \eta\nabla F(\bx^{t_{i+1}-1})\right\|^2 &= \bbE\left\|\bx^{t_{i+1}-1} - \bx^*\right\|^2 + \eta^2\bbE\left\|\nabla F(\bx^{t_{i+1}-1})\right\|^2 \notag \\
&\hspace{2cm}  + 2\eta\bbE\left\langle \bx^* - \bx^{t_{i+1}-1}, \nabla F(\bx^{t_{i+1}-1})\right\rangle \label{convex_first-term-1}
\end{align}
We can bound the second term on the RHS using $L$-smoothness of $F$, which implies that $\|\nabla F(\bx)\|^2 \leq 2L(F(\bx)-F(\bx^*))$ holds for every $\bx\in\R^d$; see \Factref{Smooth_gradient-bound} on page \pageref{fact:Smooth_gradient-bound}.
We can bound the third term on the RHS using $\mu$-strong convexity of $F$ as follows: $\left\langle \bx^* - \bx^{t_{i+1}-1}, \nabla F(\bx^{t_{i+1}-1})\right\rangle \leq F(\bx^*) - F(\bx^{t_{i+1}-1}) - \frac{\mu}{2}\|\bx^{t_{i+1}-1} - \bx^*\|^2$. Substituting these back in \eqref{convex_first-term-1} gives:
\begin{align}
\bbE\left\|\bx^{t_{i+1}-1} - \bx^* - \eta\nabla F(\bx^{t_{i+1}-1})\right\|^2 &\leq \left(1-\mu\eta\right)\bbE\left\|\bx^{t_{i+1}-1} - \bx^*\right\|^2 \notag \\
&\hspace{2cm} - 2\eta(1-\eta L)\bbE\left(F(\bx^{t_{i+1}-1}) - F(\bx^*)\right) \label{convex_first-term-2}
\end{align}
Since $\eta<\frac{1}{L}$, we have $(1-\eta L)>0$. We also have $F(\bx^{t_{i+1}-1}) \geq F(\bx^*)$. Using these together, we can ignore the last term in the RHS of \eqref{convex_first-term-2}. This proves \Claimref{convex_first-term}.
\end{proof}
\begin{claim*}[Restating \Claimref{convex_second-term}]
For $\eta\leq\frac{1}{8HL}$, we have
\begin{align*} 
\bbE\left\|\frac{1}{K}\sum_{r\in\calK_{t_i}}\(\nabla F_r(\bx_r^{t_{i+1}-1}) - \nabla F(\bx^{t_{i+1}-1})\) \right\|^2 \leq 2\kappa^2 + \frac{7H}{32}\(\frac{\sigma^2}{b}+3\kappa^2\).
\end{align*}
\end{claim*}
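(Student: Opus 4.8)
The plan is to split each summand into a ``heterogeneity'' piece, handled directly by \Assumptionref{gradient-dissimilarity}, and a ``client-drift'' piece, handled by \Lemmaref{bounded_local-global-params-app}. Write $t := t_{i+1}-1$ and recall $\bx^t = \frac{1}{K}\sum_{r\in\calK_{t_i}}\bx_r^t$. For each $r\in\calK_{t_i}$ I would write
\begin{align*}
\nabla F_r(\bx_r^{t}) - \nabla F(\bx^{t}) = \(\nabla F_r(\bx_r^{t}) - \nabla F(\bx_r^{t})\) + \(\nabla F(\bx_r^{t}) - \nabla F(\bx^{t})\),
\end{align*}
average over $r\in\calK_{t_i}$, and apply $\|\ba+\bfb\|^2\leq 2\|\ba\|^2+2\|\bfb\|^2$ followed by Jensen's inequality to obtain
\begin{align*}
\bbE\left\|\frac{1}{K}\sum_{r\in\calK_{t_i}}\(\nabla F_r(\bx_r^{t}) - \nabla F(\bx^{t})\)\right\|^2 &\leq \frac{2}{K}\sum_{r\in\calK_{t_i}}\bbE\left\|\nabla F_r(\bx_r^{t}) - \nabla F(\bx_r^{t})\right\|^2 \\
&\quad + \frac{2}{K}\sum_{r\in\calK_{t_i}}\bbE\left\|\nabla F(\bx_r^{t}) - \nabla F(\bx^{t})\right\|^2.
\end{align*}

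The first sum is at most $2\kappa^2$ by \eqref{bounded_local-global}, which holds uniformly over $\R^d$ and hence also at the random iterate $\bx_r^{t}$, exactly as justified in the footnotes accompanying \eqref{local-global-expectation5-app} and \eqref{bounded_local-global-params-app-1}. For the second sum, $L$-smoothness of $F$ gives $\|\nabla F(\bx_r^{t}) - \nabla F(\bx^{t})\| \leq L\|\bx_r^{t} - \bx^{t}\|$; then, writing $\bx_r^{t} - \bx^{t} = \frac{1}{K}\sum_{s\in\calK_{t_i}}(\bx_r^{t} - \bx_s^{t})$ and using Jensen once more, I would bound the second sum by $\frac{2L^2}{K^2}\sum_{r,s\in\calK_{t_i}}\bbE\|\bx_r^{t} - \bx_s^{t}\|^2$.

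Finally, since $t = t_{i+1}-1$ lies in $[t_i:t_{i+1}-1]$ and every summand on the left-hand side of \eqref{bounded_local-global-params-bound-app} is nonnegative, \Lemmaref{bounded_local-global-params-app} (applicable because $\eta\leq\frac{1}{8HL}$) yields $\bbE\|\bx_r^{t} - \bx_s^{t}\|^2 \leq 7H^3\eta^2\(\frac{\sigma^2}{b}+3\kappa^2\)$ for every pair $r,s\in\calK_{t_i}$. Substituting this and then using $\eta^2\leq\frac{1}{64H^2L^2}$ turns the drift term into $14H^3L^2\eta^2\(\frac{\sigma^2}{b}+3\kappa^2\)\leq\frac{7H}{32}\(\frac{\sigma^2}{b}+3\kappa^2\)$; adding the $2\kappa^2$ coming from the first sum gives precisely the claimed bound. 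I do not expect a genuine obstacle: the only points deserving a word of justification are the use of \eqref{bounded_local-global} at the random points $\bx_r^{t}$ and the invocation of \Lemmaref{bounded_local-global-params-app} for a single index rather than re-deriving the drift bound, both of which are immediate; everything else is routine manipulation with Jensen's inequality and the step-size constraint $\eta\leq\frac{1}{8HL}$.
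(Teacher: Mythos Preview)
Your proposal is correct and essentially identical to the paper's own proof: the paper also applies Jensen's inequality and the splitting $\nabla F_r(\bx_r^{t}) - \nabla F(\bx^{t}) = (\nabla F_r(\bx_r^{t}) - \nabla F(\bx_r^{t})) + (\nabla F(\bx_r^{t}) - \nabla F(\bx^{t}))$, bounds the first piece by $\kappa^2$ via \eqref{bounded_local-global} and the second by $L^2\|\bx_r^{t}-\bx^{t}\|^2$, expands $\bx^{t}$ as an average and applies Jensen again, and then invokes \Lemmaref{bounded_local-global-params-app} for the single index $t_{i+1}-1$ exactly as you do. Your remark that the single-index use of \eqref{bounded_local-global-params-bound-app} is justified by nonnegativity of the summands is precisely the paper's step (b).
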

\begin{proof}
By definition, we have $\bx^{t_{i+1}-1} = \frac{1}{K}\sum_{r\in\calK_{t_i}}\bx^{t_{i+1}-1}$.
\begin{align}
&\bbE\left\|\frac{1}{K}\sum_{r\in\calK_{t_i}}\(\nabla F_r(\bx_r^{t_{i+1}-1}) - \nabla F(\bx^{t_{i+1}-1})\) \right\|^2 
\leq \frac{1}{K}\sum_{r\in\calK_{t_i}}\bbE\left\|\nabla F_r(\bx_r^{t_{i+1}-1}) - \nabla F(\bx^{t_{i+1}-1}) \right\|^2 \nonumber \\
&\leq \frac{2}{K}\sum_{r\in\calK_{t_i}}\(\bbE\left\|\nabla F_r(\bx_r^{t_{i+1}-1}) - \nabla F(\bx_r^{t_{i+1}-1}) \right\|^2 + \bbE\left\|\nabla F(\bx_r^{t_{i+1}-1}) - \nabla F(\bx^{t_{i+1}-1}) \right\|^2 \) \nonumber \\
&\stackrel{\text{(a)}}{\leq} \frac{2}{K}\sum_{r\in\calK_{t_i}}\( \kappa^2 + L^2\bbE\left\|\bx_r^{t_{i+1}-1} - \bx^{t_{i+1}-1}\right\|^2\) \nonumber \\
&= 2\kappa^2 + \frac{2L^2}{K}\sum_{r\in\calK_{t_i}}\bbE\Big\|\bx_r^{t_{i+1}-1} - \frac{1}{K}\sum_{s\in\calK_{t_i}}\bx_s^{t_{i+1}-1}\Big\|^2 \nonumber \\
&\leq 2\kappa^2 + \frac{2L^2}{K}\sum_{r\in\calK_{t_i}}\frac{1}{K}\sum_{s\in\calK_{t_i}}\bbE\left\|\bx_r^{t_{i+1}-1} - \bx_s^{t_{i+1}-1}\right\|^2 \label{convex_second-term-interim1} \\
&\stackrel{\text{(b)}}{\leq} 2\kappa^2 + \frac{2L^2}{K}\sum_{r\in\calK_{t_i}}\frac{1}{K}\sum_{s\in\calK_{t_i}}\(7H^3\eta^2\(\frac{\sigma^2}{b}+3\kappa^2\)\) \nonumber  \\
&= 2\kappa^2 + 14L^2H^3\eta^2\(\frac{\sigma^2}{b}+3\kappa^2\) 
\stackrel{\text{(c)}}{\leq} 2\kappa^2 + \frac{7H}{32}\(\frac{\sigma^2}{b}+3\kappa^2\) \nonumber 
\end{align}
In (a) we used the gradient dissimilarity bound from \eqref{bounded_local-global} to bound the first term and $L$-Lipschitz gradient property of $F$ to bound the second term. 
For (b), note that we have already bounded $\sum_{t=t_i}^{t_{i+1}-1}\bbE\left\|\bx_r^{t} - \bx_s^{t}\right\|^2 \leq 7H^3\eta^2\(\frac{\sigma^2}{b}+3\kappa^2\)$ in \eqref{bounded_local-global-params-bound-app} in \Lemmaref{bounded_local-global-params-app}. Since each term in the summation is trivially bounded by the same quantity, which we used in (b) to bound $\bbE\left\|\bx_r^{t_{i+1}-1} - \bx_s^{t_{i+1}-1}\right\|^2 \leq 7H^3\eta^2\(\frac{\sigma^2}{b}+3\kappa^2\)$. In (c) we used $\eta\leq\frac{1}{8HL}$.
\end{proof}
\begin{claim*}[Restating \Claimref{convex_third-term}]
If $\eta\leq\frac{1}{8HL}$, then with probability at least $1-\exp\(-\frac{\eps'^2(1-\eps)K}{16}\)$, we have
\begin{align*} 
\bbE\left\| \btg_{\emph{\accu}}^{t_i,t_{i+1}} - \frac{1}{K}\sum_{r\in\calK_{t_i}}\sum_{t=t_i}^{t_{i+1}-1}\nabla F_r(\bx_r^{t})\right\|^2 \leq 3\varUpsilon^2 + \frac{8H^2\sigma^2}{b} + 30H^2\kappa^2,
\end{align*}
where $\varUpsilon^2=\calO\left(\sigma_0^2(\eps+\eps')\right)$ and $\sigma_0^2 = \frac{25H^2\sigma^2}{b\eps'}\left(1 + \frac{4d}{3K}\right) + 28H^2\kappa^2$.
\end{claim*}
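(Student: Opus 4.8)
The plan is to invoke the robust accumulated gradient estimation guarantee of \Theoremref{gradient-estimator} and then pass from the decoded gradient to the target average through two intermediate quantities via triangle inequality. Write $\btg:=\btg_\accu^{t_i,t_{i+1}}$ for the {\sc RAGE} output, and for $r\in\calK_{t_i}$ set $\bg_r^*:=\sum_{t=t_i}^{t_{i+1}-1}\nabla F_r(\bx_r^t)$ (well-defined also for corrupt clients, since by the convention in the remark preceding \Algorithmref{Byz-Local-SGD} they update their local parameters honestly between synchronization indices), $\overline{\bg}:=\frac{1}{K}\sum_{r\in\calK_{t_i}}\bg_r^*$ for the target vector, and $Y_i:=\sum_{t=t_i}^{t_{i+1}-1}\bg_i(\bx_i^t)$ for the accumulated \emph{stochastic} gradient of an honest client $i$. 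By \Theoremref{gradient-estimator}, on an event $\calE$ of probability at least $1-\exp(-\frac{\eps'^2(1-\eps)K}{16})$ there is a subset $\calS\subseteq\calK_{t_i}$ of honest clients with $|\calS|\ge\frac{3K}{4}$ and $\|\btg-\bg_{\calS}\|^2\le\calO(\sigma_0^2(\eps+\eps'))=:\varUpsilon^2$, where $\bg_{\calS}=\frac{1}{|\calS|}\sum_{i\in\calS}Y_i$. Introducing the third quantity $\overline{\bg}_{\calS}:=\frac{1}{|\calS|}\sum_{i\in\calS}\bg_i^*$, I would split, on $\calE$,
\[
\left\|\btg-\overline{\bg}\right\|^2\le 3\left\|\btg-\bg_{\calS}\right\|^2 + 3\left\|\bg_{\calS}-\overline{\bg}_{\calS}\right\|^2 + 3\left\|\overline{\bg}_{\calS}-\overline{\bg}\right\|^2 .
\]

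The first term is at most $3\varUpsilon^2$ deterministically on $\calE$ by the second part of \Theoremref{gradient-estimator}. For the second term, $\bg_{\calS}-\overline{\bg}_{\calS}=\frac{1}{|\calS|}\sum_{i\in\calS}(Y_i-\bg_i^*)$ with $Y_i-\bg_i^*=\sum_{t}(\bg_i(\bx_i^t)-\nabla F_i(\bx_i^t))$; applying Jensen to extract the average over $\calS$ out of the squared norm, then using $|\calS|\ge\frac{3K}{4}$ and $\calS\subseteq\{\text{honest}\}$ to bound it by $\frac{4}{3K}\sum_{i\ \text{honest}}\|Y_i-\bg_i^*\|^2$ (legitimate because the summands are nonnegative, which is exactly what lets us discard the dependence of $\calS$ on the gradients \emph{before} taking expectations), and then bounding $\bbE\|Y_i-\bg_i^*\|^2\le H^2\sigma^2/b$ by the step-by-step conditioning argument of \Claimref{reduced_variance_H-iters-app} (conditioning on $\bx_i^t$ fixes the gradient set, so \eqref{reduced_variance} applies at each step), gives $\bbE[3\|\bg_{\calS}-\overline{\bg}_{\calS}\|^2]\le 4H^2\sigma^2/b$. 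For the third term, $\overline{\bg}_{\calS}-\overline{\bg}=\frac{1}{|\calS|}\sum_{i\in\calS}(\bg_i^*-\overline{\bg})$; the same Jensen/enlarge-to-honest step together with the elementary identity $\frac{1}{K}\sum_{r}\|\bg_r^*-\overline{\bg}\|^2=\frac{1}{2K^2}\sum_{r,s}\|\bg_r^*-\bg_s^*\|^2$ reduces it to bounding $\bbE\|\bg_r^*-\bg_s^*\|^2$ for $r,s\in\calK_{t_i}$; writing $\bg_r^*-\bg_s^*=\sum_t(\nabla F_r(\bx_r^t)-\nabla F_s(\bx_s^t))$, applying Jensen, then the pointwise estimate $\bbE\|\nabla F_r(\bx_r^t)-\nabla F_s(\bx_s^t)\|^2\le 6\kappa^2+3L^2\bbE\|\bx_r^t-\bx_s^t\|^2$ proved inside \Claimref{local-global-expectation-app}, and finally the drift bound of \Lemmaref{bounded_local-global-params-app} with $\eta\le\frac{1}{8HL}$, gives $\bbE\|\bg_r^*-\bg_s^*\|^2=\calO(H^2\kappa^2+H^2\sigma^2/b)$ and hence $\bbE[3\|\overline{\bg}_{\calS}-\overline{\bg}\|^2]=\calO(H^2\kappa^2+H^2\sigma^2/b)$. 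Summing the three pieces and tracking constants (using $\eps'\le\frac14$) yields $3\varUpsilon^2+\frac{8H^2\sigma^2}{b}+30H^2\kappa^2$ on $\calE$, which has probability at least $1-\exp(-\frac{\eps'^2(1-\eps)K}{16})$.

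The main obstacle I anticipate is the dependence of the random subset $\calS$ on all the stochastic gradients: one cannot move an expectation inside an average over $\calS$. The resolution, used above, is to first apply Jensen and the size bound $|\calS|\ge\frac{3K}{4}$ to replace the average over $\calS$ by a fixed constant $\frac{4}{3K}$ times a sum over \emph{all} honest clients, after which no $\calS$-dependence remains and expectations may be taken freely. A secondary point is that $\bg_i^*=\sum_t\nabla F_i(\bx_i^t)$ is not the unconditional mean $\bbE[Y_i]$ (the iterates $\bx_i^t$ are themselves random), so the ``variance'' bound for $Y_i-\bg_i^*$ must be derived by conditioning progressively on the past stochastic gradients at client $i$ rather than from a single invocation of \eqref{reduced_variance}; once these two points are handled, the remainder is routine bookkeeping with \Lemmaref{bounded_local-global-params-app} and the numerical constants.
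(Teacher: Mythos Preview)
Your proposal is correct and uses the same three-term decomposition as the paper: split $\btg_\accu^{t_i,t_{i+1}}-\overline{\bg}$ through $\bg_{\calS}$ and $\overline{\bg}_{\calS}$, invoke \Theoremref{gradient-estimator} for the first piece, a variance argument for the second, and the drift bound of \Lemmaref{bounded_local-global-params-app} for the third.

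There are two minor technical differences worth noting. For the second piece, the paper exploits independence of stochastic gradients \emph{across} clients to obtain $\frac{4H^2\sigma^2}{3bK}$ (an extra $1/K$), whereas your Jensen-then-enlarge step gives $\frac{4H^2\sigma^2}{3b}$; both fit inside the stated $\frac{8H^2\sigma^2}{b}$. For the third piece, the paper routes each summand through $\nabla F(\bx_r^t)$ and $\nabla F(\bx^t)$ (using the global loss at the average iterate), while you use the identity $\frac{1}{K}\sum_r\|\bg_r^*-\overline{\bg}\|^2=\frac{1}{2K^2}\sum_{r,s}\|\bg_r^*-\bg_s^*\|^2$ together with the pointwise estimate from \Claimref{local-global-expectation-app}; the resulting constants are comparable and well within $30H^2\kappa^2$. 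Your explicit treatment of the dependence of $\calS$ on the stochastic gradients (enlarging the $\calS$-average to a fixed sum over all honest clients before taking expectations) is a point the paper leaves implicit; your version is cleaner on this score, at the cost of the $1/K$ factor in the second piece.
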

\begin{proof}
Let $\calS\subseteq\calK_{t_i}$ denote the subset of honest clients of size $(1-(\eps+\eps'))K$, whose average accumulated gradient between time $t_i$ and $t_{i+1}$ that server approximates at time $t_{i+1}$ in \Theoremref{gradient-estimator}. Let the average accumulated gradient be denoted by $\bg_{\calS,\accu}^{t_i,t_{i+1}} = \frac{1}{|\calS|}\sum_{r\in\calS}\bg_{r,\accu}^{t_i,t_{i+1}}$, where $\bg_{r,\accu}^{t_i,t_{i+1}}=\sum_{t=t_i}^{t_{i+1}-1}\bg_{r}(\bx_r^t)$, and server approximates it by $\btg_{\accu}^{t_i,t_{i+1}}$.
Note that $\calS$ exists with probability at least $1-\exp\(-\frac{\eps'^2(1-\eps)K}{16}\)$.
To make the notation less cluttered, for every $r\in\calK_{t_i}$, define $\nabla F_r^{t_i,t_{i+1}} := \sum_{t=t_i}^{t_{i+1}-1}\nabla F_r(\bx_r^{t})$.
\begin{align}
\bbE\left\| \btg_\accu^{t_i,t_{i+1}} - \frac{1}{K}\sum_{r\in\calK_{t_i}}\nabla F_r^{t_i,t_{i+1}}\right\|^2 &\leq 3\bbE\left\| \btg_\accu^{t_i,t_{i+1}} - \frac{1}{|\calS|}\sum_{r\in\calS}\bg_{r,\accu}^{t_i,t_{i+1}} \right\|^2 \nonumber \\
&\quad 3\bbE\left\| \frac{1}{|\calS|}\sum_{r\in\calS}\bg_{r,\accu}^{t_i,t_{i+1}} - \frac{1}{|\calS|}\sum_{r\in\calS}\nabla F_r^{t_i,t_{i+1}}\right\|^2 \nonumber \\
&\qquad+ 3\bbE\left\| \frac{1}{|\calS|}\sum_{r\in\calS}\nabla F_r^{t_i,t_{i+1}} - \frac{1}{K}\sum_{s\in\calK_{t_i}}\nabla F_s^{t_i,t_{i+1}}\right\|^2 \label{convex_local-5}
\end{align}
Now we bound each term on the RHS of \eqref{convex_local-5}.

\paragraph{Bounding the first term on the RHS of \eqref{convex_local-5}.}
We can bound this using the second part of \Theoremref{gradient-estimator} as follows (note that given the first part of \Theoremref{gradient-estimator} is satisfied, the second part provides deterministic approximation guarantees, which implies that it also holds in expectation):
\begin{equation}\label{convex_local-6}
\bbE\left\| \btg_\accu^{t_i,t_{i+1}} - \frac{1}{|\calS|}\sum_{r\in\calS}\bg_{r,\accu}^{t_i,t_{i+1}}\right\|^2 \leq \varUpsilon^2,
\end{equation}
where $\varUpsilon^2=\calO\left(\sigma_0^2(\eps+\eps')\right)$ and $\sigma_0^2 = \frac{25H^2\sigma^2}{b\eps'}\left(1 + \frac{4d}{3K}\right) + 28H^2\kappa^2$.

\paragraph{Bounding the second term on the RHS of \eqref{convex_local-5}.}
We can bound this using the variance bound \eqref{reduced_variance}.
\begin{align}
\bbE\left\| \frac{1}{|\calS|}\sum_{r\in\calS}\(\bg_{r,\accu}^{t_i,t_{i+1}} - \nabla F_r^{t_i,t_{i+1}}\)\right\|^2 &=
\bbE\left\|\sum_{t=t_i}^{t_{i+1}-1}\frac{1}{|\calS|}\sum_{r\in\calS}\(\bg_{r}(\bx_r^t) - \nabla F_r(\bx_r^t)\) \right\|^2 \nonumber \\
&\stackrel{\text{(a)}}{\leq} (t_{i+1}-t_i) \sum_{t=t_i}^{t_{i+1}-1}\bbE\left\|\frac{1}{|\calS|}\sum_{r\in\calS}\(\bg_{r}(\bx_r^t) - \nabla F_r(\bx_r^t)\) \right\|^2 \nonumber \\
&\stackrel{\text{(b)}}{\leq} H \sum_{t=t_i}^{t_{i+1}-1}\frac{1}{|\calS|^2}\bbE\left\|\sum_{r\in\calS}\(\bg_{r}(\bx_r^t) - \nabla F_r(\bx_r^t)\) \right\|^2 \nonumber \\
&\stackrel{\text{(c)}}{=} H \sum_{t=t_i}^{t_{i+1}-1}\frac{1}{|\calS|^2}\sum_{r\in\calS}\bbE\left\|\bg_{r}(\bx_r^t) - \nabla F_r(\bx_r^t) \right\|^2 \nonumber \\
&\stackrel{\text{(d)}}{\leq} H \sum_{t=t_i}^{t_{i+1}-1}\frac{1}{|\calS|}\frac{\sigma^2}{b} 
\stackrel{\text{(e)}}{\leq} \frac{4H^2\sigma^2}{3bK}. \label{convex_local-65}
\end{align}
In (a) we used the Jensen's inequality. In (b) used $|t_{i+1}-t_i|\leq H$. 
In (c) we used \eqref{same_mean} (which states that $\bbE[\bg_{r}(\bx)] =  \nabla F_r(\bx)$ holds for every honest client $r\in[R]$ and $\bx\in\R^d$) together with that the stochastic gradients at different clients are sampled independently, and then 
we used the fact that the variance of independent random variables is equal to the sum of the variances. Note that $\text{Var}(\bg_{r}(\bx_r^t)) = \bbE\left\| \bg_{r}(\bx_r^t) - \nabla F_r(\bx_r^t) \right\|^2$. In (d) we used the variance bound \eqref{reduced_variance}. In (e) we used $|\calS| \geq (1-(\eps+\eps'))K\geq\frac{3K}{4}$, where the last inequality uses $(\eps+\eps')\leq\frac{1}{4}$.

\paragraph{Bounding the third term on the RHS of \eqref{convex_local-5}.}
\begin{align}
&\bbE\left\| \frac{1}{|\calS|}\sum_{r\in\calS}\nabla F_r^{t_i,t_{i+1}} - \frac{1}{K}\sum_{s\in\calK_{t_i}}\nabla F_s^{t_i,t_{i+1}} \right\|^2 = \bbE\left\| \sum_{t=t_i}^{t_{i+1}-1} \Big( \frac{1}{|\calS|}\sum_{r\in\calS}\nabla F_r(\bx_r^{t}) - \frac{1}{K}\sum_{s\in\calK_{t_i}}\nabla F_s(\bx_s^{t})\Big)\right\|^2 \nonumber \\
&\hspace{4cm}\stackrel{\text{(a)}}{\leq} H \sum_{t=t_i}^{t_{i+1}-1} \bbE\left\| \frac{1}{|\calS|}\sum_{r\in\calS}\nabla F_r(\bx_r^{t}) - \frac{1}{K}\sum_{s\in\calK_{t_i}}\nabla F_s(\bx_s^{t})\right\|^2 \label{convex_local-7}
\end{align}
In (a), first we used the Jensen's inequality and then substituted $|t_{i+1}-t_i|\leq H$. In order to bound \eqref{convex_local-7}, it suffices to bound $\bbE\left\|\frac{1}{|\calS|}\sum_{r\in\calS}\nabla F_r(\bx_r^{t}) - \frac{1}{K}\sum_{s\in\calK_{t_i}}\nabla F_s(\bx_s^{t})\right\|^2$ for every $t\in[t_i:t_{i+1}-1]$. We bound this in the following. Take an arbitrary $t\in[t_i:t_{i+1}-1]$.
\begin{align}
&\bbE\left\|\frac{1}{|\calS|}\sum_{r\in\calS}\nabla F_r(\bx_r^{t}) - \frac{1}{K}\sum_{s\in\calK_{t_i}}\nabla F_s(\bx_s^{t})\right\|^2 \leq 3\bbE\left\|\frac{1}{|\calS|}\sum_{r\in\calS}\left(\nabla F_r(\bx_r^{t}) - \nabla F(\bx_r^{t})\right)\right\|^2 \nonumber \\
&\hspace{1cm}+ 3\bbE\left\|\frac{1}{|\calS|}\sum_{r\in\calS}\nabla F(\bx_r^{t}) - \frac{1}{K}\sum_{s\in\calK_{t_i}}\nabla F(\bx_s^{t})\right\|^2 + 3\bbE\left\|\frac{1}{K}\sum_{s\in\calK_{t_i}}\(\nabla F(\bx_s^{t}) - \nabla F_s(\bx_s^{t})\) \right\|^2 \nonumber \\
&\leq \frac{3}{|\calS|}\sum_{r\in\calS}\bbE\left\|\nabla F_r(\bx_r^{t}) - \nabla F(\bx_r^{t})\right\|^2 + \frac{3}{K}\sum_{s\in\calK_{t_i}}\bbE\left\|\nabla F(\bx_s^{t}) - \nabla F_r(\bx_r^{t})\right\|^2 \nonumber \\
&\hspace{1cm} + 3\bbE\left\|\frac{1}{|\calS|}\sum_{r\in\calS}\left(\nabla F(\bx_r^{t}) - \nabla F(\bx^{t})\right) - \frac{1}{K}\sum_{s\in\calK_{t_i}}\left(\nabla F(\bx_s^{t}) - \nabla F(\bx^{t})\right)\right\|^2 \nonumber \\
&\leq 3\kappa^2 + 3\kappa^2 + 6\bbE\left\|\frac{1}{|\calS|}\sum_{r\in\calS}\nabla F(\bx_r^{t}) - \nabla F(\bx^{t})\right\|^2 + 6\bbE\left\|\frac{1}{K}\sum_{s\in\calK_{t_i}}\left(\nabla F(\bx_s^{t}) - \nabla F(\bx^{t})\right)\right\|^2 \nonumber \\
&\leq 6\kappa^2 + \frac{6}{|\calS|}\sum_{r\in\calS}\bbE\left\|\nabla F(\bx_r^{t}) - \nabla F(\bx^{t})\right\|^2 + \frac{6}{K}\sum_{s\in\calK_{t_i}}\bbE\left\|\nabla F(\bx_s^{t}) - \nabla F(\bx^{t})\right\|^2 \nonumber \\
&\leq 6\kappa^2 + \frac{6}{|\calS|}\sum_{r\in\calS}L^2\bbE\left\|\bx_r^{t} - \bx^{t}\right\|^2 + \frac{6}{K}\sum_{s\in\calK_{t_i}}L^2\bbE\left\|\bx_s^{t} - \bx^{t}\right\|^2 \nonumber \\
&= 6\kappa^2 + \frac{6L^2}{|\calS|}\sum_{r\in\calS}\bbE\Big\|\bx_r^{t} - \frac{1}{K}\sum_{s\in\calK_{t_i}}\bx_s^{t}\Big\|^2 + \frac{6L^2}{K}\sum_{r\in\calK_{t_i}}\bbE\Big\|\bx_r^{t} - \frac{1}{K}\sum_{s\in\calK_{t_i}}\bx_s^{t}\Big\|^2 \nonumber \\
&\leq 6\kappa^2 + \frac{6L^2}{|\calS|}\sum_{r\in\calS}\frac{1}{K}\sum_{s\in\calK_{t_i}}\bbE\left\|\bx_r^{t} - \bx_s^{t}\right\|^2 + \frac{6L^2}{K}\sum_{r\in\calK_{t_i}}\frac{1}{K}\sum_{s\in\calK_{t_i}}\bbE\left\|\bx_r^{t} - \bx_s^{t}\right\|^2 \nonumber 
\end{align}
Substituting this back in \eqref{convex_local-7} gives:
\begin{align}
&\bbE\left\| \frac{1}{|\calS|}\sum_{r\in\calS}\nabla F_r^{t_i,t_{i+1}} - \frac{1}{K}\sum_{s\in\calK_{t_i}}\nabla F_s^{t_i,t_{i+1}} \right\|^2 
\leq H \sum_{t=t_i}^{t_{i+1}-1} 6\kappa^2  \nonumber \\
&\hspace{1cm} + H \sum_{t=t_i}^{t_{i+1}-1} \(\frac{6L^2}{|\calS|}\sum_{r\in\calS}\frac{1}{K}\sum_{s\in\calK_{t_i}}\bbE\left\|\bx_r^{t} - \bx_s^{t}\right\|^2 + \frac{6L^2}{K}\sum_{r\in\calK_{t_i}}\frac{1}{K}\sum_{s\in\calK_{t_i}}\bbE\left\|\bx_r^{t} - \bx_s^{t}\right\|^2\) \nonumber \\ 
&\quad\stackrel{\text{(a)}}{\leq} 6H^2\kappa^2 + 6HL^2\(7H^3\eta^2\(\frac{\sigma^2}{b}+3\kappa^2\)\) + 6HL^2\(7H^3\eta^2\(\frac{\sigma^2}{b}+3\kappa^2\)\) \nonumber \\
&\quad= 6H^2\kappa^2 + 84L^2H^4\eta^2\(\frac{\sigma^2}{b}+3\kappa^2\) \nonumber \\
&\quad\stackrel{\text{(b)}}{\leq} 10H^2\kappa^2 + \frac{21H^2\sigma^2}{16b}. \label{convex_local-8}
\end{align}
In (a) we used $t_{i+1}-t_i\leq H$ and the bound $\sum_{t=t_i}^{t_{i+1}-1} \bbE\left\|\bx_r^{t} - \bx_s^{t}\right\|^2 \leq 7H^3\eta^2\(\frac{\sigma^2}{b}+3\kappa^2\)$, 
which holds when $\eta\leq\frac{1}{8HL}$; we have already shown this in \eqref{bounded_local-global-params-bound-app} in \Lemmaref{bounded_local-global-params-app}. In (b) we used $\eta\leq\frac{1}{8HL}$.

Substituting the bounds from \eqref{convex_local-6}, \eqref{convex_local-65}, \eqref{convex_local-8} into \eqref{convex_local-5} gives
\begin{align*}
\bbE\left\| \btg_\accu^{t_i,t_{i+1}} - \frac{1}{K}\sum_{r\in\calK_{t_i}}\nabla F_r^{t_i,t_{i+1}}\right\|^2 &\leq 3\varUpsilon^2 + \frac{4H^2\sigma^2}{bK} + 3\(10H^2\kappa^2 + \frac{21H^2\sigma^2}{16b}\) \\
&\leq 3\varUpsilon^2 + \frac{4H^2\sigma^2}{bK} + 30H^2\kappa^2 + \frac{4H^2\sigma^2}{b}\\
&= 3\varUpsilon^2 + \frac{8H^2\sigma^2}{b} + 30H^2\kappa^2,
\end{align*}
where $\varUpsilon^2=\calO\left(\sigma_0^2(\eps+\eps')\right)$ and $\sigma_0^2 = \frac{25H^2\sigma^2}{b\eps'}\left(1 + \frac{4d}{3K}\right) + 28H^2\kappa^2$.

This completes the proof of \Claimref{convex_third-term}.
\end{proof}

\begin{fact}\label{fact:Smooth_gradient-bound}
Let $F:\R^d\to\R$ be an $L$-smooth function with a global minimizer $\bx^*$. Then, for every $\bx\in\R^d$, we have
\begin{align*}
\| \nabla F(\bx) \|^2 \leq 2L( F(\bx) - F(\bx^*) ).
\end{align*}
\end{fact}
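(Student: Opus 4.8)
The plan is to prove this via the standard "descent lemma" argument, applying the $L$-smoothness inequality at a cleverly chosen point. Recall that $L$-smoothness of $F$ gives, for all $\bx,\by\in\R^d$,
\[
F(\by) \leq F(\bx) + \langle\nabla F(\bx), \by-\bx\rangle + \frac{L}{2}\|\bx-\by\|^2.
\]
The idea is to apply this with $\by$ equal to one gradient-descent step from $\bx$, namely $\by = \bx - \frac{1}{L}\nabla F(\bx)$.

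With this choice, the inner-product term becomes $\langle\nabla F(\bx), \by-\bx\rangle = -\frac{1}{L}\|\nabla F(\bx)\|^2$, and the quadratic term becomes $\frac{L}{2}\cdot\frac{1}{L^2}\|\nabla F(\bx)\|^2 = \frac{1}{2L}\|\nabla F(\bx)\|^2$. Substituting, we obtain
\[
F(\by) \leq F(\bx) - \frac{1}{L}\|\nabla F(\bx)\|^2 + \frac{1}{2L}\|\nabla F(\bx)\|^2 = F(\bx) - \frac{1}{2L}\|\nabla F(\bx)\|^2.
\]

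Finally, since $\bx^*$ is a global minimizer of $F$, we have $F(\bx^*)\leq F(\by)$, so combining with the previous display gives $F(\bx^*) \leq F(\bx) - \frac{1}{2L}\|\nabla F(\bx)\|^2$. Rearranging yields $\|\nabla F(\bx)\|^2 \leq 2L\(F(\bx) - F(\bx^*)\)$, as claimed. There is no real obstacle here — this is a textbook fact; the only "choice" that matters is picking the step size $\frac{1}{L}$ in the auxiliary point $\by$, which is exactly the step that maximizes the guaranteed decrease predicted by the smoothness upper bound.
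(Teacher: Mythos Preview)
Your proof is correct and takes essentially the same approach as the paper: both apply the $L$-smoothness upper bound and minimize the right-hand side over $\by$, then use $F(\bx^*)\leq F(\by)$. The only cosmetic difference is that the paper carries out the minimization explicitly (writing $\by=\bx+t\bv$ with $\|\bv\|=1$, optimizing over $t$ and then over $\bv$ via Cauchy--Schwarz), whereas you directly plug in the minimizer $\by=\bx-\tfrac{1}{L}\nabla F(\bx)$; your version is the more streamlined of the two.
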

\begin{proof}
By definition of $L$-smoothness, we have $F(\by)  \leq F(\bx) + \langle \nabla F(\bx), \by-\bx \rangle + \frac{L}{2}\| \by-\bx \|^2$ holds for every $\bx,\by\in\R^d$. Fix an arbitrary $\bx\in\R^d$ and take infimum over $\by$ on both sides:
\begin{align*}
\inf_{\by} F(\by) & \leq \inf_{\by} \left( F(\bx) + \langle \nabla F(\bx), \by-\bx \rangle + \frac{L}{2}\| \by-\bx \|^2 \right) \\
	& \stackrel{\text{(a)}}{=}  \inf_{\bv: \|\bv\| = 1} \inf_t \left( F(\bx) + t \langle \nabla F(\bx), \bv \rangle + \frac{L t^2}{2} \right) \\
	& \stackrel{\text{(b)}}{=}  \inf_{\bv: \|\bv\| = 1} \left( F(\bx) - \frac{1}{2L} \langle \nabla F(\bx),\bv \rangle^2  \right) \\
	& \stackrel{\text{(c)}}{=}  \left( F(\bx) - \frac{1}{2L} \|\nabla F(\bx)\|^2  \right)
	\end{align*}  
The value of $t$ that minimizes the RHS of (a) is $t=-\frac{1}{L}\langle \nabla F(\bx), \bv \rangle$, this implies (b);
(c) follows from the Cauchy-Schwarz inequality: $\langle \bu, \bv \rangle \leq \| \bu \| \| \bv \|$, where equality is achieved whenever $\bu=\bv$.
Now, substituting $\inf \limits_{\by} F(\by) = F(\bx^*)$ yields the result.
\end{proof}

\section{Omitted Details from \Sectionref{convergence_full-batch-GD}}\label{app:convergence_full-batch-GD}

In this section, we prove \Theoremref{full-batch-GD}.
This can be proved along the lines of the proof of \Theoremref{LocalSGD_convergence}. 
Here we only write what changes in those proofs.
We prove the strongly-convex and non-convex parts of \Theoremref{full-batch-GD} in \Appendixref{convex-GD_convergence} and \Appendixref{nonconvex-GD_convergence}, respectively.

\subsection{Strongly-convex}\label{app:convex-GD_convergence}
Let $\calK_{t}\subseteq[R]$ denote the subset of clients of size $|\calK_{t}|=K$ that are active at the $t$'th iteration.
For any $t\in[t_i:t_{i+1}-1]$, let $\bx^{t} = \frac{1}{K}\sum_{k\in\calK_{t_i}}\bx_k^t$ denote the average of the local parameters of clients in the sampling set $\calK_{t_i}$.

Following the proof of the strongly-convex part of \Theoremref{LocalSGD_convergence} given in \Sectionref{convex_convergence} until \eqref{convex_local-2} gives
\begin{align}
\left\|\bx^{t_{i+1}} - \bx^*\right\|^2 
&\leq \left(1+\frac{\mu\eta}{2}\right)\left\|\bx^{t_{i+1}-1} - \eta\nabla F(\bx^{t_{i+1}-1}) - \bx^*\right\|^2 \nonumber \\
&\hspace{1cm} + 2\eta\left(\eta+\frac{2}{\mu}\right)\left\|\frac{1}{K}\sum_{r\in\calK_{t_i}}\left(\nabla F(\bx^{t_{i+1}-1}) - \nabla F_r(\bx_r^{t_{i+1}-1})\right)\right\|^2 \nonumber \\
&\hspace{2cm} + 2\eta\left(\eta+\frac{2}{\mu}\right)\left\|\btF_\accu^{t_i,t_{i+1}} - \frac{1}{K}\sum_{r\in\calK_{t_i}}\sum_{t=t_i}^{t_{i+1}-1}\nabla F_r(\bx_r^{t})\right\|^2 \label{convex-GD_local-2}
\end{align}
We have already bounded the first term in \Claimref{convex_first-term} (on page~\pageref{claim:convex_first-term}) by 
\begin{align}
\left\|\bx^{t_{i+1}} - \eta\nabla F(\bx^{t_{i+1}-1}) - \bx^*\right\|^2 \leq (1-\eta\mu)\left\|\bx^{t_{i+1}-1} - \bx^*\right\|^2. \label{convex-GD_local-3}
\end{align}
In order to bound the second term, we follow the proof of \Claimref{convex_second-term} 
exactly until \eqref{convex_second-term-interim1}, and then to bound $\left\|\bx_r^{t_{i+1}-1} - \bx_s^{t_{i+1}-1}\right\|^2$ for every $r,s\in\calK_{t_i}$, 
we use the bound from \eqref{bounded_local-global-params_GD-bound} in \Lemmaref{bounded_local-global-params_GD} and use $\eta\leq\frac{1}{5HL}$, which gives
\begin{align}
\left\|\frac{1}{K}\sum_{r\in\calK_{t_i}}\left(\nabla F_r(\bx^{t_{i+1}-1}) - \nabla F_r(\bx_r^{t_{i+1}-1})\right)\right\|^2 \leq 3H\kappa^2.  \label{convex-GD_local-4}
\end{align}
To bound the third term in the RHS of \eqref{convex-GD_local-2}, we can simplify the proof of \Claimref{convex_third-term}: Firstly, note that with full-batch gradients, the variance $\sigma^2$ becomes zero; secondly, as shown in \Theoremref{gradient-estimator_GD}, the robust estimation of accumulated gradients holds with probability 1. Following the proof of \Claimref{convex_third-term} with these changes and using $\eta\leq\frac{1}{5HL}$, we get 
\begin{align}
\left\|\btF_\accu^{t_i,t_{i+1}} - \frac{1}{K}\sum_{r\in\calK_{t_i}}\sum_{t=t_i}^{t_{i+1}-1}\nabla F_r(\bx_r^{t})\right\|^2  \leq 2\varUpsilon_{\GD}^2+20H^2\kappa^2,  \label{convex-GD_local-5}
\end{align}
where $\varUpsilon_{\GD}=\calO\(H\kappa\sqrt{\eps}\)$.
Substituting all these bounds from \eqref{convex-GD_local-3}-\eqref{convex-GD_local-5} into \eqref{convex-GD_local-2} and simplifying further using $\(1+\frac{\mu\eta}{2}\)\(1-\mu\eta\)\leq\(1-\frac{\mu\eta}{2}\)$ and $\left(\eta+\frac{2}{\mu}\right)\leq\frac{3}{\mu}$ gives
\begin{align}
\left\|\bx^{t_{i+1}} - \bx^*\right\|^2 &\leq \(1-\frac{\mu\eta}{2}\)\left\|\bx^{t_{i+1}-1} - \bx^*\right\|^2 
+\frac{6\eta}{\mu}\(2\varUpsilon_{\GD}^2+23H^2\kappa^2\) \label{convex-GD_local-6}
\end{align}
Note that \eqref{convex-GD_local-6} gives a recurrence at the synchronization indices. 
Now we give a recurrence at non-synchronization indices.
Take an arbitrary $t\in[T]$ and let $t_i\in\I_T$ be such that $t\in[t_i:t_{i+1}-1]$; when $H\geq2$, such $t$'s exist. 
Following the steps that we used to arrive at \eqref{convex_local-16}, we get the following (note that the last term on the RHS of \eqref{convex_local-16} is zero, as $\bg_r(\bx_r^t)=\nabla F_r(\bx_r^t)$ holds for every $r\in[R]$ and $t\in[T]$; this will also save us the factor of $2$ in the previous term as we don't have to use the Jensen's inequality to get to \eqref{convex_local-16}):
\begin{align}
\left\|\bx^{t+1} - \bx^*\right\|^2 &\leq \(1+\frac{\mu\eta}{2}\) \left\|\bx^{t} - \bx^* - \eta\nabla F(\bx^t)\right\|^2 + \eta \(\eta+\frac{2}{\mu}\) \left\| \frac{1}{K}\sum_{r\in\calK_{t}} \(\nabla F(\bx^t)-\nabla F_r(\bx_r^t)\) \right\|^2 \label{convex-GD_local-7}
\end{align}
Substituting the bounds from \eqref{convex-GD_local-3} and \eqref{convex-GD_local-4} into \eqref{convex-GD_local-7} and simplifying the coefficients as above, we get
\begin{align}
\left\|\bx^{t+1} - \bx^*\right\|^2 &\leq \(1-\frac{\mu\eta}{2}\) \left\|\bx^{t} - \bx^*\right\|^2 + \frac{3\eta}{\mu} (3H\kappa^2) \label{convex-GD_local-8}
\end{align}
Now we have a recurrence at the synchronization indices given in \eqref{convex-GD_local-6} and at non-synchronization indices given in \eqref{convex-GD_local-8}.
Let $\alpha=\left(1-\frac{\mu\eta}{2}\right)$, $\beta_1=\(2\varUpsilon_{\GD}^2+23H^2\kappa^2\)$, and $\beta_2=\(\frac{3}{2}H\kappa^2\)$.
Following the same steps that we used to arrive at \eqref{convex_local-18} gives:
\begin{align}
\left\|\bx^{T} - \bx^*\right\|^2 &\leq \alpha^T\left\|\bx^{0} - \bx^*\right\|^2 + \frac{6\eta}{\mu}\(\frac{1}{1-\alpha}\beta_2 + \frac{1}{1-\alpha^H}\beta_1 \) \label{convex-GD_local-9}
\end{align}
Since $\alpha=\(1-\frac{\mu\eta}{2}\)$, we have $\alpha^H=\(1-\frac{\mu\eta}{2}\)^H \stackrel{\text{(a)}}{\leq} \exp(-\frac{\mu\eta H}{2}) \stackrel{\text{(b)}}{\leq} 1- \frac{\mu\eta H}{2} + \(\frac{\mu\eta H}{2}\)^2 \stackrel{\text{(c)}}{\leq} 1- \frac{\mu\eta H}{2} + \frac{1}{10}\frac{\mu\eta H}{2} = 1- \frac{9}{10}\frac{\mu\eta H}{2}$. 
In (a) we used the inequality $(1-\frac{1}{x})^x\leq\frac{1}{e}$ which holds for any $x>0$; 
in (b) we used $\exp(-x)\leq 1-x+x^2$ which holds for any $x\geq0$;
in (c) we used $\eta\leq\frac{1}{5HL}$ and $\mu\leq L$, which imply $\frac{\mu\eta H}{2}\leq\frac{1}{10}$.
Substituting these in \eqref{convex-GD_local-9} gives
\begin{align}
\left\|\bx^{T} - \bx^*\right\|^2 &\leq \(1-\frac{\mu\eta}{2}\)^T\left\|\bx^{0} - \bx^*\right\|^2 + \frac{6\eta}{\mu}\(\frac{2}{\mu\eta}\beta_2 + \frac{20}{9\mu\eta H}\beta_1 \) \notag \\
&\leq \(1-\frac{\mu\eta}{2}\)^T\left\|\bx^{0} - \bx^*\right\|^2 + \frac{6\times 20}{9\mu^2}\(\frac{9}{10}\beta_2 + \frac{1}{H}\beta_1 \) \notag \\
&\leq \(1-\frac{\mu\eta}{2}\)^T\left\|\bx^{0} - \bx^*\right\|^2 + \frac{14}{\mu^2}\(\frac{2\varUpsilon_{\GD}^2}{H} + 25H\kappa^2\), \label{convex-GD_local-10}
\end{align} 
where $\varUpsilon_{\GD}=\calO\(H\kappa\sqrt{\eps}\)$.
Substituting the value of $\eta=\frac{1}{5HL}$ yields the convergence rate \eqref{convex-GD_convergence_rate} in the strongly-convex part of \Theoremref{full-batch-GD}. Note that \eqref{convex-GD_local-10} holds with probability 1.

\subsection{Non-convex}\label{app:nonconvex-GD_convergence}
Following the proof of the non-convex part of \Theoremref{LocalSGD_convergence} given in \Sectionref{nonconvex_convergence} until \eqref{nonconvex_local-4} and using $\eta\leq\frac{1}{5HL}$ gives:
\begin{align}
F(\bx^{t_{i+1}}) &\leq F(\bx^{t_{i+1}-1}) - \frac{\eta}{2}\left\|\nabla F(\bx^{t_{i+1}-1}) \right\|^2 + \frac{6\eta}{5}\|C\|^2, \label{nonconvex-GD_local-1}
\end{align}
where $C=\frac{1}{K}\sum_{r\in\calK_{t_i}}\left(\nabla F(\bx^{t_{i+1}-1}) - \nabla F_r(\bx_r^{t_{i+1}-1})\right) - \left(\btF_\accu^{t_i,t_{i+1}} - \frac{1}{K}\sum_{r\in\calK_{t_i}}\sum_{t=t_i}^{t_{i+1}-1}\nabla F_r(\bx_r^{t})\right)$.

Using the bounds from \eqref{convex-GD_local-4} and \eqref{convex-GD_local-5}, together with the Jensen's inequality, we can bound $\|C\|^2$ as follows:
\begin{align}\label{nonconvex-GD_local-3}
\|C\|^2 &\leq 2(3H\kappa^2) + 2(2\varUpsilon_{\GD}^2+20H^2\kappa^2) \leq 2(2\varUpsilon_{\GD}^2+23H^2\kappa^2)
\end{align}
Substituting the bound from \eqref{nonconvex-GD_local-3} into \eqref{nonconvex-GD_local-1} gives:
\begin{align}
F(\bx^{t_{i+1}}) &\leq F(\bx^{t_{i+1}-1}) - \frac{\eta}{2}\left\|\nabla F(\bx^{t_{i+1}-1}) \right\|^2 + \frac{12\eta}{5}\(2\varUpsilon_{\GD}^2+23H^2\kappa^2\), \label{nonconvex-GD_local-4}
\end{align}
where $\varUpsilon_{\GD}=\calO\(H\kappa\sqrt{\eps}\)$.

Note that above recurrence in \eqref{nonconvex-GD_local-4} holds only at the synchronization indices.
Now we give a recurrence at non-synchronization indices.

We have done a similar calculations in the non-convex part of \Theoremref{LocalSGD_convergence} in \Sectionref{nonconvex_convergence}.

Take an arbitrary $t\in[T]$ and let $t_i\in\I_T$ be such that $t\in[t_i:t_{i+1}-1]$; when $H\geq2$, such $t$'s exist. 
Following the same steps until \eqref{nonconvex_local-8} and using $\eta\leq\frac{1}{5HL}$ gives:
\begin{align}
F(\bx^{t+1}) &\leq F(\bx^{t}) - \frac{\eta}{2}\left\|\nabla F(\bx^{t}) \right\|^2 + \frac{6\eta}{5}\|D\|^2, \label{nonconvex_local-5}
\end{align}
where $D= \frac{1}{K}\sum_{r\in\calK_{t_i}}\left(\nabla F(\bx^t)-\nabla F_r(\bx_r^t)\right)$.

Using the bound from \eqref{convex-GD_local-4}, we have $\|D\|^2 \leq 3H\kappa^2$. Substituting this in \eqref{nonconvex_local-5} gives:
\begin{align}
F(\bx^{t+1}) &\leq F(\bx^{t}) - \frac{\eta}{2}\left\|\nabla F(\bx^{t}) \right\|^2 + \frac{6\eta}{5}(3H\kappa^2) \label{nonconvex-GD_local-6}
\end{align}
Now we have a recurrence at the synchronization indices given in \eqref{nonconvex-GD_local-4} and at non-synchronization indices given in \eqref{nonconvex-GD_local-6}.
Adding \eqref{nonconvex-GD_local-4} and \eqref{nonconvex-GD_local-6} from $t=0$ to $T$ (use \eqref{nonconvex-GD_local-4} for the synchronization indices and \eqref{nonconvex-GD_local-6} for the rest of the indices) gives:
\begin{align}
\sum_{t=0}^{T}F(\bx^{t+1}) &\leq \sum_{t=0}^{T}F(\bx^{t}) - \frac{\eta}{2}\sum_{t=0}^{T}\left\|\nabla F(\bx^{t}) \right\|^2 + \frac{12\eta}{5}\left[ \frac{T}{H}\(2\varUpsilon_{\GD}^2+23H^2\kappa^2\) + \(T-\frac{T}{H}\)\(\frac{3}{2}H\kappa^2\)\right] \label{nonconvex-GD_local-7}
\end{align}
After rearranging and simplifying the last constant terms, we get:
\begin{align}
\frac{1}{T}\sum_{t=0}^{T}\left\|\nabla F(\bx^{t}) \right\|^2 &\leq \frac{2}{\eta T}\left[F(\bx^0) - F(\bx^{T+1}) \right] + \frac{24}{5}\( \frac{2\varUpsilon_{\GD}^2}{H}+25H\kappa^2 \) \label{nonconvex-GD_local-8}
\end{align}
Note that the last term in \eqref{nonconvex-GD_local-8} is a constant. 
So, it would be best to take the step-size $\eta$ to be as large as possible such that it satisfies $\eta\leq\frac{1}{5HL}$. We take $\eta=\frac{1}{5HL}$. Substituting this in \eqref{nonconvex-GD_local-8} and using $F(\bx^{T+1}) \geq F(\bx^*)$ gives
\begin{align}
\frac{1}{T}\sum_{t=0}^{T}\left\|\nabla F(\bx^{t}) \right\|^2 &\leq \frac{10HL}{T}\left[F(\bx^0) - F(\bx^{*}) \right] + \frac{24}{5}\( \frac{2\varUpsilon_{\GD}^2}{H}+25H\kappa^2 \), \label{nonconvex-GD_local-9}
\end{align}
where $\varUpsilon_{\GD}=\calO\(H\kappa\sqrt{\eps}\)$. 
This yields the convergence rate \eqref{nonconvex-GD_convergence_rate} in the non-convex part of \Theoremref{full-batch-GD}. Note that \eqref{nonconvex-GD_local-9} holds with probability $1$.

This concludes the proof of \Theoremref{full-batch-GD}.

}

\end{document}